 \newcommand{\y}{\mathbf{y}}
 \newcommand{\x}{\mathbf{x}}
  \newcommand{\0}{\mathbf{0}}
\newcommand{\cc}{\mathbf{c}}
  \newcommand{\w}{\mathbf{w}}
\newcommand{\diff}{\mathrm{d}}
 \newcommand{\bE}{\mathbb{E}}
\newcommand{\bN}{\mathbb{N}}
\newcommand{\bP}{\mathbb{P}}
\newcommand{\relu}{\text{ReLU}}
\newcommand{\DM}{\text{DM}}
\newcommand{\MC}{\text{MC}}
\newcommand{\CI}{\text{CI}}
\newcommand{\AMC}{\text{AMC}}
\newcommand{\QMC}{\text{QMC}}
\DeclareMathOperator{\Cov}{Cov}
\DeclareMathOperator{\Corr}{Corr}
\DeclareMathOperator{\Var}{Var}
\newcommand{\R}{\mathbb{R}}
\newcommand{\N}{\mathcal{N}}
\newcommand{\E}{\mathbb{E}}
\newcommand{\FI}{\mathcal{I}}  
\newcommand{\DDIMIdeal}{\mathrm{DDIM}_{\mathrm{I}}}
 \newtheorem{theorem}{Theorem}
 \newtheorem{lemma}{Lemma}
 \newtheorem{remark}{Remark}
 \newtheorem{corollary}{Corollary}
 \newtheorem{definition}{Definition}
 \newtheorem{prop}{Proposition}
\begin{document}
\tolerance=1000
\title{Antithetic Noise in Diffusion Models}
\author[1]{Jing Jia}
\author[2]{Sifan Liu \protect\footnotemark[1]}
\author[3]{Bowen Song}
\author[4]{Wei Yuan}   
\author[3]{Liyue Shen \protect\footnotemark[1]}
\author[4]{Guanyang Wang \protect\footnotemark[1]}

\affil[1]{Department of Computer Science, Rutgers University\\
\texttt{jing.jia@rutgers.edu}}
\affil[2]{Department of Statistical Science, Duke University\\

\texttt{sifan.liu@duke.edu}}
\affil[3]{Department of EECS, University of Michigan\\

 \texttt{ \{bowenbw, liyues\}@umich.edu}}
\affil[4]{Department of Statistics, Rutgers University\\

 \texttt{\{wy204, guanyang.wang\}@rutgers.edu}}

\maketitle

\begingroup
  \makeatletter
    \renewcommand*{\thefootnote}{\fnsymbol{footnote}}
    \setcounter{footnote}{1}
    \footnotetext{Co‐corresponding authors}
  \makeatother
\endgroup
\begin{abstract}
We systematically study antithetic initial noise in diffusion models, discovering that pairing each noise sample with its negation consistently produces strong negative correlation. This universal phenomenon holds across datasets, model architectures, conditional and unconditional sampling, and even other generative models such as VAEs and Normalizing Flows. To explain it, we combine experiments and theory and propose a \textit{symmetry conjecture} that the learned score function is approximately affine antisymmetric (odd symmetry up to a constant shift), supported by empirical evidence.
This negative correlation leads to substantially more reliable uncertainty quantification with up to $90\%$ narrower confidence intervals.  We demonstrate these gains on tasks including estimating pixel-wise statistics and evaluating diffusion inverse solvers.  We also provide extensions with randomized quasi-Monte Carlo noise designs for uncertainty quantification, and explore additional applications of the antithetic noise design to improve image editing and generation diversity. Our framework is training-free, model-agnostic, and adds no runtime overhead. Code is available at \url{https://github.com/jjia131/Antithetic-Noise-in-Diffusion-Models-page}.

\end{abstract}

\section{Introduction}\label{sec:intro}

Diffusion models have set the state of the art in photorealistic image synthesis, high‑fidelity audio, and video generation ~\citep{sohl2015deep, ho2020denoising, song2021scorebased,kong2021diffwave}; they also power  applications such as text-to-image generation ~\citep{ldm2022high}, image editing and restoration ~\citep{meng2022sdedit}, and inverse problem solving ~\citep{song2022solving}. 

For many pretrained diffusion models, sampling relies on three elements: the  network weights, the denoising schedule, and the initial Gaussian noise. Once these are fixed, sampling is often deterministic: the sampler transforms the initial noise into an image via successive denoising passes.

Much of the literature improves the first two ingredients and clusters into two strands: (i)  architectural and training developments, which improve sample quality and scalability through backbone or objective redesign (e.g., EDM ~\citep{karras2022elucidating}, Latent Diffusion Models ~\citep{ldm2022high}, DiT ~\citep{peebles2023scalable}),  (ii) accelerated sampling, which reduces the number of denoising steps while retaining high-quality generation (e.g., DDIM ~\citep{songj2021denoising}, Consistency Models ~\citep{song2023consistency}, DPM‑Solver++ ~\citep{lu2023dpmsolver}, and progressive distillation ~\citep{salimans2022progressive}).

However, the third ingredient—the \emph{initial Gaussian noise}—has received comparatively little attention. Prior work has optimized initial noise for generation quality, editing,  controllability, or inverse problem solving \citep{guo2024initno,qi2024not,zhou2024golden,ban2025the,chen2024find,eyring2024reno,song2025ccs, wang2024dmplug, chihaoui2024blind}. However, most of these efforts are task-specific. A systematic understanding of how noise itself shapes diffusion model outputs is still missing.

Our perspective is orthogonal to prior work. Our central discovery is both  \emph{simple and universal}:  pairing every Gaussian noise \(z\) with its negation \(-z\)---known as \emph{antithetic sampling} \citep{mcbook}---consistently produces samples that are \textbf{strongly negatively correlated}. 
This phenomenon holds regardless of architecture, dataset,  sampling schedule, and both conditional and unconditional sampling. It further extends to other generative models such as VAEs and Normalizing Flows.
We explain this phenomenon through both experiments and theory. This leads to a symmetry conjecture that the score function is approximately affine antisymmetric, providing a new structural insight supported by empirical evidence.

\begin{figure}[htbp!]
    \centering
    \includegraphics[width=1\linewidth]{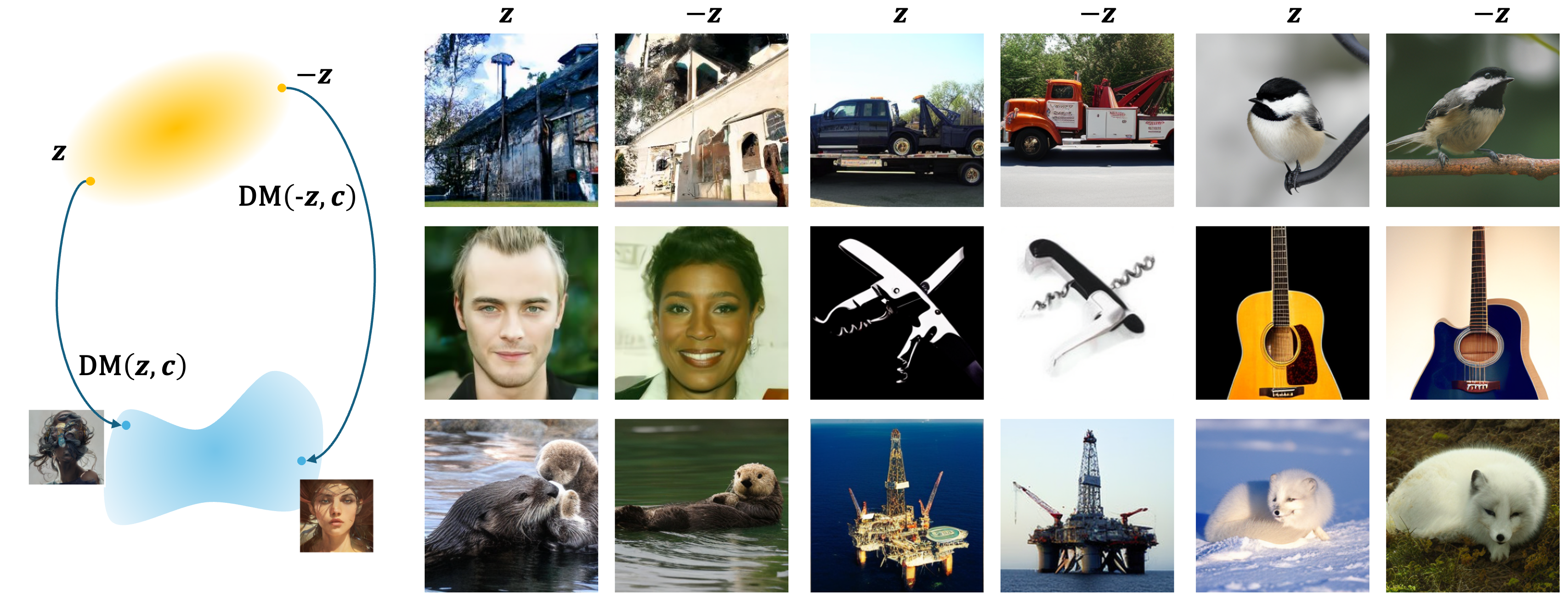}
    \caption{Use antithetic noise $-z$ and $z$ (with condition $c$) to generate visually “opposite” images.}
    \label{fig:pn-vs-rr}
\end{figure}

This universal property has direct impact on uncertainty quantification and also enables additional applications:

\textbf{(i) Sharper uncertainty quantification.}  
Antithetic pairs naturally act as control variates, enabling significant variance reduction and thus sharper uncertainty quantification. Our antithetic estimator delivers up to $90\%$ tighter confidence intervals and cuts computation cost by more than $100$ times. The efficiency gain immediately leads to huge cost savings in a variety of tasks, including bias detection in generation and diffusion inverse solver evaluation, as we demonstrate in later experiments.

\textbf{(ii) Other applications.}  Because each antithetic noise pair drives reverse-diffusion trajectories toward distant regions of the data manifold (see Figure \ref{fig:pn-vs-rr}), the paired-sampling scheme increases diversity “for free” while preserving high image quality, as confirmed by SSIM and LPIPS in our experiments. 
Moreover, algorithms that rely on intermediate sampling or approximation steps also benefit from the improved reliability provided by antithetic noise.
As an illustration, we present an image editing example in \ref{subsec: image editing} and show that our antithetic design serves as a \emph{plug-and-play} tool to improve performance at no additional cost.

Building on the antithetic pairs, we generalize the idea to apply quasi-Monte Carlo (QMC) and randomized QMC (RQMC). The resulting RQMC estimator often delivers further variance reduction. Although QMC has been widely used in computer graphics \citep{keller1995quasi, waechter2011quasi}, quantitative finance \citep{joy1996quasi,l2009quasi}, and Bayesian inference \citep{buchholz2018quasi,liu2021quasi},  this is, to our knowledge, its first application to diffusion models. 

In summary, we discover a universal property of initial noise, reveal a new symmetry in score networks, and demonstrate concrete benefits in various practical applications. This positions initial noise manipulations  as a simple, training-free tool for advancing generative modeling.

The remainder of the paper is organized as follows. 
Section~\ref{sec:setup} defines the problem and outlines our motivation. 
Section~\ref{sec:negative correlation} presents our central finding that antithetic noise pairs produce strongly negatively correlated outputs. We offer both theoretical and empirical explanations for this phenomenon, and present the \textit{symmetry conjecture} in Section~\ref{subsec:temporal corr}.
Section \ref{sec:uncertainty quantification} develops estimators and their confidence intervals via antithetic sampling, and extends the approach to QMC.
Section~\ref{sec:experiments} reports  experiments on the aforementioned applications. 
 Section~\ref{sec:conclusion} discusses our method and outlines future directions. Appendix \ref{sec:proof}, \ref{sec:additional experiments}, and \ref{sec:visualization} include proofs, additional experiments and detailed setups, and supplemental visualizations, respectively.

\section{Setup, motivation, and related works}\label{sec:setup}
\paragraph{Unconditional diffusion model:}

A diffusion model aims to generate samples from an unknown data distribution $p_0$. It first \emph{noises} data towards a standard Gaussian progressively, then learns to reverse the process so that Gaussian noise can be \emph{denoised} step-by-step back into target samples. The forward process simulates a stochastic differential equation (SDE): 
 $ \diff\x_t = \mu( \x_t,t) \diff t + \sigma_t \diff\w_t, $
 where $\{\w_t\}_{t=0}^T$ denotes the standard Brownian motion, and $\mu(\x, t), \sigma_t$ are chosen by the users. Let $p_t$ denote the distribution of $\x_t$. \citet{song2021scorebased} states that if we sample $\y_T \sim p_T$ and simulate the probability-flow ordinary differential equation (PF-ODE) backward from time $T$ to $0$ as
\begin{align}\label{eqn:pf-ode}
\diff\y_t \;=\; \left(-\mu(\y_t, t) - \frac{1}{2}\sigma_t^2\nabla \log p(\y_t, t)\right) \diff t,
\end{align}
then for every time $t$ the marginal distribution of $\y_t$ coincides with $p_t$. Thus, in an idealized world, one can perfectly sample from $p_0$ by simulating the PF-ODE \eqref{eqn:pf-ode}. 

In practice, the score function $\nabla \log p_t (\x,  t)$ is unavailable, and a neural network $\epsilon_{\theta}^{(t)}(\x)$ is trained to approximate it, where $\theta$ denotes its weights. Therefore, one can generate new samples from $p_0$ by first sampling a Gaussian noise and simulating the PF-ODE \eqref{eqn:pf-ode} through a numerical integrator from $T$ to $0$. For example, DDIM \citep{songj2021denoising} has the (discretized) forward process as $\x_{k}\mid \x_{k-1}\sim \bN(\sqrt{\alpha_k}\x_{k-1}, (1-\alpha_k)I)$ for $k = 0,  \ldots, T-1$, and backward sampling process as 
\begin{equation}\label{eqn:DDIM}
\begin{aligned}
    & \y_T \sim \bN(0, I), ~~\y_{t-1} = \sqrt{\alpha_{t-1}} 
\left(
\frac{\y_t - \sqrt{1-\alpha_t} \, \epsilon_{\theta}^{(t)}(\y_t)}{\sqrt{\alpha_t}}
\right) 
 + \sqrt{1-\alpha_{t-1}} \, \epsilon_{\theta}^{(t)}(\y_t).
\end{aligned}
\end{equation}
Once $\theta$ is fixed, the randomness in DDIM sampling comes solely from the initial Gaussian noise.

We remark that samples from $p_0$
  can also be drawn by simulating the backward SDE with randomness at each step, as in the DDPM sampler \citep{ho2020denoising,song2021scorebased}.  Throughout the main text, we focus on deterministic samplers such as DDIM to explain our idea. In Appendix \ref{subsec: additional experiment similarity}, we present additional experiments showing that our findings also extend to stochastic samplers like DDPM.

\paragraph{Text–conditioned latent diffusion:}  
In Stable Diffusion and its successors SDXL and SDXL Turbo \citep{podell2024sdxl, sauer2024adversarial}, a pretrained VAE first compresses each image to a latent tensor $z$. A  text encoder embeds the prompt as $c \in\mathbb{R}^m$.
During training, the network $\epsilon_{\theta}^{(t)}$ receives $(z_t,c)$ and learns $\nabla_{z_t}\log p_t(z_t\mid c)$.
At generation time we draw latent noise $z_T\sim\bN(0,I)$ and run the reverse diffusion from $t=T$ to $0$, yielding a denoised latent $z_0$, which the decoder maps back to pixels.
Given a prompt $c$ and an initial Gaussian noise, the sampler  produces an image from $p_0(\cdot\mid c)$.

\paragraph{Diffusion posterior sampling:}
Diffusion model can also be used as a prior  in inverse problems.
Suppose we observe only a partial or corrupted measurement
$\mathbf{y}_\text{obs} = \mathcal{A}(\mathbf{x}) + \text{noise}$
from an unknown signal $\mathbf{x}$. Diffusion posterior sampling aims to sample from the  posterior distribution
$p(\mathbf{x}\mid \mathbf{y}_\text{obs}) \;\propto\; p(\mathbf{y}_\text{obs}\mid \mathbf{x})\,p(\mathbf{x})
$, where $p(\mathbf{x})$ is given by a pretrained diffusion model. Given initial noise $z$ and observed $\mathbf{y}_\text{obs}$, diffusion posterior samplers apply a sequence of denoising steps from $T$ to $0$, each step resembling \eqref{eqn:DDIM} but incorporating $\mathbf{y}_\text{obs}$~\citep{chung2023diffusion, song2022solving, song2024solving}.

\subsection{Motivation}\label{subsec:motivation}

Beyond examining how variations in the initial noise affect the outputs, several technical considerations motivate our focus on the antithetic noise pair $(z,-z)$\ for diffusion models. Let $\DM$ denote certain diffusion model’s mapping from an initial noise vector to a generated image sample. 

\begin{itemize}[leftmargin = *]
    \item \textbf{Preserving quality:} Since $z\sim\bN(0,I)$ implies $-z\sim\bN(0,I)$, the initial noises $z$ and $-z$ share the same marginal distribution. Consequently, $\DM(z)\overset{d}{=}\DM(-z)$,
so all per‐sample statistics remain unchanged. In other words, negating the initial noise does not degrade generation quality.

\item \textbf{Maximal separation in the noise space:}
High-dimensional Gaussians concentrate on the sphere of radius $\sqrt {\text{dimension}}$ \citep{vershynin2018high}, so any draw $z$ and its negation $-z$ lie at opposite poles of that hypersphere.  This antipodal pairing represents the maximum possible perturbation in noise space, making it a natural extreme test of the sampler’s behavior.

\item \textbf{Measuring (non-)linearity:} Recent work has examined how closely score networks approximate linear operators. Empirical studies across various diffusion settings show that these networks behave  as locally linear maps; this behavior forms the basis for new controllable sampling schemes \citep{chen2024exploring,li2024understanding,song2025ccs}. Correlation is a natural choice to measure linearity: if a  score network were exactly linear, feeding it with noises $z$ and $-z$ would yield a correlation of $-1$ between $\DM(z)$ and $\DM(-z)$. Thus, the difference between the observed correlation and $-1$ provides a direct measure of the network’s departure from linearity.

\end{itemize}

\section{Negative correlation from antithetic noise}\label{sec:negative correlation}

\subsection{Pixel‐wise correlations: antithetic vs. independent noise}\label{subec:pixel-wise correlation}

We compare the similarity between paired images generated under two sampling schemes: PN (positive vs.\ negative, $z$ vs.\ $-z$) and RR (random vs.\ random, $z_1$ vs.\ $z_2$) under three settings: (1) \emph{unconditional diffusion models}, (2) \emph{class- or prompt-based conditional diffusion models}, and (3) \emph{generative models beyond diffusion}. 

For diffusion models, we evaluate both unconditional and conditional generation using publicly available pre-trained checkpoints. Notably, we include both traditional U-Net architecture and transformer-based DiT ~\citep{peebles2023scalable}. Implementation details are described in Section \ref{subsec:pixel}.  We also test on distilled diffusion models  ~\citep{song2023consistency} using consistency distillation checkpoints.  For generative models beyond diffusion, we select two representative baselines: an unconditional VAE on MNIST and a conditional Glow model ~\citep{kingma2018glow} on CIFAR-10. The experimental details for consistency models, VAE and Glow are provided in Appendix \ref{subsec: model Training}.

To quantify similarity, we use two metrics: the standard Pearson correlation and a centralized Pearson correlation. Let $x_{i,1}$ and $x_{i,2}$ denote the flattened pixel values of the two images in the $i$-th generated pair. The standard Pearson correlation is computed directly between $x_{i,1}$ and $x_{i,2}$. To correct for dataset-level or class-level bias, we also define a centralized correlation. For  a dataset, prompt, or class with $K$ pairs generated, we compute the mean $\mu_c = \sum_{i=1}^{K}(x_{i,1}+x_{i,2})/2K$. The centralized correlation of the $i$-th pair is defined as the standard Pearson correlation between the centralized images $x_{i,1}-\mu_c$ and $x_{i,2}-\mu_c$.  For each comparison, $t$-test is conducted to assess statistical significance. In all experiments, the resulting $p$-values are negligible ($<10^{-10}$), which confirms significance; hence, they are omitted from the presentation.

Additional metrics such as the Wasserstein distance are presented in Appendix \ref{subsec: additional experiment similarity}.

\textbf{Results:} Table \ref{tab:full_correlation_results} summarizes the statistics in all classes of models.  PN pairs consistently show significantly stronger negative correlations than RR pairs, with this contrast also visually evident in their histograms (Figure~\ref{fig:pearson_histograms}). In addition, centralization strengthens the negative correlation, since it removes shared patterns. For example, in CelebA-HQ, centralization removes global facial structure, while in DiT and Glow, it removes class-specific patterns.

The same behavior appears in both DDPM models and diffusion posterior samplers, which we report in Appendix \ref{subsec: additional experiment similarity}. These results demonstrate that the negative correlation resulting from antithetic sampling is a universal phenomenon across diverse architectures and conditioning schemes.

\begin{table}[htbp]
\centering
\caption{Correlation results across different models and datasets, shown are means (SD).  Rows 1–3 are pretrained unconditional diffusion models on different datasets. Rows 4–5 are conditional diffusion models. Rows 6–8 are pretrained consistency models on different datasets. Rows 9–10 are generative models that are not diffusion-based. 
}
\label{tab:full_correlation_results}

\begin{tabular}{c|c|cc|cc}
\toprule
\multirow{2}{*}{\parbox{1.7cm}{\centering \textbf{Model}}} &
\multirow{2}{*}{\textbf{Dataset}} &
\multicolumn{2}{c|}{\textbf{Standard Correlation}} &
\multicolumn{2}{c}{\textbf{Centralized Correlation}} \\
 & & \textbf{PN} & \textbf{RR} & \textbf{PN} & \textbf{RR} \\
\midrule

\multirow{3}{*}{\parbox{1.7cm}{\centering Uncon. \\ Model}}
 & LSUN-Church & -0.62 (0.11) & 0.08 (0.17) & -0.77 (0.07) & 0.00 (0.17) \\
 & CelebA-HQ & -0.34 (0.19) & 0.25 (0.20) & -0.78 (0.06) & -0.01 (0.20) \\
 & CIFAR-10 & -0.76 (0.13) & 0.05 (0.24) & -0.86 (0.07) & 0.00 (0.23) \\
\midrule

\multirow{2}{*}{\parbox{1.7cm}{\centering SD 1.5 \\ DiT}}
 & LAION-2B(en) & -0.47 (0.05) & 0.05 (0.04) & -0.54 (0.08) & 0.00 (0.01) \\
 & ImageNet-256 & -0.07 (0.27) & 0.26 (0.18) & -0.45 (0.08) & 0.00 (0.03) \\
\midrule

\multirow{3}{*}{\parbox{1.7cm}{\centering Consistency Model}}
 & LSUN-Cat & -0.88 (0.06) & 0.02 (0.14) & -0.91 (0.05) & 0.01 (0.14) \\
 & LSUN-Bedroom & -0.78 (0.07) & 0.03 (0.15) & -0.84 (0.06) & 0.00 (0.15) \\
 & ImageNet-64 & -0.71 (0.14) & 0.02 (0.19) & -0.75 (0.12) & 0.01 (0.19) \\
\midrule

\multirow{2}{*}{\parbox{1.7cm}{\centering VAE \\ Glow}}
 & MNIST & 0.21 (0.12) & 0.42 (0.17) & -0.41 (0.12) & -0.00 (0.24) \\
 & CIFAR-10 & -0.52 (0.02) & 0.08 (0.05) & -0.57 (0.02) & -0.01 (0.02) \\
\bottomrule
\end{tabular}
\end{table}

\begin{figure}[htbp!]
    \centering
    \subfigure[CelebA-HQ]{
        \includegraphics[width=0.31\textwidth]{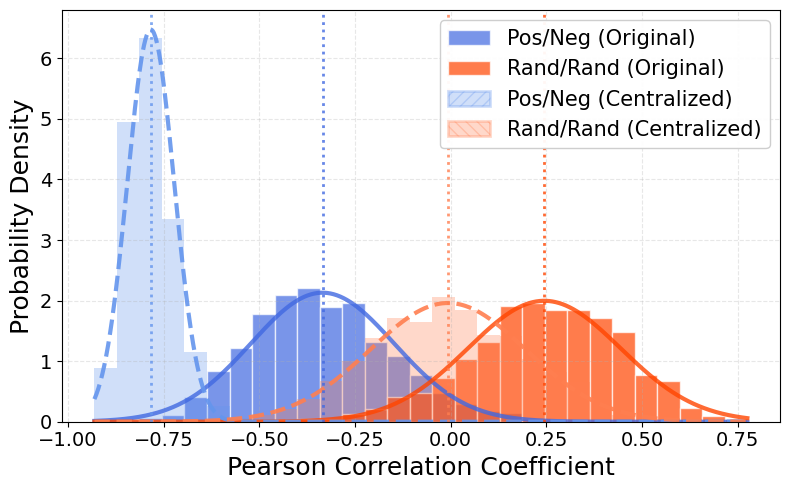}
    }
    \subfigure[DiT (class 919)]{
        \includegraphics[width=0.31\textwidth]{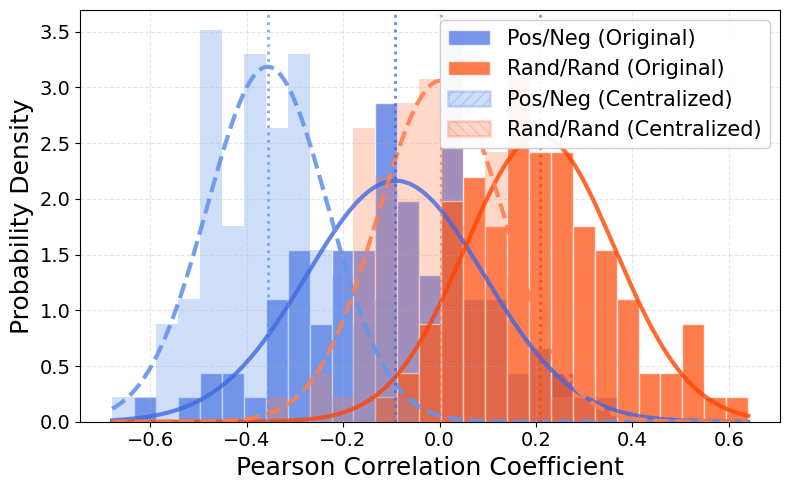}
    }
    \subfigure[Glow (class 0)]{
        \includegraphics[width=0.31\textwidth]{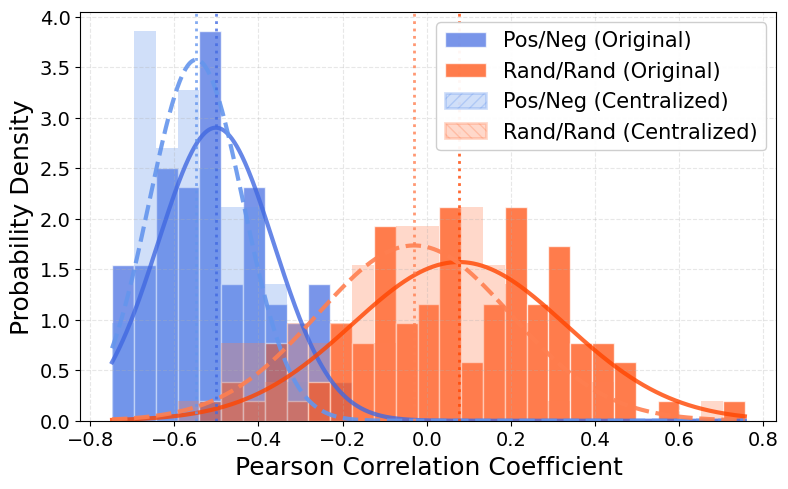}
    }
    \caption{Histograms of standard and centralized Pearson correlation coefficients for CelebA-HQ, and DiT class and Glow in single class. Dashed lines indicate the average.}
    \label{fig:pearson_histograms}
\end{figure}

\subsection{Explanatory experiments: temporal correlations  \& symmetry conjecture}\label{subsec:temporal corr}

We aim to explain the strong negative correlation between $\DM(Z)$ and $\DM(-Z)$ for $Z\sim\bN(0,I)$. Because $\DM$ performs iterative denoising through the score network $\epsilon_\theta^{(t)}$ (see \eqref{eqn:DDIM}), we visualize how these correlations evolve over diffusion time-steps in Figure \ref{fig:temporal corr}.

Throughout the transition from noises to samples, the PN correlation of $\epsilon_\theta^{(t)}$—indicated by the orange, blue, and green dashed lines in Figure \ref{fig:temporal corr}—starts at –1, stays strongly negative, and only climbs slightly in the final steps. This nearly $-1$ correlation is remarkable, as it suggests that the score network $\epsilon_\theta$ learns an approximately affine antisymmetric function. To explain, we first state the following results
which shows $-1$ correlation is equivalent to affine antisymmetric, 
with proof in Appendix \ref{subsec: proof temporal corr}.

\begin{figure}[htbp!]
  \begin{center}
    \includegraphics[width=\textwidth]{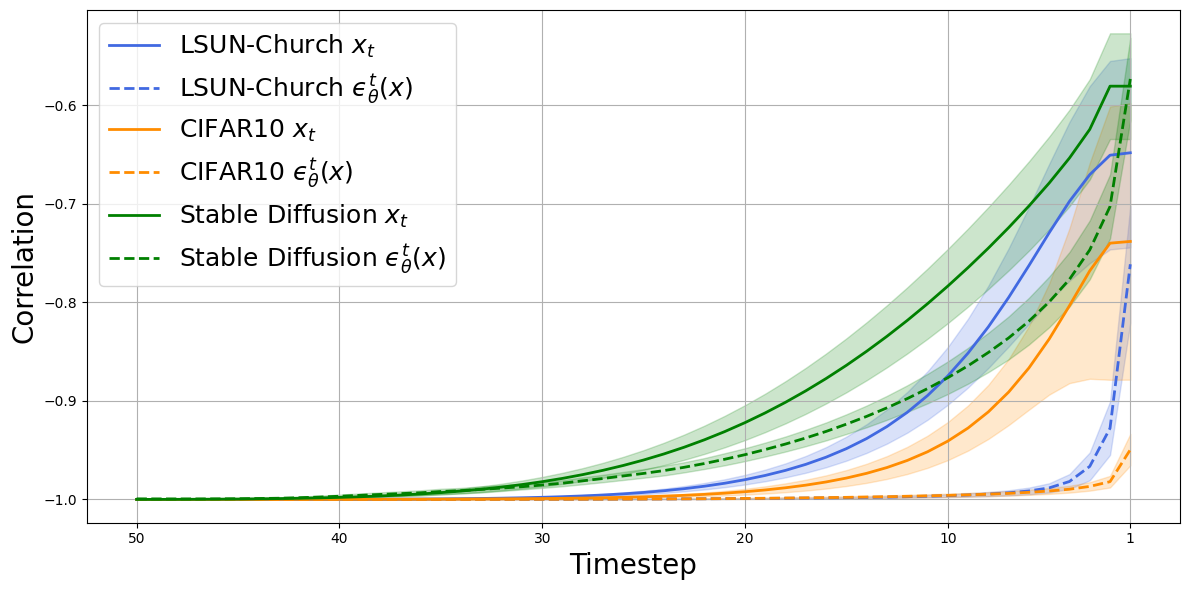}
  \end{center}
  \caption{Correlation of $x_t$ (solid) and $\epsilon_\theta^{(t)}$ (dash) between antithetic (PN) pairs. Step 50 is the initial noise and Step 0 is the generated image. Shaded bands show $\pm1$ std. dev.}
  \label{fig:temporal corr}
\end{figure}

\begin{lemma}\label{lemma:central symmetry}
    Let $Z\sim \bN(0, I_d)$, suppose a map $f: \mathbb R^d \rightarrow \mathbb R$ satisfies $\Corr(f(Z), f(-Z)) =  - 1$, then $f$ must be affine antisymmetric at $(0, c)$ for some $c$, i.e., $f(\x) + f(-\x) = 2c$ for every $\x$.
\end{lemma}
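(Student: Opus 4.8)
The plan is to exploit the equality case of the Cauchy--Schwarz inequality, which characterizes exactly when a correlation equals $-1$. Write $U \coloneqq f(Z)$ and $V \coloneqq f(-Z)$. The first observation I would use is a symmetry built into the hypothesis: since the law of $Z\sim\bN(0,I_d)$ is symmetric, $-Z \overset{d}{=} Z$, and therefore $U$ and $V$ share the same distribution. In particular they have a common mean $\mu \coloneqq \bE[U] = \bE[V]$ and a common variance $\sigma^2 \coloneqq \Var(U) = \Var(V)$, which must be strictly positive for $\Corr(U,V)$ to even be well-defined. I would record this nondegeneracy explicitly, since it is what makes the equality case meaningful.

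Next I would invoke the standard fact that $\Corr(U,V) = -1$ forces an exact affine relationship: the centered variables $U-\mu$ and $V-\mu$ attain equality in Cauchy--Schwarz, so $V = aU + b$ almost surely for some constants $a<0$ and $b$. Matching the two moments then pins down $a$ and $b$ with no computation beyond bookkeeping. Taking variances gives $\sigma^2 = a^2\sigma^2$, and since $\sigma^2>0$ this forces $a^2 = 1$; the sign constraint $a<0$ then selects $a=-1$. Taking expectations gives $\mu = a\mu + b = -\mu + b$, hence $b = 2\mu$. Substituting back yields $V = -U + 2\mu$, that is,
\begin{equation*}
f(Z) + f(-Z) = 2\mu \qquad \text{almost surely.}
\end{equation*}

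The last step, and the only genuinely delicate one, is upgrading this almost-sure identity to the pointwise statement ``for every $\x$'' claimed in the lemma. Because the Gaussian density is strictly positive on all of $\mathbb{R}^d$, the relation $f(\x) + f(-\x) = 2\mu$ holds for Lebesgue-almost-every $\x$, so the set on which it holds is dense. To conclude that it holds at every point I would appeal to continuity of $f$ (which the relevant score networks satisfy): the map $\x \mapsto f(\x) + f(-\x) - 2\mu$ is continuous and vanishes on a dense set, hence vanishes identically. This identifies the constant in the statement as $c = \mu = \bE[f(Z)]$ and completes the argument. The main obstacle is precisely this regularity issue—without a continuity assumption one can only assert the identity almost everywhere rather than everywhere—so I expect the cleanest writeup to either state continuity of $f$ as a standing hypothesis or to phrase the conclusion as holding for almost every $\x$.
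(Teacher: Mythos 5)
Your proposal is correct and follows essentially the same route as the paper: both arguments reduce to the equality case of Cauchy--Schwarz combined with the Gaussian symmetry $-Z \overset{d}{=} Z$ (which forces $\Var(f(Z)) = \Var(f(-Z))$); the paper packages this as a direct computation showing $\Var\bigl(f(Z)+f(-Z)\bigr) = 0$, while you invoke the affine characterization $f(-Z) = a\,f(Z)+b$ a.s.\ and pin down $a=-1$, $b=2\mu$ by moment matching --- the same fact in different clothing. The one substantive difference is your final step: you explicitly note that the almost-sure identity only gives $f(\x)+f(-\x)=2c$ for Lebesgue-almost-every $\x$, and that concluding it for \emph{every} $\x$ requires continuity of $f$ (or weakening the conclusion to a.e.); the paper's proof makes this same jump silently, so on this point your writeup is the more careful of the two.
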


Lemma \ref{lemma:central symmetry} and Figure \ref{fig:temporal corr} leads us to the following conjecture: 

\noindent\fbox{\begin{minipage}{\textwidth}
\paragraph{Conjecture} For each time step $t$, the score network $\epsilon_\theta^{(t)}$ is approximately affine antisymmetric at $(\0,\cc_t)$, i.e.,
\begin{align*}
    \epsilon_\theta^{(t)}(\x) + \epsilon_\theta^{(t)}(-\x) \approx 2\cc_t
\end{align*}
for some fixed vector $\cc_t$ that depends on $t$.
\end{minipage}}

\vspace{1em}

The conjecture implies that the score network has learned an ``almost odd” function up to an additive shift. This is consistent with both theory and observations for large $t$, where $p_t(\x)$ is close to standard Gaussian, thus the score function is almost linear in $\x$. Nonetheless, our conjecture is far more general: it applies to every timestep $t$ and can accommodate genuinely nonlinear odd behaviors (for example, functions like $\sin x$).

\begin{figure}[htbp!]
    \centering
    \includegraphics[width =\textwidth]{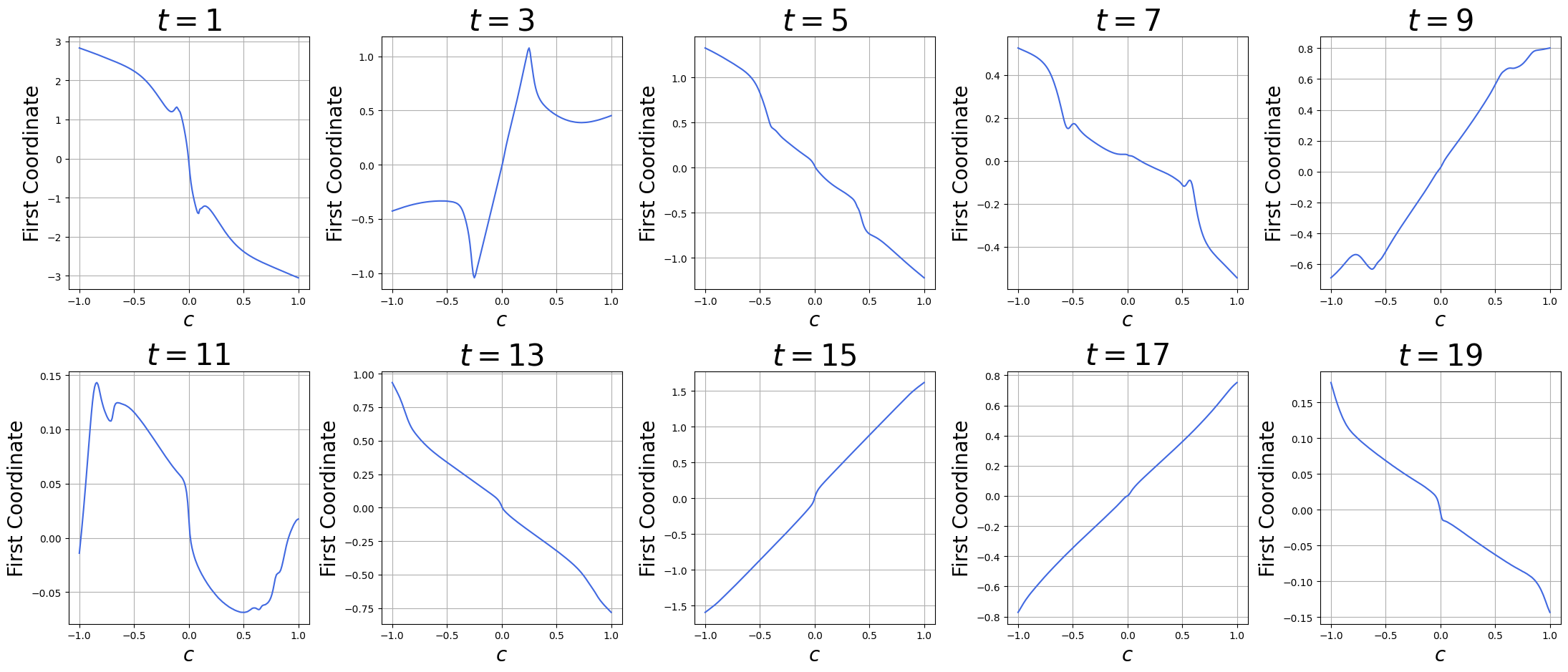}
    \caption{First-coordinate output of the pretrained score network on CIFAR10 as a function of the interpolation scalar $c$ for a 50-step DDIM at $t=1, 3,\dots,19$.
}
    \label{fig:first_coordinate_t}
\end{figure}

This conjecture combined with the DDIM update rule \eqref{eqn:DDIM} immediately explains the negative correlation: The one step iteration of $F_t$ in \eqref{eqn:DDIM} can be written as a linear combination between $\x$ and the output of the score network: $F_t(\x) = a_t \x + b_t \epsilon_\theta^{(t)}(\x)$. Given the conjecture, $F_t(-\x) \approx -a_t \x  -b_t \epsilon_\theta^{(t)}(\x) + 2b_t\cc_t = - F_t(\x) + 2b_t\cc_t$, so the one-step DDIM update is  affine antisymmetric at $(\0, b_t\cc_t)$. Thus, beginning with a strongly negatively correlated pair, each DDIM update preserves that strong negative correlation all the way through to the final output.

To (partly) validate our conjecture, we perform the following experiment. Using a pretrained CIFAR-10 score network with a 50-step DDIM sampler, we picked time steps $t=1,3,\dots,19$ (where smaller $t$ is closer to the image and larger $t$ is closer to pure noise). For each $t$, we sample $\x\sim\bN(0,I_d)$, evaluated the first coordinate of $\epsilon_\theta^{(t)}(c\,\x)$ as $c$ varied from $-1$ to $1$ (interpolating from $-\mathbf{x}$ to $\mathbf{x}$), and plotted the result in Figure \ref{fig:first_coordinate_t}. We exclude $t \ge 20$, where the curve is very close to a straight line.

Figure \ref{fig:first_coordinate_t} confirms that, at every  step $t$, the first coordinate of the score network is indeed overall affine antisymmetric. Although the curves display nonlinear oscillations at small $t$, the deviations on either side are approximately mirror images, and for larger $t$ the mapping is almost a straight line. The symmetry center $c_t$ usually lies near zero, but not always—for instance, at $t = 11$, $c_t\approx 0.05$ even though the function spans from $-0.05$ to $0.15$. In Appendix~\ref{subsec: additional experiments symmetry}, we present further validation experiments using alternative coordinates, datasets, and a quantitative metric called the antisymmetry score. Despite the distinct function shapes across coordinates, the conjectured symmetry still persists.

We further evaluate the antisymmetry conjecture across additional coordinates and datasets. On CIFAR-10 and Church, we assess how closely one-dimensional slices of the network outputs behave like affine-antisymmetric functions. To this end, we introduce the \emph{affine antisymmetry score}, which quantifies the degree of antisymmetry. Across datasets, the resulting antisymmetry scores are consistently high (mean values above 0.99 and even the lower quantiles still near 0.9). These results provide strong empirical support for the conjecture; full experimental details and plots are given in Appendix~\ref{subsec: additional experiments symmetry}.

Finally, we provide theoretical support for our conjecture in Appendix~\ref{subsec: conjecture proof large t}, \ref{subsec: monotone convergence}, and \ref{subsec: symmetry preservation}. Appendix~\ref{subsec: conjecture proof large t} confirms the conjecture in the large-$t$ regime, Appendix~\ref{subsec: monotone convergence} shows how both the density ratio and the score error converge monotonically as $t$ evolves via a Hermite polynomial expansion, and Appendix~\ref{subsec: symmetry preservation} demonstrates that all orthogonal symmetries of the data distribution are preserved throughout the forward process. These results help explain the behavior underlying the conjecture. The forward process preserves every orthogonal symmetry of the initial distribution, including coordinate reflections. Thus, in the ideal case where the data distribution at $t = 0$ is reflection-symmetric, the density remains even and the score remains odd for all $t$ by Proposition~\ref{prop: G-symmetry}. Moreover, even if this symmetry holds only approximately for small $t$, the monotone decay of the density ratio and the score error shows that $p_t$ quickly approaches the Gaussian, so the forward dynamics push the score toward an odd function in a controlled manner.

Both the conjecture and its empirical support could be of independent interest. They imply that the score network learns a function with strong symmetry, even though that symmetry is not explicitly enforced. The conjecture matches the Gaussian linear score approximation in the high-noise regime, which has proven effective \citep{wang2024the}. In the intermediate-to-low noise regime, a single Gaussian approximation performs poorly—this is clear in Figure \ref{fig:first_coordinate_t} for $t = 3, 7, 11$, where the score is strongly nonlinear—yet our conjectured approximate symmetry still holds. 

Besides explaining the source of negative correlation, we believe this result provides new insight into the structure of diffusion models, which are often treated as black-box functions. Leveraging this finding for algorithms and applications is an important direction for future work.

For readers interested in an additional theoretical perspective, Appendix \ref{subsec:fkg} presents a discussion that links negative correlation to the FKG inequality,  and provides a detailed analysis of the DDIM sampler using this framework.

\section{Uncertainty quantification}\label{sec:uncertainty quantification}

In uncertainty quantification for a diffusion model $\DM$, the goal is typically to estimate expectations of the form
$\bE_{z \sim \bN(0,I)}\!\left[S\bigl(\DM(z)\bigr)\right],$
where $S$ is a statistic of user's interest. Our goal is to leverage negative correlation to design an estimator with higher accuracy at fixed computation cost of inference.

 Recent studies have examined epistemic uncertainty and bias in diffusion models, including \cite{berry2024shedding, berry2025seeing}. Our focus is different: we study aleatoric uncertainty from noise sampling and develop a variance-reduction method.

\textbf{Standard Monte Carlo:}  To approximate this expectation, the simplest approach is to draw $N$ independent noises $z_1,\dots,z_N\sim\bN(0,I)$, calculate $S_i = S\bigl(\DM(z_i)\bigr)$ for each $i$ and form the standard Monte Carlo (MC) estimator $\hat\mu_N^\MC \coloneq  \sum_{i=1}^N S_i/N$ by taking their average. A $(1-\alpha)$ confidence interval, denoted as $\CI^\MC_N(1-\alpha)$ is then
$
\hat\mu_N^\MC \;\pm\; z_{1-\alpha/2}\,\sqrt{(\hat\sigma_N^\MC)^2/N},
$ where
$\hat\sigma_N^2$ is the sample variance of $S_1,S_2,\ldots, S_N$
and $z_{1-\alpha/2}$ is the $(1-\alpha/2)$-quantile of the standard normal. A formal guarantee of the above construction is given in Proposition \ref{prop:MC-guarantee} in Appendix \ref{subsec: proof uq}.

\textbf{Antithetic Monte Carlo:} The observed negative correlation motivates an improved estimator. Let $N = 2K$ be even, users can generate $K$ pairs of antithetic noise $(z_1, -z_1), \ldots, (z_K, -z_K)$. Define $S_i^+ = S(\DM(z_i))$, $S_i^- = S(\DM(-z_i))$, and let $\bar S_i = 0.5 (S_i^+ + S_i^-)$ be their average. Our Antithetic Monte Carlo (AMC) estimator is $\hat\mu_N^\AMC := \sum_{i=1}^K \bar S_i /K$, and confidence interval $\CI_N^\AMC(1-\alpha)$ is
$
\hat\mu_N^\AMC \;\pm\; z_{1-\alpha/2}\,\sqrt{2(\hat\sigma_N^\AMC)^2/N},
$
where $(\hat\sigma_N^\AMC)^2$ is the sample variance of $\bar S_1, \ldots, \bar S_{K}$. 

The intuition is simple: if the pair $(S_i^+, S_i^-)$ remains negatively correlated, then averaging antithetic pairs can reduce variance by partially canceling out opposing errors—since negative correlation suggests that when one estimate exceeds the true value, the other is likely to fall below it. 

The AMC estimator is unbiased, and $\CI_N^\AMC(1-\alpha)$ achieves correct coverage as the sample size increases. Let $\rho := \Corr(S_i^+, S_i^-)$. We can prove that AMC’s standard error and its confidence-interval width are each equal to the Monte Carlo counterparts multiplied by the factor $\sqrt{1+\rho}$. Thus, when $\rho < 0$, AMC produces \textbf{provably lower variance and tighter confidence intervals} than MC, with greater gains as $\rho$ becomes more negative. See Appendix \ref{subsec: proof uq} for a formal statement and proof.

Since both methods have the same computational cost, the variance reduction from negative correlation and the antithetic design yields a direct and cost-free improvement.

\textbf{$K$-antithetic noise:} We can generalize the antithetic noise pair to a collection of $K$ noise variables, constructed so that every pair has the same negative correlation $-1/(K-1)$. One way to generate them is as follows. Draw $K$ independent standard Gaussian vectors $(w_1,\ldots,w_K)$ and let $\bar w$ be their average. For each $i$, set
$
z_i = \sqrt{K/(K-1)}\left(w_i - \bar w\right).
$
One can directly check that each $z_i$ is still marginally standard Gaussian, and $\Corr(z_{i,l}, z_{j,l}) = -1/(K-1) $ for every $i\neq j$ and every pixel $l$. In particular, when 
$K=2$, this construction reduces to our usual antithetic noise pair.

\textbf{Quasi-Monte Carlo:}

The idea of using negatively correlated samples has been widely adopted in Monte Carlo methods \citep{craiu2005multiprocess} and statistical risk estimation \citep{liu2024crossvalidation} for improved performance. 
This principle of variance reduction can be further extended to QMC methods. As an alternative to Monte Carlo, QMC constructs a deterministic set of $N$ samples that have negative correlation and provide a more balanced coverage of the sample space. QMC samples can also be randomized, resulting in RQMC methods. RQMC maintains the marginal distribution of each sample while preserving the low-discrepancy properties of QMC. Importantly, repeated randomizations allow for empirical error estimation and confidence intervals \citep{lecuyer2023confidence}.

Under regularity conditions \citep{nied:1992}, QMC and RQMC can improve the error \textit{convergence rate} from $O(N^{-1/2})$ to $O(N^{-1+\epsilon})$ for any $\epsilon>0$, albeit $\epsilon$ may absorb a dimension-dependent factor $(\log N)^{d}$. For sufficiently smooth functions, RQMC can achieve rates as fast as $O(N^{-3/2+\epsilon})$ \citep{snetvar,smoovar}.
Moreover, the space-filling property of QMC may also promote greater sample diversity.

Although the dimension of the Gaussian noise is higher than the typical regime where QMC is expected to be most effective, RQMC still further shrinks our confidence intervals by several-fold compared with standard MC. This hints that the image generator’s effective dimension is much lower than its ambient one, allowing QMC methods to stay useful. Similar gains arise when QMC methods deliberately exploit low-dimensional structure in practical problems \citep{wang2003effective,xiao:wang:2019,liu2023preintegration}. Developing ways to identify and leverage this structure more systematically is an appealing avenue for future work.

\section{Experiment}\label{sec:experiments}

Sections~\ref{subsec:pixel} and \ref{subsec:evaluation inverse solver} present two uncertainty quantification applications: estimating pixel-wise statistics and evaluating diffusion inverse problem solvers. The latter considers two popular algorithms across a range of tasks and datasets. For each task, we apply MC, AMC, and, when applicable, QMC estimators as described in Section~\ref{sec:uncertainty quantification}. 
For each estimator, we report the $95\%$ confidence interval (CI) width and its relative efficiency. 
The relative efficiency of a new estimator (AMC or QMC) is defined as the squared ratio of the MC CI width to that of the estimator,  $\left(\text{CI}_{\text{MC}}/\text{CI}_{\text{new estimator}}\right)^{2}.
$
It reflects how many times more MC samples are required to achieve the same accuracy as our new estimator.
Section~\ref{subsec:diversity} shows that antithetic noise produces more diverse images than independent noise while preserving quality, and Appendix \ref{subsec: image editing} explores an image editing example.

\subsection{Pixel-wise statistics}\label{subsec:pixel}

We begin by evaluating four pixel-level statistics. These include the (i) pixel value mean, (ii) perceived brightness,  (iii) contrast, and (iv) the image centroid, with definitions in Appendix~\ref{subsec: additional experiments uq}. These statistics   
are actively used in diffusion workflows for diagnosing artifacts and assessing reliability, with applications in detecting signal leakage \citep{lin2024common,everaert2024exploiting}, out-of-distribution detection \citep{le2024detecting}, identifying artifact-prone regions \citep{kou2024bayesdiff}, and improving the reliability of weather prediction \citep{li2024generative}.

\textbf{Setup:} 
We study both unconditional and conditional diffusion models; details are in Appendix \ref{subsec: implement detail}:

For \emph{unconditional diffusion models}, we evaluate pre-trained models on CIFAR-10 ~\citep{cifar-10}, CelebA-HQ ~\citep{celeba-hq}, and LSUN-Church ~\citep{yu2015lsun}. For each dataset, 1,600 image pairs are generated under both PN and RR noise sampling with 50 DDIM steps. 

For \emph{conditional diffusion models}, we evaluate Stable Diffusion 1.5 on 200 prompts from the Pick-a-Pic \citep{kirstain2023pickapic} and DrawBench \citep{saharia2022photorealistic} with classifier-free-guidance (CFG) scale 3.5, and DiT ~\citep{peebles2023scalable} on 32 ImageNet classes with CFG scale 4.0. For each class or prompt, 100 PN and RR pairs are generated with 20 DDIM steps.  We also study the effect of CFG scale on both models in the Appendix ~\ref{subsec: additional experiment similarity} ~\citep{ho2022classifier}.

\textbf{Implementation:} 
To ensure fair comparisons under equal sample budget:

For \emph{unconditional diffusion models}: MC uses 3,200 independent samples, AMC ($k=2$) uses 1,600 antithetic pairs,  and AMC ($k=8$) uses 400 independent negative correlated batches.  QMC employs a Sobol’ point set of size 64 with 50 independent randomizations. Due to the dimensionality limits, QMC is restricted to CIFAR-10. 

For \emph{conditional diffusion models}: for each class or prompt, MC uses 200 independent samples, AMC uses 100 antithetic pairs, and QMC employs 8 Sobol’ points with 25 randomizations.

CIs for MC and AMC are constructed as described in Section~\ref{sec:uncertainty quantification}. For QMC, we use a student-$t$ interval to ensure reliable coverage ~\citep{lecuyer2023confidence}, with details in Appendix~\ref{subsec: additional experiments uq}.

\textbf{Results:}
Across all statistics, both AMC and QMC have much shorter CIs than MC, with relative efficiencies ranging from 3.1 to 136. This strongly suggests our estimators can dramatically reduce cost for uncertainty estimates. AMC and QMC estimators yield comparable results.
The performance of QMC depends on how the total budget is allocated between the size of the QMC point set and the number of random replicates, a trade-off we explore in further details in Appendix ~\ref{subsec: additional experiments uq}.

\begin{table}[hbt]

\centering
\caption{
CI lengths and efficiency $(\text{CI}_{\text{MC}} / \text{CI})^2$ (in parentheses), using MC as baseline. 
}
\label{tab:ci_metrics}
\scalebox{0.9}{
\begin{tabular}{llcccc}
\toprule
\textbf{Dataset} & \textbf{} & \textbf{Brightness} & \textbf{Pixel Mean} & \textbf{Contrast} & \textbf{Centroid} \\

\midrule
\multirow{4}{*}{CIFAR10} 
& MC   & 2.00 & 2.04 & 1.08  & 0.14 \\
& QMC  & 0.35 (32.05) & 0.39 (26.94) & \textbf{0.22 (23.43)} & 0.04 (9.65) \\
& AMC ($k=2$) & \textbf{0.35 (32.66)} & 0.39 (27.12) & 0.23 (22.05) & 0.04 (9.73) \\
& AMC ($k=8$)  & 0.36 (30.35) & \textbf{0.39 (27.34)} & 0.24 (20.37) & \textbf{0.03 (13.94)} \\

\midrule
\multirow{2}{*}{CelebA} 
& MC   & 1.77 & 1.76 & 0.60 & 0.60 \\
& AMC ($k=2$) & 0.26 (47.41) & 0.31 (33.15) & 0.19 (10.18) & 0.20 (7.82) \\
& AMC ($k=8$) & \textbf{0.15 (130.69)} & \textbf{0.16 (115.79)} & \textbf{0.18 (10.90)} & \textbf{0.19 (8.49)} \\

\midrule
\multirow{2}{*}{Church} 
& MC   & 1.66 & 1.64 & 1.02 & 0.82 \\
& AMC ($k=2$) & \textbf{0.14 (134.13)} & 0.16 (103.80) & 0.20 (27.16) & 0.26 (9.84)\\
& AMC ($k=8$) & 0.15 (117.90) & \textbf{0.16 (107.64)} & \textbf{0.19 (27.81)} & \textbf{0.25 (10.87)} \\

\midrule
\multirow{3}{*}{Stable Diff.} 
& MC   & 3.86 & 3.95 & 2.18  & 4.03 \\
& QMC  & 1.36 (8.07) & 1.43 (7.60) & 1.03 (4.46)  & 1.88 (4.54)\\
& AMC ($k=2$) & \textbf{0.90 (18.53)} & \textbf{1.00 (15.53)} & \textbf{0.89 (6.25)} & \textbf{1.62 (6.15)} \\

\midrule
\multirow{3}{*}{DiT} 
& MC   & 7.52 & 7.74 & 3.32 & 3.15 \\
& QMC  & 3.44 (4.78) & 3.49 (4.91) & 1.82 (3.31) & \textbf{1.69 (3.46)} \\
& AMC ($k=2$) & \textbf{3.13 (5.79)} & \textbf{3.12 (6.16)} & \textbf{1.72 (3.74)} & 1.74 (3.30) \\

\bottomrule
\end{tabular}}
\end{table}

\subsection{Evaluating different diffusion inverse problem solvers}\label{subsec:evaluation inverse solver}

Bayesian inverse problems are important in medical imaging, remote sensing, and astronomy. Recently, many methods have been proposed that leverage diffusion priors to solve inverse problems. For instance, \citet{zheng2025inversebench} surveys 14 such methods, yet this is still far from exhaustive. As a result, comprehensive benchmarking is costly. Meanwhile, most existing works on new diffusion inverse solvers (DIS) largely ignore uncertainty quantification, though it is critical for fair evaluation.
This raises a key challenge: \textit{how to efficiently evaluate posterior sampling algorithms with reliable estimates of reconstruction quality and uncertainty.} Our approach aims to efficiently address this gap.

We show that our antithetic estimator can save substantial computational cost for quantifying uncertainty by requiring far fewer samples than standard estimators. We focus on two popular DISs, Diffusion Posterior Sampling (DPS) ~\citep{chung2023diffusion} and Decomposed Diffusion Sampler (DDS) ~\citep{chung2023decomposed}, and evaluate them on a range of tasks described below.

\subsubsection{Human Face Reconstruction}

We evaluate DPS across three inverse problems: inpainting, Gaussian deblurring, and super-resolution. Reconstruction error is measured using both PSNR and $L_1$ distance relative to the ground truth image. For all tasks, we use 200 human face images from the CelebA-HQ dataset. For each corrupted image, we generate 50 DPS reconstructions using both PN and RR noise pairs to compare their estimators. Operator-specific configurations (e.g., kernel sizes, mask ratios) are provided in Appendix~\ref{subsec: additional experiments uq}. 

\textbf{Results:} 
As shown in Table \ref{tab:DPS_combined_results}, across all three tasks, AMC achieves substantially shorter confidence intervals than standard MC. This implies AMC yields efficiency gains, ranging from $54\%$ – $84\%$ in inpainting, $41\%$ – $54\%$ in super-resolution, and $34\%$ – $56\%$ in deblurring.

The reconstruction metrics ($L_1$, PSNR) differ by less than $0.2\%$ between the two estimators, showing that they produce consistent results. Meanwhile, our method offers a clear advantage in efficiency.

\begin{table}[ht]
\centering
\caption{Comparison of AMC vs. MC across tasks in DPS with efficiency $(\text{CI}_{\text{MC}} / \text{CI})^2$ in parentheses}
\label{tab:DPS_combined_results}
\begin{tabular}{lcccccc}
\toprule
& \multicolumn{2}{c}{\textbf{L1}} & \multicolumn{2}{c}{\textbf{PSNR}} \\
\cmidrule(lr){2-3} \cmidrule(lr){4-5}
Task & AMC CI & MC CI & AMC CI & MC CI \\
\midrule
Inpainting & \textbf{0.83 (1.54)} & 1.03 & \textbf{0.23 (1.84)} & 0.31 \\
Super-resolution & \textbf{1.01 (1.41)} & 1.20 & \textbf{0.24 (1.54)} & 0.30 \\
Gaussian Deblur & \textbf{0.89 (1.34)} & 1.03 & \textbf{0.23 (1.56)} & 0.29 \\
\bottomrule
\end{tabular}
\end{table}

\subsubsection{Medical Image Reconstruction}

We use the DDS~\citep{chung2023decomposed} as the reconstruction backbone and apply our antithetic noise initialization to estimate reconstruction confidence intervals. Given a single measurement, we generate 100 reconstruction pairs using 50-step DDS sampling. Reconstruction quality is evaluated against ground truth using $L_1$ distance and PSNR. Dataset details can be found in Appendix~\ref{subsect:fastmri}.

\textbf{Results:} 
Both MC and AMC estimators give consistent estimates, with estimated $L_1$ error of $0.0194$ and $0.0195$, and PSNR values of $31.48$ and $31.45$, respectively. Similarly, we observe that AMC produces much shorter confidence intervals. The CI lengths decrease with relative efficiency of $1.44$ for $L_1$ and $1.37$ for PSNR. This  shows that antithetic initialization consistently reduces estimator variance  with no extra cost without degrading reconstruction fidelity in large-scale inverse problems.

\textbf{Conclusion:} These two experiments show that a single antithetic estimator reduces evaluation costs for diffusion inverse solvers by $34\%$–$84\%$, a saving that becomes especially valuable given the large number of solvers in the literature. This makes large-scale benchmarking far more practical.

\subsection{Diversity Improvement}\label{subsec:diversity}
Many diffusion tools generate a few images from one prompt to let users select a preferred one. However, these images may look similar with randomly sampled initial noise \citep{marwood2023diversity,sadat2024cads}. We show that antithetic noise produces more diverse images than independent noise, without reducing image quality or increasing computational cost. Diversity is measured using pairwise SSIM (lower = more diverse) and LPIPS \citep{zhang2018unreasonable} (higher = more diverse).

We evaluate the diversity metrics for both unconditional and conditional diffusion on image pairs generated following the same setup in Section \ref{subsec:pixel}. As shown in Table \ref{tab:diversity_relative_percentage}, antithetic noise pairs consistently lead to higher diversity, as indicated by lower SSIM and higher LPIPS scores. Importantly, image quality remains stable for both noise types. We expect our method can be easily integrated into existing diversity optimization techniques to further improve their performance.

\begin{table}[hbtp]
\centering
\caption{\textit{Average percentage improvement} of PN pairs over RR pairs on SSIM and LPIPS.}
\label{tab:diversity_relative_percentage}
\begin{tabular}{lccccc}
\toprule
\textbf{Metric} & 
\multicolumn{3}{c}{\textbf{Unconditional Diffusion}} & 
\multicolumn{2}{c}{\textbf{Conditional Diffusion}} \\
\cmidrule(lr){2-4} \cmidrule(lr){5-6}
 & CIFAR-10 & LSUN-Church & CelebA-HQ & SD1.5 & DiT \\
\midrule
SSIM (\%)   & 88.78  &  45.69 & 36.78  & 28.32  & 23.99 \\
LPIPS (\%)  & 6.69 &  3.54 &  15.14 &  5.78 &  10.62 \\
\bottomrule
\end{tabular}
\end{table}

\section{Discussion and future directions}\label{sec:conclusion}
We find that antithetic initial noise yields negatively correlated samples. This is a robust property that generalizes across every model that we have tested. The negative correlation is especially useful for uncertainty quantification, where it brings huge variance reduction and cost savings. This makes our approach highly effective for evaluation and benchmarking tasks. Meanwhile, this finding can serve as a simple plug-in to improve related tasks, such as diversity improvement and image editing. 

A limitation of our work is that while the symmetry conjecture is supported by both empirical and theoretical evidence, it remains open in full generality.  In addition, our method is not a cure-all: the LPIPS diversity improvements in Table \ref{tab:diversity_relative_percentage} are consistent but modest. While we obtain large gains for pixel-wise statistics, the improvements for perceptual or text-alignment metrics such as MUSIQ and CLIP Score are small, since the strong nonlinearity of transformer decoders attenuates most of the negative correlation present in the diffusion outputs.

Looking forward, future directions include systematically studying the symmetry conjecture; integrating antithetic noise with existing noise-optimization methods; and further leveraging QMC to improve sampling and estimation.

\bibliography{reference}
\bibliographystyle{iclr2026_conference}

\appendix
 
\section{Proofs}\label{sec:proof}
\subsection{Proofs in Section \ref{subsec:temporal corr}}\label{subsec: proof temporal corr}
\begin{proof}[Proof of Lemma \ref{lemma:central symmetry}]
    It suffices to show $\Var(f(Z) + f(-Z)) = 0$. By definition:
    \begin{align*}
        \Corr(f(-Z), f(Z)) = \frac{\Cov(f(-Z), f(Z))}{\sqrt{\Var(f(Z))\Var(f(-Z))}} = -1.
    \end{align*}
    Therefore
    \begin{align*}
        \Var(f(Z) + f(-Z)) &= \Var(f(Z)) + \Var(f(-Z)) + 2\Cov(f(Z), f(-Z))\\
        & = \Var(f(Z)) + \Var(f(-Z)) - 2\sqrt{\Var(f(Z))\Var(f(-Z))}\\
        & = \left(\sqrt{\Var(f(Z))} - \sqrt{\Var(f(-Z))}\right)^2.
    \end{align*}
On the other hand, since $-Z\sim \bN(0,I)$ given $Z\sim \bN(0,1)$, we have $\Var(f(Z)) = \Var(f(-Z))$ and then $\Var(f(Z) + f(-Z)) = 0$, as claimed.
\end{proof}

\subsection{Theoretical guarantee of the symmetry conjecture for large $t$}\label{subsec: conjecture proof large t}
\paragraph{Background: The Ornstein--Uhlenbeck (OU) process.}
The OU process $(X_t)_{t\ge0}$ in $\R^d$ solves the SDE
\[
\mathrm{d}X_t = -X_t\,\mathrm{d}t + \sqrt{2}\,\mathrm{d}B_t, \qquad X_0\sim\mu_0,
\]
where $(B_t)$ is standard Brownian motion in $\R^d$.
This is a Gaussian Markov process with stationary distribution
$\gamma_d=\mathbb N(0,I_d)$ and generator
\[
Lf(x) = \Delta f(x) - x\cdot\nabla f(x).
\]
For any initial law $\mu_0$, let $\mu_t$ denote the law of $X_t$. Then $\mu_t=\mu_0 P_t$,
where $(P_t)_{t\ge0}$ is the \emph{OU semigroup} defined by
\[
(P_t f)(x) = \E[f(X_t)\mid X_0=x].
\]

Write the \emph{score} of $\mu_t$ as $s_t(x):=\nabla\log p_t(x)$, where $p_t$ is Lebesgue density of $\mu_t$,  we have

\begin{theorem}[Score converges to the Gaussian score]
\label{thm:score-to-minus-x}
Under the setup above,
\[
\lim_{t\to\infty}\ \E_{\mu_t}\!\bigl[\|s_t(X_t)+X_t\|^2\bigr]\;=\;0.
\]

Moreover, the convergence is quantified by
\[
\E_{\mu_t}\!\bigl[\|s_t(X_t)+X_t\|^2\bigr]\;=\;\FI(\mu_t\mid\gamma_d)\;\le\;e^{-2t}\,\FI(\mu_0\mid\gamma_d).
\]
\end{theorem}

\begin{proof}
Recall the definition of relative Fisher information
\[
\FI(\mu_t\mid\gamma_d)
:=\E_{\mu_t}\!\bigl[\|\nabla\log(p_t/\gamma_d)\|^2\bigr].
\]

Since $\nabla \gamma_d(x) = -x$, we have:
\[
\FI(\mu_t\mid\gamma_d) =\E_{\mu_t}\!\bigl[\|s_t(X_t) + X_t\|^2\bigr].
\]

Moreover, it is well known (e.g. Chapter 5 of \cite{bakry2013analysis}) that $\FI(\mu_t\mid\gamma_d)\leq \exp(-2t)\FI(\mu_0\mid\gamma_d)$, this concludes our proof.
\end{proof}

In variance-preserving diffusion models, the forward noising process is exactly the Ornstein–Uhlenbeck semigroup. Theorem \ref{thm:score-to-minus-x} implies that the true score converges 
 to the Gaussian score  $-x$. Consequently, one-dimensional slices of the score become asymptotically affine antisymmetric, confirming our symmetry conjecture in the high-noise limit.

The next corollary shows, the 1-step DDIM update has nearly $-1$ correlation at large $t$.

\begin{corollary}[Correlation of 1-step DDIM]
With all the setup the same as above, assuming we have a score network $\epsilon^{(t)}$ approximating the score function.  Let 
$\eta_t:=\max\{\mathbb{E}_{p_t}\|\epsilon^{(t)}(X)-s_t(X)\|^2, \mathbb{E}_{p_t}\|\epsilon^{(t)}(-X)-s_t(-X)\|^2\}$ be the expected (symmetrized) error. Consider one-step DDIM update $F_t(x)=a_t x+b_t\,\epsilon^{(t)}_\theta(x) := (F_{t,1}(x), \ldots, F_{t,d}(x))\in \mathbb R^d$ as defined in \eqref{eqn:DDIM}. Let
\[
v_{t,i} := \min\{\Var_{p_t}\!\big(F_{t,i}(X)\big),\Var_{p_t}\!\big(F_{t,i}(-X)\big) \}.
\]

 Then
\[
\bigl|\Corr\bigl(F_{t,i}(X),F_{t,i}(-X)\bigr)+1\bigr|
\;\le\; \frac{2\,|b_t|}{\sqrt{v_{t,i}}}\Bigl(\sqrt{\eta_t}+e^{-t}\sqrt{\FI(\mu_0\mid\gamma_d)}\Bigr).
\]
\end{corollary}

\begin{proof}
Theorem \ref{thm:score-to-minus-x} shows $\FI(\mu_t\mid\gamma_d)\leq \exp(-2t)\FI(\mu_0\mid\gamma_d)$.  Write $\epsilon^{(t)}_\theta=s_t+r_t$ and $s_t=-x+\Delta_t$, so that
$\mathbb{E}_{p_t}\|r_t(X)\|^2 \leq \eta_t$ and $\mathbb{E}_{p_t}\|\Delta_t(X)\|^2=\FI(\mu_t\mid\gamma_d)$.
Then
\[
F_t(x)=(a_t-b_t)x+b_t\bigl(\Delta_t(x)+r_t(x)\bigr),
\qquad
F_t(-x)=-F_t(x)+E_t(x),
\]
with
$E_t(x):=b_t\!\left[\Delta_t(-x)+\Delta_t(x)+r_t(-x)+r_t(x)\right]$.
By triangle inequality of $L^2(p_t)$ norm, 
\[
\|E_t\|_{L^2(p_t)}\;\le\;|b_t|\bigl(e^{-t}\sqrt{\FI(\mu_0\mid\gamma_d)}+ e^{-t}\sqrt{\FI(\mu_0\mid\gamma_d)} + 2\sqrt{\eta_t}\bigr).
\]
Using
\[
\Corr\bigl(F_{t,i}(X),F_{t,i}(-X)\bigr)
= -1+\frac{\Cov\bigl(F_{t,i}(X),E_{t,i}(X)\bigr)}{\sqrt{\Var(F_{t,i}(X))\,\Var(F_{t,i}(-X))}}
\]
and Cauchy--Schwarz with the variance lower bounds gives
\[
\bigl|\Corr\bigl(F_t(X),F_t(-X)\bigr)+1\bigr|
\;\le\; \frac{\|E_t\|_{L^2(p_t)}}{\sqrt{v_{t,i}}}
\;\le\; \frac{2\,|b_t|}{\sqrt{v_{t,i}}}\bigl(\sqrt{\eta_t}+e^{-t}\sqrt{\FI(\mu_0\mid\gamma_d)},
\]
as claimed.
\end{proof}
In practice, if $v_{t,i}>0$, the correlation is close to $-1$ for large $t$; its deviation from $-1$ is on the order of the neural network approximation error plus an exponentially decaying term $C e^{-t}$.

\subsection{Monotone convergence in $t$}\label{subsec: monotone convergence}

Section \ref{subsec: conjecture proof large t} shows that the score converges to the Gaussian score as $t \to \infty$. We now further prove that (1) the score error $\E_{\mu_t}\bigl[|s_t(X_t) + X_t|^2\bigr]$ and (2) the density ratio $p_t/\gamma_d$ both converge monotonically in $t$. 

\subsubsection{Density ratio convergence}
\paragraph{Background: Hermite Polynomials in $\R$:}

For $x\in\R$, the \emph{(probabilist's) Hermite polynomials}
$\{H_n(x)\}_{n\ge0}$ form a sequence of polynomials defined by
$$
H_n(x) = (-1)^n \exp\left(\frac{x^2}{2}\right)\frac{\diff^n}{\diff x^n} \exp\left(-\frac{x^2}{2}\right).
$$

We summarize their known properties here. The first two can be checked by direct calculation.  The latter two can be found on page 105 of \cite{bakry2013analysis}.

\begin{prop}\label{prop:properties_1d_hermite}
Hermite polynomials $\{H_n(x)\}_{n\ge0}$ satisfy the following
    \begin{itemize}
        \item (Recurrence relation) $H_{n+1}(x)=x\,H_n(x)-H_n'(x),\qquad n\ge0.$
        \item (Orthogonality under Gaussian measure) We have:
        \begin{align*}
            \int H_n(x) H_m(x) \gamma_1(\diff x) =  n! \delta_{nm},
        \end{align*}
        where $\gamma_1(\diff x) = (2\pi)^{-1/2} e^{-x^2/2}\diff x$ denotes the one-dimensional standard Gaussian measure, and $\delta_{nm}$ is $1$ when $n=m$ and $0$ otherwise.
        \item (Completeness) Hermite polynomials $\{H_n(x)\}_{n\ge0}$ form an orthogonal basis of the Hilbert space $L^2(\gamma) := \{f: \int f^2 \gamma(\diff x) <\infty\}$ equipped with inner product
 $\langle f, g \rangle_\gamma := \int f g \gamma(\diff x)$.
        \item (Eigenfunction) Let $L$ be a differential operator defined as $L(f) := f'' - x f'$. Hermite polynomials $\{H_n(x)\}_{n\ge0}$ are eigenfunctions of $L$:
        $$L(H_n) = -n H_n, \qquad n\ge 0.$$
    \end{itemize}
\end{prop}

\textbf{Hermite Polynomials in $\R^d$:} One can naturally extend univariate Hermite polynomials to the multivariate setting. For $x=(x_1,\ldots,x_d)\in\R^d$, let $\gamma_d$ denote the $d$-dimensional standard Gaussian measure,
\[
\gamma_d(\diff x)
	:= (2\pi)^{-d/2}\exp\!\left(-\frac{\|x\|^2}{2}\right)\diff x.
\]
For a multi-index $\alpha=(\alpha_1,\ldots,\alpha_d)\in\mathbb N^d$, we define the \emph{multivariate Hermite polynomial}
\[
H_\alpha(x)
	:= \prod_{i=1}^d H_{\alpha_i}(x_i),
\]
where $H_{\alpha_i}$ is the one-dimensional probabilists' Hermite polynomial.
We write $|\alpha|:=\alpha_1+\cdots+\alpha_d$ and
$\alpha!:=\alpha_1!\cdots\alpha_d!$. Similar to the one-dimensional case, the family $\{H_\alpha\}_{\alpha\in\N^d}$ forms an orthogonal basis of the Hilbert space
$L^2(\gamma_d)$:
\[
\int_{\R^d} H_\alpha(x)\,H_\beta(x)\,\gamma_d(\diff x)
	= \alpha!\,\delta_{\alpha\beta}.
\]

\paragraph{Monotone convergence of the density ratio.}
Recall from Section \ref{subsec: conjecture proof large t} that the generator of the OU process
\[
Lf(x) = \Delta f(x) - x\cdot\nabla f(x).
\]
and the corresponding Markov semigroup $P_t = \exp(tL)$. We show that the multivariate Hermite polynomials are eigenfunctions of $L$, and thus also of $P_t$.
\begin{prop}\label{prop: eigenfunction of generator}
    We have:
    \begin{itemize}
        \item $L H_\alpha = -\,|\alpha|\, H_\alpha,\qquad \alpha\in\mathbb N^d$,
        \item $P_t H_\alpha = e^{-\,|\alpha|t\,} H_\alpha,\qquad \alpha\in\mathbb N^d, t \geq 0$.
        \item For every $g \in L^2(\gamma_d)$
        \[
        P_t g(x)=\sum_{\alpha\in\mathbb N^d} a_\alpha e^{-t|\alpha|} H_\alpha(x),
        \]
        where $a_\alpha=\langle g,H_\alpha\rangle_{L^2(\gamma_d)}$.
    \end{itemize}
\end{prop}
\begin{proof}
    We can directly calculate
\begin{align*}
    L H_\alpha(x) &= \Delta H_\alpha(x) - x \cdot \nabla H_\alpha(x) \\
               & = \sum_{i} H_{\alpha_i}''(x_i) H_{\alpha_{-i}}(x_{-i}) - \sum_{i} x_i H_{\alpha_i}'(x_i) H_{\alpha_{-i}}(x_{-i})\\
               & = \sum_i H_{\alpha_{-i}}(x_{-i})(H_{\alpha_i}''(x_i) - x_i H_{\alpha_i}'(x_i))\\
               & = \sum_i (-\alpha_i)H_{\alpha_{-i}}(x_{-i}) H_{\alpha_i}(x_i) \\
               & = -\,|\alpha|\, H_\alpha(x),
\end{align*}
where $H_{\alpha_{-i}}(x_{-i}) = \prod_{j\neq i}H_{\alpha_{j}}(x_{j})$. The second to last equality follows from the fourth property of Proposition \ref{prop:properties_1d_hermite}.

Consequently, the OU semigroup $(P_t)_{t\ge0}$ satisfies
\[
P_t H_\alpha = e^{-t|\alpha|} H_\alpha,\qquad t\ge0.
\]
For every $g\in L^2(\gamma_d)$, we rewrite $g$ according to its Hermite expansion (which is valid according to properties 2-3 in Proposition \ref{prop:properties_1d_hermite}).
\[
g(x)=\sum_{\alpha\in\mathbb N^d} a_\alpha H_\alpha(x),
\qquad a_\alpha=\langle g,H_\alpha\rangle_{L^2(\gamma_d)}.
\]
Therefore
\[
P_t g(x)=\sum_{\alpha\in\mathbb N^d} a_\alpha e^{-t|\alpha|} H_\alpha(x),
\]
as claimed.
\end{proof}

In particular, let $p_t(x) = f_t(x)\gamma_d(x)$ denote the density at time $t$ of the OU process started from $p_0$. It is known that $f_t = P_t f_0$. Proposition \ref{prop: eigenfunction of generator} implies
\[
f_t(x) = \sum_{\alpha\in\mathbb N^d} a_\alpha e^{-t|\alpha|} H_\alpha(x)
	= a_0 + \sum_{|\alpha|>0} a_\alpha e^{-t|\alpha|} H_\alpha(x),
\]
where $a_\alpha$ are the Hermite coefficients of $f_0$. Since $H_0\equiv 1$ and
\[
a_0 = \int_{\R^d} f_0(x)\,H_0(x)\,\gamma_d(\diff x)
	= \int_{\R^d} f_0(x)\,\gamma_d(\diff x)
	= \int_{\R^d} p_0(x)\,\diff x = 1,
\]
we may rewrite
\begin{equation}\label{eq:ft-Hermite-expansion}
f_t(x) = 1 + \sum_{|\alpha|>0} a_\alpha e^{-t|\alpha|} H_\alpha(x).
\end{equation}

We now show $f_t$ monotonically converges to $1$ in $L^2(\gamma_d)$.

\begin{prop}\label{prop:odd-even-ft}
Let $p_t=f_t\gamma_d$ be as above, and assume $f_0\in L^2(\gamma_d)$. Then $\|f_t - 1\|_{L^2(\gamma_d)}$ monotonically decays to $0$ at the speed of $e^{-t}$.
\end{prop}

\begin{proof}
From \eqref{eq:ft-Hermite-expansion} we have
\[
f_t(x)
= 1 + \sum_{|\alpha|>0} a_\alpha e^{-t|\alpha|} H_\alpha(x).
\]
Since $\{H_\alpha\}_{\alpha\in\mathbb N^d}$ is an orthogonal basis of $L^2(\gamma_d)$, we obtain
\[
\|f_t - 1\|_{L^2(\gamma_d)}^2
= \sum_{\substack{\alpha\in\mathbb N^d}}
	a_\alpha^2 e^{-2t|\alpha|}.
\]
Since each term in the sum decreases monotonically to $0$, the same holds for $\|f_t - 1\|_{L^2(\gamma_d)}^2$. The overall decay rate is governed by the slowest mode, which corresponds to the smallest $\lvert a \rvert = 1$, thus the rate of convergence is $\exp(-2t)$.
Taking square roots yields the desired result.
\end{proof}

\subsubsection{Score error convergence}
Recall from Section \ref{subsec: conjecture proof large t} that
$\E_{\mu_t}\!\bigl[\|s_t(X_t) + X_t\|^2\bigr] = \FI(\mu_t\mid\gamma_d)$. It follows from equation 5.7.2 in \cite{bakry2013analysis} that
\[
\frac{\diff \FI(\mu_t\mid\gamma_d) }{\diff t} \leq -2 \FI(\mu_t\mid\gamma_d)\leq 0.
\]

Thus $\FI(\mu_t\mid\gamma_d)$ is monotonically decreasing as a function of $t$. Further, Grönwall's inequality implies 
\[\FI(\mu_t\mid\gamma_d)\;\le\;e^{-2t}\,\FI(\mu_0\mid\gamma_d).\]

\subsection{Symmetry preservation under the forward process}\label{subsec: symmetry preservation}
This section shows the forward process preserves any orthogonal symmetry in the data distribution. More precisely, let $G$ be a subgroup of the orthogonal group $O(d)$ in $\mathbb R^d$ (e.g., coordinate sign flips, coordinate permutations, and rotations). We have the following:

\begin{prop}\label{prop: G-symmetry}
Assuming $p_0$ is symmetric about $\mu$ under the action of $G$, i.e., for every $g\in G$, we have $p_0(x) = p_0(g\cdot (x-\mu) + \mu)$ for every $x$. Then $p_t$  satisfies $p_t(x) = p_t(g\cdot (x-\mu_t) + \mu_t)$ for every $x$, and $s_t(\mu_t+ g \cdot x) = g\cdot s_t(\mu_t+x)$, where $\mu_t = e^{-t}\mu$.
    
\end{prop}

\begin{proof}
    Let $X_0 \sim p_0, Z\sim \mathbb N(0, I_d)$. It is known that $X_t := e^{-t}X_0 + \sqrt{1 - e^{-2t}} Z \sim p_t$. For any $g\in G$, 
    \begin{align*}
        g\cdot (X_t -\mu_t) + \mu_t &= g\cdot (e^{-t}X_0 + \sqrt{1 - e^{-2t}} Z - \mu_t) + \mu_t \\
        & = e^{-t} (g\cdot (X_0 - \mu) + \mu) + \sqrt{1 - e^{-2t}} g\cdot Z \sim p_t
    \end{align*}
    where the last claim uses that $g\cdot (X_0 - \mu) + \mu$ has the same distribution as $X_0$ (by symmetry) and $g\cdot Z \sim \mathbb N(0,I_d)$. This proves the claim on the symmetry of $p_t$.
    For the score, since $p_t(\mu_t + g\cdot x ) = p_t(\mu_t + x)$, taking derivative on both sides yields:
    \begin{align*}
        g^\top \cdot \nabla p_t(\mu_t + g\cdot x) = \nabla p_t(\mu_t + x).
    \end{align*}
    Since $g$ is an orthogonal matrix, $g^\top = g^{-1}$, we have:
    \begin{align*}
        \nabla p_t(\mu_t + g\cdot x) = g\cdot \nabla p_t(\mu_t + x).
    \end{align*}
    Dividing both sides by $p_t(\mu_t + x)$ proves the claimed result.
\end{proof}

Sections \ref{subsec: conjecture proof large t} (large $t$ limit), \ref{subsec: monotone convergence} (monotone convergence), and \ref{subsec: symmetry preservation} (symmetry perservation from $t = 0$) together provide insights for the conjecture: The forward OU process preserves all orthogonal symmetries of $p_0$, including central reflections. Hence, in the idealized case where $p_0$ is reflection-symmetric, the density is even and its score is odd. Proposition \ref{prop: G-symmetry} ensures that this symmetry is preserved for all $t$. Furthermore, even when the symmetry is only approximate near $t\approx 0$, the monotone convergence of $p_t$ toward the Gaussian is exponentially fast, so the forward dynamics push $s_t$ toward a linear function in a monotone manner.

\subsection{Proofs in Section \ref{sec:uncertainty quantification}}\label{subsec: proof uq}
\subsubsection{Monte Carlo estimator:}

\begin{prop}\label{prop:MC-guarantee}
    Denote $\bE_{z\sim\bN(0,I)}[S(\DM(z))]$ by $\mu$ and $\Var_{z\sim\bN(0,I)}[S(\DM(z))]$ by $\sigma^2$. Assuming $\sigma^2 < \infty$, then we have:
    \begin{itemize}[leftmargin = *]
        \item $\hat \mu_N^\MC \rightarrow \mu$ and $(\hat\sigma_N^\MC)^2 \rightarrow \sigma^2$ almost surely, and   $\bP\left(\mu \in \CI^\MC_N(1-\alpha)\right) \rightarrow 1-\alpha$, both as $N\rightarrow \infty$.
        \item $\bE[(\hat\mu_N^\MC - \mu)^2] = \Var[\hat\mu_N^\MC] = \sigma^2/N$.
      
    \end{itemize}
\end{prop}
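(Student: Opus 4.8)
The plan is to treat the summands $S_i := S(\DM(z_i))$ as i.i.d.\ copies of a single random variable $S_1$ with $\bE[S_1]=\mu$ and $\Var[S_1]=\sigma^2<\infty$; the assumption $\sigma^2<\infty$ means $S_1\in L^2$, so both $\mu$ and $\bE[S_1^2]=\sigma^2+\mu^2$ are finite. Everything then reduces to the classical large-sample theory for i.i.d.\ averages, and no property of the diffusion map $\DM$ beyond the finite second moment is used.

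I would dispatch the second bullet first, since it is purely algebraic and exact. Unbiasedness $\bE[\hat\mu_N^\MC]=\mu$ follows from linearity of expectation, which immediately gives $\bE[(\hat\mu_N^\MC-\mu)^2]=\Var[\hat\mu_N^\MC]$; independence of the $z_i$ then yields $\Var[\hat\mu_N^\MC]=N^{-2}\sum_{i=1}^N\Var[S_i]=\sigma^2/N$.

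For the first bullet I would proceed in three steps. (1) The strong law of large numbers (SLLN) gives $\hat\mu_N^\MC\to\mu$ almost surely. (2) For the sample variance, write $(\hat\sigma_N^\MC)^2=\frac{N}{N-1}\bigl(\frac1N\sum_i S_i^2-(\hat\mu_N^\MC)^2\bigr)$ and apply the SLLN separately to $\frac1N\sum_i S_i^2\to\bE[S_1^2]$ (valid since $S_1\in L^2$) and to $(\hat\mu_N^\MC)^2\to\mu^2$ (from step (1) and continuity of $x\mapsto x^2$); since $\frac{N}{N-1}\to1$, this gives $(\hat\sigma_N^\MC)^2\to(\sigma^2+\mu^2)-\mu^2=\sigma^2$ almost surely. (3) For coverage, the central limit theorem gives $\sqrt{N}(\hat\mu_N^\MC-\mu)/\sigma\Rightarrow\bN(0,1)$; combining with $\hat\sigma_N^\MC\to\sigma$ a.s.\ (hence in probability) via Slutsky's theorem yields $\sqrt{N}(\hat\mu_N^\MC-\mu)/\hat\sigma_N^\MC\Rightarrow\bN(0,1)$, so $\bP(\mu\in\CI_N^\MC(1-\alpha))=\bP(|\sqrt{N}(\hat\mu_N^\MC-\mu)/\hat\sigma_N^\MC|\le z_{1-\alpha/2})\to 2\Phi(z_{1-\alpha/2})-1=1-\alpha$.

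There is no substantive obstacle here, as the result is a textbook consequence of the SLLN, CLT, and Slutsky; the only points requiring care are edge cases and technical hypotheses. The Slutsky/CLT step implicitly assumes $\sigma^2>0$, so I would dispose of the degenerate case $\sigma^2=0$ separately (then $S_1=\mu$ a.s., both estimators equal $\mu$, the interval collapses to $\{\mu\}$, and the coverage statement holds trivially). I would also flag that the identity $\Var[\hat\mu_N^\MC]=\sigma^2/N$ is exact for every $N$, whereas the coverage claim is necessarily asymptotic, since for finite $N$ the studentized statistic is not exactly normal.
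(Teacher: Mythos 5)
Your proof is correct and takes essentially the same route as the paper's: the strong law of large numbers for $\hat\mu_N^\MC$ and for the sample variance (via the same $\tfrac{N}{N-1}$ second-moment decomposition), the CLT plus Slutsky's theorem for the coverage statement, and unbiasedness plus independence for the exact identity $\Var[\hat\mu_N^\MC]=\sigma^2/N$. Two minor points in your favor: you explicitly dispose of the degenerate case $\sigma^2=0$, which the paper's Slutsky step silently assumes away, and your variance decomposition is stated correctly, whereas the paper's displayed formula contains a typo (it writes $\tfrac{N}{N-1}(\hat\sigma_N^\MC)^2$ where $\tfrac{N}{N-1}(\hat\mu_N^\MC)^2$ is meant).
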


In words, the above proposition shows: (i) \textit{Correctness}: The standard Monte Carlo estimator $\hat \mu_N^\MC$ converges to the true value, and the coverage probability of $\CI^\MC_N(1-\alpha)$ converges to the nominal level $(1-\alpha)$. (ii) \textit{Reliability}: The expected squared error of $\hat \mu_N^\MC$ equals the variance of a single sample divided by sample size. The confidence interval has width approximately $2\sigma z_{1-\alpha/2} / \sqrt{N}$.

\begin{proof}[Proof of Proposition \ref{prop:MC-guarantee}]
Let $S_i \coloneq S\!\bigl(\DM(z_i)\bigr)$ for $i=1,\dots,N$.  
Because the noises $z_i$ are drawn independently from $\bN(0,I)$, the random variables $S_1,S_2,\dots$ are independent and identically distributed with mean $\mu$ and variance $\sigma^{2}<\infty$.

\paragraph{Consistency.}
By the \emph{strong law of large numbers},
\[
\hat\mu_N^\MC \;=\;\frac1N\sum_{i=1}^N S_i
\;\xrightarrow{\text{a.s.}}\;\mu .
\]
The sample variance estimator
\[
(\hat\sigma_N^\MC)^2
\;=\;
\frac1{N-1}\sum_{i=1}^N\!\bigl(S_i-\hat\mu_N^\MC\bigr)^{2}
\;=\; \frac1{N-1}\sum_{i=1}^N S_i^2 - \frac{N}{N-1} (\hat\sigma_N^\MC)^2.
\]
The first term converges to $\bE[S_1^2]$ almost surely by the law of large numbers, the second term converges to $\mu^2$ almost surely as shown above. Therefore 
$(\hat\sigma_N^\MC)^2\xrightarrow{\text{a.s.}}\sigma^{2}$.

\paragraph{Asymptotic normality and coverage.}
The classical \emph{central limit theorem} states that
\[
\sqrt{N}\,\frac{\hat\mu_N^\MC-\mu}{\sigma}
\;\xrightarrow{d}\;
\bN(0,1).
\]
Replacing the unknown $\sigma$ by the consistent estimator $\hat\sigma_N^\MC$ and applying \emph{Slutsky’s theorem} yields
\[
\sqrt{N}\,\frac{\hat\mu_N^\MC-\mu}{\hat\sigma_N^\MC}
\;\xrightarrow{d}\;
\bN(0,1).
\]
Hence
\[
\bP\!\left(
\mu\in\CI_N^\MC(1-\alpha)
\right)
=
\bP\!\Bigl(
\bigl|\sqrt{N}(\hat\mu_N^\MC-\mu)/\hat\sigma_N^\MC\bigr|
\le z_{1-\alpha/2}
\Bigr)
\;\longrightarrow\;
1-\alpha .
\]

\paragraph{Mean‐squared error.}
Because $\hat\mu_N^\MC$ is the average of $N$ i.i.d.\ variables,
\[
\Var\!\bigl[\hat\mu_N^\MC\bigr]
= \frac{\sigma^{2}}{N}.\]
Moreover, since $\hat\mu_N^\MC$ is unbiased  ($\bE\!\bigl[\hat\mu_N^\MC\bigr] =\mu$), its mean-squared error 
\[
\bE\bigl[(\hat\mu_N^\MC-\mu)^{2}\bigr] 
=
\bE\bigl[(\hat\mu_N^\MC-\bE[\hat\mu_N^\MC])^{2}\bigr] 
=
\frac{\sigma^{2}}{N}.
\]
This completes the proof. \qedhere
\end{proof}

\subsubsection{Antithetic Monte Carlo Estimator:}
\begin{prop}\label{prop:AMC-guarantee}
    Denote $\bE[S(\DM(z))]$ by $\mu$, $\Var[S(\DM(z))]$ by $\sigma^2$, and $\Cov(S_1^+, S_1^-)$ by $\rho$. Assuming $\sigma^2 < \infty$, then we have:
    \begin{itemize}
        \item $\hat \mu_N^\AMC \rightarrow \mu$ and $(\hat\sigma_N^\AMC)^2 \rightarrow (1+\rho)\sigma^2/2$ almost surely as $N \rightarrow \infty$, 
        \item $\bE[(\mu_N^\AMC - \mu)^2] = \Var[\mu_N^\AMC] = \sigma^2(1+\rho)/N$. 
        \item $\bP\left(\mu \in \CI^\AMC_N(1-\alpha)\right) \rightarrow 1-\alpha$ as $N\rightarrow \infty$.
    \end{itemize}
\end{prop}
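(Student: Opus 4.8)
The plan is to reduce the entire statement to Proposition~\ref{prop:MC-guarantee} by recognizing the AMC estimator as an ordinary Monte Carlo estimator applied to the i.i.d.\ sequence of pair-averages $\bar S_1,\dots,\bar S_K$. First I would verify that these are indeed i.i.d.: the driving noises $z_1,\dots,z_K$ are drawn independently from $\bN(0,I)$, so the pairs $(z_i,-z_i)$ are independent across $i$, and each $\bar S_i=\tfrac12\bigl(S(\DM(z_i))+S(\DM(-z_i))\bigr)$ is a fixed measurable function of $z_i$ alone; hence the $\bar S_i$ are independent and identically distributed.

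Next I would compute the first two moments of $\bar S_i$. Because $z\overset{d}{=}-z$ for $z\sim\bN(0,I)$, we have $\DM(z)\overset{d}{=}\DM(-z)$, so $S_i^+$ and $S_i^-$ share the common mean $\mu$ and variance $\sigma^2$. This gives $\bE[\bar S_i]=\tfrac12(\mu+\mu)=\mu$, and, writing $\rho=\Corr(S_i^+,S_i^-)$ so that $\Cov(S_i^+,S_i^-)=\rho\sigma^2$,
\[
\Var(\bar S_i)=\tfrac14\bigl(\sigma^2+\sigma^2+2\rho\sigma^2\bigr)=\frac{(1+\rho)\sigma^2}{2}=:\tau^2,
\]
which is finite since $\sigma^2<\infty$ and $\rho\in[-1,1]$.

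With these two facts in hand, I would invoke Proposition~\ref{prop:MC-guarantee} verbatim, treating $\{\bar S_i\}_{i=1}^K$ as the i.i.d.\ summands and $K=N/2$ as the sample size. This immediately yields $\hat\mu_N^\AMC\to\mu$ almost surely; $(\hat\sigma_N^\AMC)^2\to\tau^2=(1+\rho)\sigma^2/2$ almost surely, since $(\hat\sigma_N^\AMC)^2$ is precisely the sample variance of the $\bar S_i$; and $\Var[\hat\mu_N^\AMC]=\bE[(\hat\mu_N^\AMC-\mu)^2]=\tau^2/K=(1+\rho)\sigma^2/N$, which matches the three displayed claims.

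The only point requiring care---and the step I would flag as the main ``obstacle,'' though it is really just a bookkeeping check---is that the CI half-width stated for AMC, namely $z_{1-\alpha/2}\sqrt{2(\hat\sigma_N^\AMC)^2/N}$, coincides with the ordinary-MC half-width $z_{1-\alpha/2}\sqrt{(\hat\sigma_N^\AMC)^2/K}$ for the sequence $\{\bar S_i\}$, because $K=N/2$ forces $2/N=1/K$. Once this identity is noted, the coverage conclusion $\bP(\mu\in\CI_N^\AMC(1-\alpha))\to1-\alpha$ transfers directly from the coverage statement of Proposition~\ref{prop:MC-guarantee}, completing the argument. I should also remark that the literal reading ``$\rho=\Cov(S_1^+,S_1^-)$'' in the proposition statement is a typo for the correlation; only the correlation interpretation makes $\Var(\bar S_i)=(1+\rho)\sigma^2/2$ hold, and it is consistent with the definition of $\rho$ in the main text.
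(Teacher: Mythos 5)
Your proposal is correct and follows essentially the same route as the paper's proof: both treat $\bar S_1,\dots,\bar S_K$ as i.i.d.\ pair-averages with mean $\mu$ and variance $(1+\rho)\sigma^2/2$, then reduce all three claims to Proposition~\ref{prop:MC-guarantee} applied with sample size $K=N/2$. You additionally spell out two details the paper leaves implicit---the identity $2/N=1/K$ that makes the stated AMC interval coincide with the MC interval for the $\bar S_i$, and the observation that the statement's ``$\rho=\Cov(S_1^+,S_1^-)$'' must be read as the correlation (as defined in the main text), since the variance computation uses $\Cov(S_i^+,S_i^-)=\rho\sigma^2$---both of which are accurate and strengthen the write-up.
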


\begin{proof}\label{proof: proof AMC-guarantee}
Let $N=2K$ and generate independent antithetic noise pairs
$\bigl(z_i,-z_i\bigr)$ for $i=1,\dots,K$.  
Recall
\[
S_i^+ \;=\; S\!\bigl(\DM(z_i)\bigr),
\quad
S_i^- \;=\; S\!\bigl(\DM(-z_i)\bigr),
\quad
\bar S_i \;=\;\tfrac12\bigl(S_i^+ + S_i^-\bigr),
\qquad
i=1,\dots,K .
\]
Because the pairs are independent and identically distributed (i.i.d.),
the random variables $\bar S_1,\dots,\bar S_K$ are i.i.d.\ with
\[
\bE[\bar S_1] \;=\; \mu\]

and
\begin{align}
\Var[\bar S_1]
&=\Var\!\Bigl[\tfrac12\bigl(S_1^{+}+S_1^{-}\bigr)\Bigr] \\[3pt]
&=\Bigl(\tfrac12\Bigr)^{2}\,
  \Var\!\left[S_i^{+}+S_i^{-}\right] \\[3pt]
&=\frac14\Bigl(
     \Var[S_i^{+}]
     +\Var[S_i^{-}]
     +2\,\Cov(S_i^{+},S_i^{-})
   \Bigr) \\[3pt]
&=\frac14\Bigl(
     \sigma^{2}
     +\sigma^{2}
     +2\rho\sigma^{2}
   \Bigr) \\[3pt]
&=\frac{1+\rho}{2}\,\sigma^{2}.
\end{align}
where we have written $\Cov(S_i^+,S_i^-)=\rho\,\sigma^{2}$ as in the statement.
Set 
\(
\hat\mu_N^\AMC = K^{-1}\sum_{i=1}^{K}\bar S_i
\)
and let $(\hat\sigma_N^\AMC)^2$ be the sample variance of $\bar S_1,\dots,\bar S_K$. All three claims in Proposition~\ref{prop:AMC-guarantee} follow from Proposition ~\ref{prop:MC-guarantee}.

\end{proof}

\subsection{An alternative explanation via the FKG inequality}\label{subsec:fkg}

We also provide an expository discussion highlighting the FKG connection behind negative correlation. Consider the univariate case where a scalar Gaussian $z$ is fed through a sequence of one-dimensional linear maps and ReLU activations. Let $F$ be the resulting composite function. One can show that, regardless of the number of layers or the linear coefficients used, $
\Corr\bigl(F(z),\,F(-z)\bigr)\,\le\,0.$  The proof  relies on the univariate FKG inequality \citep{fortuin1971correlation}, a well-known result in statistical physics and probability theory.

We generalize this result to higher dimensions via \emph{partial monotonicity}, under which negative correlation still holds.

We first formally state and prove the claim for the univariate case in Section \ref{subsec:fkg}.
\begin{prop}[Univariate case]\label{prop:antithetic-correlation}
Let $z\sim\bN(0,1)$ and let $F:\mathbb R\to\mathbb R$ be the output of any
one–dimensional feed-forward network obtained by alternating scalar
linear maps and {\rm ReLU} activations:
\[
h_0(z)=z,\qquad
h_\ell(z)=\relu\!\bigl(w_\ell\,h_{\ell-1}(z)+b_\ell\bigr),\;\;
\ell=1,\dots,L,\qquad
F(z)=h_L(z).
\]
For all choices of depths~$L$ and coefficients $\{w_\ell,b_\ell\}_{\ell=1}^L$,
\[
\Corr\bigl(F(z),\,F(-z)\bigr)\;\le\;0 .
\]
\end{prop}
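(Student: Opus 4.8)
The plan is to reduce the claim to a single application of the univariate FKG inequality stated in Section~\ref{subsec:fkg}, after making one structural observation: although $F$ is built from nonlinear $\relu$ activations, as a scalar function of $z$ it is in fact \emph{monotone}.

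First I would establish this monotonicity by induction on the depth. The base map $h_0(z)=z$ is non-decreasing, and each subsequent map $x\mapsto \relu(w_\ell x + b_\ell)$ is the composition of the non-decreasing function $\relu$ with the affine map $x \mapsto w_\ell x + b_\ell$; it is therefore non-decreasing when $w_\ell \ge 0$ and non-increasing when $w_\ell \le 0$. Since a composition of monotone scalar maps is monotone, $F = h_L$ is monotone, with direction equal to $\operatorname{sign}\bigl(\prod_{\ell=1}^L w_\ell\bigr)$. This is the crux of the argument, and it is what lets FKG apply even though $F$ need not be linear or otherwise ``nice.''

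Next I would observe that it suffices to show $\Cov\bigl(F(z),F(-z)\bigr)\le 0$, since $\Corr$ carries the same sign as $\Cov$ whenever both variances are positive (the correlation is conventionally undefined, or taken to be $0$, in the degenerate case $\Var(F(z))=0$, where the inequality holds vacuously). To bound the covariance I would split on the direction of $F$. If $F$ is non-decreasing, then $z\mapsto F(z)$ is non-decreasing while $z\mapsto -F(-z)$ is \emph{also} non-decreasing, because increasing $z$ decreases $-z$ and hence decreases $F(-z)$. The FKG inequality applied to this pair of non-decreasing functions of the standard Gaussian $z$ yields $\Cov\bigl(F(z),-F(-z)\bigr)\ge 0$, i.e.\ $\Cov\bigl(F(z),F(-z)\bigr)\le 0$. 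If instead $F$ is non-increasing, the same conclusion follows by applying FKG to the non-decreasing pair $-F(z)$ and $F(-z)$.

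The main obstacle is really just the monotonicity observation: the difficulty is recognizing that a scalar $\relu$ network, despite its apparent nonlinearity, collapses to a monotone map in the input $z$. Once that is in hand, FKG closes the argument in essentially one line, and the only remaining care is the trivial handling of the degenerate zero-variance case. I expect the write-up to be short, with the bulk of the exposition devoted to making the monotonicity statement and its composition argument precise.
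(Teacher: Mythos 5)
Your proposal is correct and follows essentially the same route as the paper's proof: establish that a scalar ReLU network is monotone because monotonicity is preserved under composition, then apply the univariate FKG inequality to the non-decreasing pair $\bigl(F(z),\,-F(-z)\bigr)$ (the paper disposes of the non-increasing case by a without-loss-of-generality reduction rather than your explicit second case, which is an immaterial difference). Your explicit handling of the degenerate zero-variance case is a small point of extra rigor the paper omits, but it does not change the argument.
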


\begin{proof}
 An important observation is that monotonicity is preserved under composition: combining one monotonic function with another produces a function that remains monotonic.

Each scalar linear map $x\mapsto w_\ell x+b_\ell$ is monotone:
it is non-decreasing if $w_\ell\ge0$ and non-increasing if $w_\ell<0$.
The ReLU map $x\mapsto\relu(x)=\max\{0,x\}$ is non-decreasing.
Hence the final function $F$ is monotone. Without loss of generality, we assume $F$ is non-decreasing.

FKG inequality guarantees $\Cov_{Z\sim \bN(0,1)}(f(Z),g(Z)) \geq 0$ provided that $f, g$ are non-decreasing. Therefore, $\Cov_{Z\sim \bN(0,1)}(F(Z),F(-Z)) = - \Cov_{Z\sim \bN(0,1)}(F(Z),-F(-Z)) \leq 0$. Since \[\Corr\bigl(F(z),\,F(-z)\bigr) = \frac{\Cov_{Z\sim \bN(0,1)}(F(Z),F(-Z))}{\sqrt{\Var\bigl(F(z)\bigr)\Var\bigl(F(-z)\bigr)}},\] we have $\Corr\bigl(F(z),\,F(-z)\bigr)\leq 0$.
\end{proof}

To generalize the result to higher dimension, we define a function $f:\mathbb R^m\to\mathbb R$ to be \textit{partially monotone} if for each coordinate $j$, holding all other inputs fixed, the map $
t\;\mapsto\;f(x_1,\dots,x_{j-1},t,x_{j+1},\dots,x_m)$
is either non-decreasing or non-increasing. Mixed monotonicity is allowed. For example, $ f(x,y)=x - y $ is non-decreasing in $x$ and non-increasing in $y$, yet still qualifies as partially monotone. We have the following result:
\begin{prop}\label{prop:partially monotone}
    For a diffusion model $\DM: \mathbb R^d \rightarrow \mathbb R^m$ and a summary statistics $S: \mathbb R^m \rightarrow \mathbb R$, if the joint map is partially monotone, then $\Corr(S\circ\DM(Z)), S\circ\DM(-Z)) \leq  0$. 
\end{prop}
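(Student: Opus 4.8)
The plan is to reduce the claim to a multivariate form of the FKG (Harris) inequality for the product Gaussian measure. Write $g := S\circ\DM:\mathbb R^d\to\mathbb R$ for the joint map, so that by hypothesis $g$ is partially monotone: for each coordinate $j$ there is a sign $\epsilon_j\in\{-1,+1\}$ such that, with the other coordinates held fixed, $g$ is non-decreasing in $z_j$ when $\epsilon_j=+1$ and non-increasing when $\epsilon_j=-1$. Since $\Corr$ and $\Cov$ share the same sign (the variances being positive whenever the estimator is nondegenerate, the degenerate case being trivial), it suffices to show $\Cov(g(Z),g(-Z))\le 0$ for $Z\sim\bN(\0,I_d)$.

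First I would remove the mixed monotonicity by a coordinatewise reflection. Let $D=\operatorname{diag}(\epsilon_1,\dots,\epsilon_d)$ and set $h(z):=g(Dz)$. By the choice of signs, $h$ is non-decreasing in every coordinate. Because $D$ is a reflection (so $D=D^{-1}$ and $D$ is orthogonal), the standard Gaussian is invariant under $z\mapsto Dz$: if $Z\sim\bN(\0,I_d)$ then $W:=DZ\sim\bN(\0,I_d)$ as well. Moreover $g(Z)=h(DZ)=h(W)$ and $g(-Z)=h(-DZ)=h(-W)$, so $\Cov(g(Z),g(-Z))=\Cov(h(W),h(-W))$. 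This reduces the problem to a single coordinatewise non-decreasing function $h$ evaluated at $W$ and at $-W$.

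Next I would apply the multivariate FKG/Harris inequality. The two functions $w\mapsto h(w)$ and $w\mapsto h(-w)$ are, respectively, coordinatewise non-decreasing and coordinatewise non-increasing. For a product measure such as $\bN(\0,I_d)$, a non-decreasing and a non-increasing function of the coordinates have non-positive covariance; equivalently $-h(-\,\cdot\,)$ is non-decreasing and has non-negative covariance with $h(\cdot)$ by Harris's inequality, whence $\Cov(h(W),h(-W))\le 0$. To keep the argument self-contained I would derive this $d$-dimensional statement from the univariate FKG inequality already invoked in Appendix~\ref{subsec: proof fkg} by induction on the dimension: conditioning on all but one coordinate reduces a single inductive step to the one-dimensional inequality, and the tower property, together with the fact that the resulting conditional expectations remain monotone in the free coordinates, closes the induction. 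Chaining the three identities then yields $\Cov(g(Z),g(-Z))\le 0$ and hence $\Corr(S\circ\DM(Z),S\circ\DM(-Z))\le 0$.

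The main obstacle I anticipate is the passage from the univariate FKG inequality to its $d$-dimensional counterpart, i.e.\ making the induction rigorous: one must verify that after integrating out a coordinate the conditional expectations stay monotone in the remaining coordinates (so the inductive hypothesis applies) and that the one-dimensional inequality is invoked in the correct direction for a non-decreasing/non-increasing pair. A secondary technical point is integrability---the covariance and variances must be finite for the correlation to be well defined---which I would guarantee by assuming $g(Z)\in L^2$, the natural regularity condition already implicit in writing a correlation.
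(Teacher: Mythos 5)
Your proposal is correct and takes essentially the same route as the paper's proof: reduce covariance to correlation sign, flip signs coordinatewise (the paper's $\widetilde G(z)=G(s_1z_1,\dots,s_dz_d)$ is your $h(z)=g(Dz)$), use the reflection invariance of $\bN(\0,I_d)$ to identify $\Cov(g(Z),g(-Z))$ with the covariance of a coordinatewise non-decreasing and a coordinatewise non-increasing function, and conclude by the multivariate FKG inequality applied to the pair with one function negated. The only difference is cosmetic: you offer to derive the multivariate FKG inequality from the univariate one by an inductive conditioning argument, whereas the paper simply cites the product-measure FKG inequality.
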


Now we prove the general case: 

\begin{proof}[Proof of Proposition \ref{prop:partially monotone}]\label{proof:fkg}
Let $G \coloneq S\circ \DM$.
For each coordinate $j\in[d]$ fix a sign
\[
s_j=\begin{cases}
+1,&\text{if }G\text{ is non--decreasing in }x_j,\\
-1,&\text{if }G\text{ is non--increasing in }x_j.
\end{cases}
\]
Write $s=(s_1,\dots,s_d)\in\{\pm1\}^{d}$ and define, for any $z\in\mathbb R^{d}$,
\[
\widetilde G(z)\;= \widetilde G(z_1, \ldots, z_d)\; \coloneq G(s_1z_1, \ldots, s_dz_d). \]
Similarly, define
\[
\widetilde H(z)\;\coloneq \widetilde G(-z)\; = G(-s_1z_1, \ldots, -s_dz_d).
\]
By construction,
$\widetilde G$ is coordinate-wise non–decreasing
and $\widetilde H$ is coordinate-wise non–increasing.

Because each coordinate of $Z\sim \bN(0,I_d)$ is symmetric, the random vectors
$(s_1Z_1, \ldots, s_dZ_d)$ and $(Z_1, \ldots, Z_d)$ have the same distribution.
Hence
\[
\Cov\bigl(G(Z),G(-Z)\bigr)
=
\Cov\bigl(\widetilde G(Z),\widetilde H(Z)\bigr).
\]

The multivariate FKG inequality \citep{fortuin1971correlation} for product Gaussian measures states that, when $U$ and $V$ are coordinate-wise non–decreasing,
$\Cov\bigl(U(Z),V(Z)\bigr)\ge0$.
Apply it to the pair $\bigl(\widetilde G,-\widetilde H\bigr)$:
both components are non–decreasing, so
\[
\Cov\!\bigl(\widetilde G(Z),-\widetilde H(Z)\bigr)\;\ge\;0
~~\Longrightarrow~~
\Cov\!\bigl(\widetilde G(Z),\widetilde H(Z)\bigr)\;\le\;0 ~~\Leftrightarrow~~ \Cov\!\bigl(G(Z),G(-Z)\bigr)\;\le\;0.
\]
Therefore $\Corr\bigl(G(Z),G(-Z)\bigr)\;\le\;0$, as claimed.
\end{proof}

Proposition \ref{prop:partially monotone} relies on checking partial-monotonicity. If $S$ is partially monotone (including any linear statistic), then the conditions of Proposition \ref{prop:partially monotone} are satisfied by, e.g., Neural Additive Models \citep{agarwal2021neural} and Deep Lattice Networks \citep{you2017deep}.
Unfortunately, partial-monotonicity is in general hard to verify.

While popular diffusion architectures like DiT and U-Net lack partial monotonicity, we include Proposition \ref{prop:partially monotone} as an expository attempt to highlight the FKG connection behind negative correlation.

\subsubsection{FKG inequality for  DDIM}\label{subsubsec:idealized fkg}

We first analyze the idealized DDIM process using the FKG inequality. A single step of the idealized DDIM is
\begin{equation}\label{eqn:idealized_ddim_one}
F_t(\x) = a_t \x + c_t s_{t}(\x),
\end{equation}
where $s_t$ denotes the score of $p_t$, and $a_t, c_t \geq 0$ are deterministic coefficients obtained by rearranging \eqref{eqn:DDIM}.

The idealized DDIM trajectory is given by the composition:
\begin{equation}\label{eqn:idealized_ddim_whole}
   \DDIMIdeal =  F_1 \circ F_2 \circ F_3 \circ \cdots \circ F_T.
\end{equation}

We present two results that give sufficient conditions under which the local (one-step) DDIM update \eqref{eqn:idealized_ddim_one} and the global DDIM procedure \eqref{eqn:idealized_ddim_whole} generate negative correlation.

We need the following definition:
\begin{definition}[MTP$_2$ with curvature bound]\label{def:MTP2}
Let $p:\mathbb{R}^d\to(0,\infty)$ be a $C^2$ probability density. We say that $p$ is multivariate totally positive of order 2 \emph{(MTP$_2$) with curvature bound $\kappa\ge 0$} if the second partial derivatives of $\log p$ satisfy, for all $x\in\mathbb{R}^d$,
\[
\partial_{ij}^2 \log p(x)\ \ge\ 0\quad\text{for all }i\neq j,
\qquad\text{and}\qquad
\partial_{ii}^2 \log p(x)\ \ge\ -\kappa\quad\text{for all }i.
\]
\end{definition}
MTP$_2$ distributions are widely studied in statistics and probability \cite{karlin1980classes}. In particular, isotropic Gaussian distributions are MTP$_2$.

\begin{prop}[One-step DDIM]\label{prop:fkg-onestep}
\label{lem:isotone-step}
With all the notations as above, fix $t>0$,
assume the distribution $p_t$ is MTP$_2$ with curvature bound $\kappa_t$, and 
\[
a_t\ \ge\ c_t\kappa_t,
\]
then $F_t$ is coordinatewise nondecreasing: $\partial_{x_j}F_{t,i}(x)\ge 0$ for all $i,j$ and all $x$. Moreover, for any linear statistics $S$, we have $\Corr(S\circ F_t(Z)), S\circ F_t(-Z)) \leq  0$.
\end{prop}

\begin{proof}
Differentiate componentwise:
\[
\frac{\partial F_{t,i}}{\partial x_j}(x)\;=\;a_t\,\delta_{ij}\;+\;c_t\,\partial_{ij}^2 \log p_t(x).
\]
For $i\neq j$ this is $\ge 0$ by $c_t\ge 0$ and the mixed second derivative condition. For $i=j$,
\[
\frac{\partial F_{t,i}}{\partial x_i}(x)\ \ge\ a_t+c_t\,\big(-\kappa_t\big)\ \ge\ 0.
\]

Let $G = S\circ F_t$. Since $F_t$ is coordinately non-decreasing and $S$ is linear, we have that $G = S\circ F_t$ is partially monotone. Thus 
$\Corr(S\circ F_t(Z)), S\circ F_t(-Z)) \leq  0$ by the multivariate FKG inequality and Proposition \ref{prop:partially monotone}.
\end{proof}

The next result shows that the idealized DDIM trajectory continues to generate negative correlation, as long as the curvature bound holds uniformly over all steps.

\begin{prop}[Global DDIM chain]\label{prop:fkg-global}
With all the notations as above. Assume the distribution $p_t$ is MTP$_2$ with curvature bound $\kappa_t$, and 
\[
a_t\ \ge\ c_t\kappa_t
\]
for every $t$. Then $\DDIMIdeal =  F_1 \circ F_2 \circ F_3 \circ \cdots \circ F_T$ is coordinatewise nondecreasing. Moreover, for any linear statistics $S$, we have $\Corr(S\circ \DDIMIdeal(Z)), S\circ \DDIMIdeal(-Z)) \leq  0$.
\end{prop}

\begin{proof}
Since each $F_t$ is coordinatewise nondecreasing, their composition
$F_1 \circ \cdots \circ F_T$ is also coordinatewise nondecreasing.
Therefore, the composition $S\circ \DDIMIdeal$ is partially monotone, thus $\Corr(S\circ \DDIMIdeal(Z)), S\circ \DDIMIdeal(-Z)) \leq  0$ as claimed.
\end{proof}

\begin{remark}
All these results remain valid if we replace the exact score $\nabla \log p_t$ by the neural network $s_\theta$, up to a constant rescaling.
\end{remark}

\begin{remark}
We note that the condition $a_t \ge c_t \kappa_t$ is  satisfied when $\kappa_t = O(1)$ and the probability flow ODE is discretized with sufficiently small step size. For the deterministic DDIM update under a variance-preserving schedule,
\[
a_t = \sqrt{\frac{\alpha_{t-1}}{\alpha_t}},\qquad
c_t = \sqrt{1-\alpha_{t-1}}
- \sqrt{\frac{\alpha_{t-1}}{\alpha_t}}\sqrt{1-\alpha_t}.
\]
Thus $a_t$ is of unit order, and $c_t$ captures the difference $|\alpha_{t-1}-\alpha_t|$. In the continuous-time limit with step size $\Delta t$ and a smooth function $\alpha(t)$, a Taylor expansion yields $a_t = 1 + O(\Delta t)$ and $c_t = O(\Delta t)$; thus $a_t \ge c_t \kappa_t$ holds  when $\Delta t$ is sufficiently small and $\kappa_t$ stays bounded.
  \end{remark}

\section{Additional Experiments}\label{sec:additional experiments}
\subsection{Correlation Experiment Setup on Other Models}\label{subsec: model Training}

\subsubsection{Consistency Model}
We study both unconditional and conditional consistency models using publicly available checkpoints provided in Hugging Face’s Diffusers library \citep{von-platen-etal-2022-diffusers}: openai/diffusers-cd\_imagenet64\_l2, openai/diffusers-cd\_cat256\_l2, and openai/diffusers-cd\_bedroom256\_l2. 

For unconditional models, we evaluate pre-trained models on LSUN-Cat and LSUN-Bedrooom ~\citep{yu2015lsun}. For each dataset, 1,600 image pairs are generated under both PN and RR noise sampling with 1 DDIM steps.

For conditional models, we evaluate pre-trained models on ImageNet on 32 classes ~\citep{deng2009imagenet}. For each class, 100 PN and RR pairs are generated with 1 steps.

\subsubsection{Generative Models Beyond Diffusion}
A mentioned in Section \ref{sec:negative correlation}, we evaluate correlation on a VAE and flow-based models. Unlike the diffusion models, which we use public pre-trained checkpoints, both of these models required explicit training before evaluation. Here we describe the training setup and generation procedure.

\paragraph{VAE:}

We train the unconditional VAE on MNIST following the publicly available implementation provided in \citet{Francis2022Repo}. The VAE consists of a simple convolutional encoder–decoder architecture with a Gaussian latent prior. After training, we generate 1,600 paired samples under PN and RR schemes, respectively. 

\paragraph{Glow:}

We train the class-conditional Glow model~\citep{kingma2018glow} on CIFAR-10 using the normflows library. The architecture follows a multiscale normalizing flow design and Glow blocks. After training, we generated 100 PN and 100 RR pairs per class.

\subsection{Image Editing}\label{subsec: image editing}

We apply our antithetic initial noise strategy to the image editing algorithm FlowEdit \citep{kulikov2024flowedit}, which edits images toward a target text prompt using pre-trained flow models.

In Algorithm 1 of FlowEdit, at each timestep, the algorithm samples $n_{\text{avg}}$ random noises $Z \sim \mathbb{N}(0,1)$ to create noisy versions of the source image, computes velocity differences between source and target conditions, and averages these directions to drive the editing process.

In the $n_{\text{avg}}=2$ setting, we replace the two independent random samples with antithetic noises: for each $Z\sim\mathbb{N}(0,I)$ we also use $-Z$ and average the two velocity updates.

We compare on $76$ prompts provided in FlowEdit’s official GitHub repository. For each prompt, we generate 10 images using both PN and RR. All other parameters follow the repository defaults. We evaluate performance using CLIP(semantic text–image alignment; higher is better) and LPIPS (perceptual distance to the source; lower is better), which jointly measure text adherence and structure preservation.

As a result, PN improves the mean CLIP score, winning in 56.59\% of all pairwise comparisons. It also reduces LPIPS, winning in 81.58\% of all pairwise comparisons.

\subsection{Implementation Detail}\label{subsec: implement detail}

We use pretrained models from Hugging Face’s Diffusers library \citep{von-platen-etal-2022-diffusers}: google/ddpm-church-256, google/ddpm-cifar10-32, and google/ddpm-celebahq-256 for unconditional diffusion; Stable Diffusion v1.5 for text-to-image; and the original repository from \citep{chung2023diffusion} for guided generation. Experiments were run on eight NVIDIA L40 GPUs. The most intensive setup—Stable Diffusion—takes about five minutes to generate 100 images for a single prompt.

\subsection{Additional experiments on pixel‐wise similarity}\label{subsec: additional experiment similarity}

\subsubsection{DDPM}

\paragraph{Antithetic sampling setup:} Unlike DDIM, where the sampling trajectory is deterministic once the initial noise is fixed, DDPM adds a random Gaussian noise at every timestep. The update function in DDPM is $$ x_{t-1} = \frac{1}{\sqrt{\alpha_t}}(x_t - \frac{1-\alpha_t}{\sqrt{1-\alpha_t}}\epsilon_\theta(x_t, t)) + \sigma_t z_t$$ and $z_t \sim \bN(0,1)$ if $t > 1$, else $z = 0$. Therefore, in DDPM, antithetic sampling requires not only negating the initial noise but also negating \textit{every noise} $z_t$ added at each iteration. 

Table \ref{tab:DDPM_noise_stats_standard} and Table \ref{tab:DDPM_noise_stats_cent} report the standard Pearson correlations and centralized correlation between pixel values of image pairs produced using the same pre-trained models under PN and RR noise schemes using DDPM (default 1000 steps) across CIFAR10, CelebA-HQ, and LSUN-Church. For each dataset, we follow the same setup explained in Section \ref{sec:negative correlation}. We calculate the pixelwise correction with 1600 antithetic noise pairs and 1600 independent noises. 

Consistent with the behavior observed in DDIM, PN pairs in DDPM samples also exhibit negative correlation. Using the standard Pearson correlation, the mean correlation for PN pairs is strongly negative in all datasets—CIFAR10 ($-0.73$), Church ($-0.45$), and in the more identity-consistent CelebA ($-0.18$). 

The centralized correlation analysis further sharpens this contrast: mean PN correlations are substantially lower (CIFAR10 $-0.80$, CelebA $-0.67$, Church $-0.65$). These results confirm again that PN noise pairs consistently introduce strong negative dependence, while RR pairs remain close to uncorrelated or weakly positive.

\begin{table}[ht]
\centering
\caption{DDPM standard Pearson correlation coefficients for PN and RR pairs}
\label{tab:DDPM_noise_stats_standard}
\begin{tabular}{lcccccccc}
\toprule
\textbf{Statistic} & \multicolumn{2}{c}{\textbf{CIFAR10}} & \multicolumn{2}{c}{\textbf{CelebA}} & \multicolumn{2}{c}{\textbf{Church}}\\
 & PN & RR & PN & RR & PN & RR\\
\midrule
\textbf{Mean}              & -0.73 & 0.04 & -0.18 & 0.29 & -0.45 & 0.11 \\
\hline
\textbf{Min}               & -0.95 & -0.61 & -0.80 & -0.50 & -0.90 & -0.60 \\
\textbf{25th percentile}   & -0.81 & -0.12 & -0.31 & 0.16 & -0.55 & -0.02 \\
\textbf{75th percentile}   & -0.66 & 0.20 & -0.05 & 0.44 & -0.36 & 0.23 \\
\textbf{Max}               & -0.21 & 0.80 & 0.44 & 0.78 & 0.07 & 0.80 \\
\bottomrule
\end{tabular}
\end{table}

\begin{table}[ht]
\centering
\caption{DDPM centralized Pearson correlation coefficients for PN and RR pairs}
\label{tab:DDPM_noise_stats_cent}
\begin{tabular}{lcccccccc}
\toprule
\textbf{Statistic} & \multicolumn{2}{c}{\textbf{CIFAR10}} & \multicolumn{2}{c}{\textbf{CelebA}} & \multicolumn{2}{c}{\textbf{Church}}\\
 & PN & RR & PN & RR & PN & RR\\
\midrule
\textbf{Mean}              & -0.80 & 0.01 & -0.67 & -0.00 & -0.65 & -0.00 \\
\hline
\textbf{Min}               & -0.96 & -0.61 & -0.89 & -0.60 & -0.93 & -0.56 \\
\textbf{25th percentile}   & -0.87 & -0.15 & -0.73 & -0.15 & -0.72 & -0.12 \\
\textbf{75th percentile}   & -0.75 &  0.16 & -0.62 &  0.15 & -0.60 &  0.11 \\
\textbf{Max}               & -0.23 &  0.76 &  0.14 &  0.64 & -0.32 &  0.72 \\
\bottomrule
\end{tabular}
\end{table}

\begin{figure}[htbp!]
    \centering
    \subfigure[CIFAR10]{
        \includegraphics[width=0.31\textwidth]{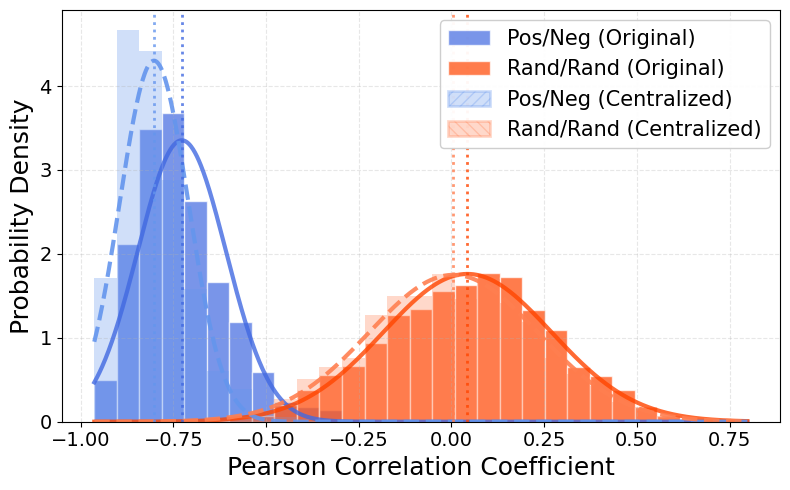}
    }
    \subfigure[LSUN-Church]{
        \includegraphics[width=0.31\textwidth]{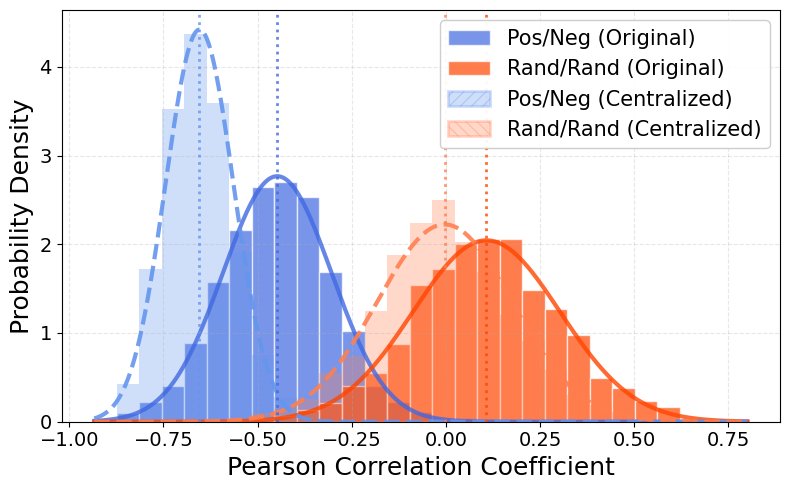}
    }
    \subfigure[CelebA-HQ]{
        \includegraphics[width=0.31\textwidth]{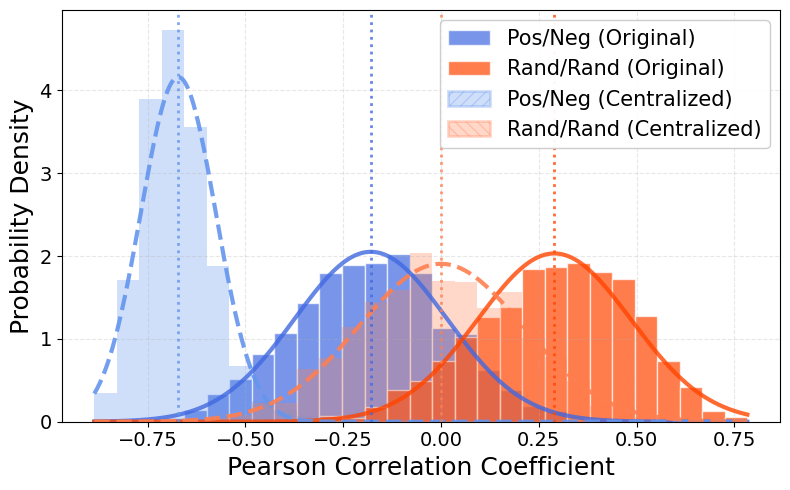}
    }
    \caption{Pearson Correlation histograms for PN and RR pairs across three datasets using DDPM. Dashed lines indicate the mean Pearson correlation for each group.}
    \label{fig:pearson_histograms_DDPM}
\end{figure}

\subsubsection{Diffusion Posterior Sampling (DPS)}

The same pattern persists in the setting of using the diffusion model as a prior for posterior sampling, too, which has been utilized to solve various inverse problems, such as inpainting, super-resolution, and Gaussian deblurring.

Since there is a ground truth image available in the image restoration task, the standard pixel-wise correlation is calculated using the difference between reconstructed images and the corresponding ground truth, and the centralized correlation is calculated using the same definition described in \ref{sec:negative correlation}. Although the overall standard correlation values are shifted up, due to the deterministic conditioning, the posterior nature of sampling—PN pairs still shows significantly lower correlations than RR pairs across all tasks.

For the standard Pearson correlation, the mean PN correlations range from $0.20$ to $0.27$, while RR correlations consistently lie above $0.50$. For the centralized correlation, PN correlations are strongly negative across all tasks (means around $-0.72$). In contrast, RR pairs remain centered near zero (mean correlations around $-0.01$ to $-0.02$).

\begin{table}[htbp!]
\centering
\caption{DPS Pearson correlation coefficients for PN and RR pairs}
\label{tab:DPS_corr_stats_standard}
\begin{tabular}{lcccccc}
\toprule
\textbf{Statistic} & \multicolumn{2}{c}{\textbf{Inpainting}} & \multicolumn{2}{c}{\textbf{Gaussian Deblur}} & \multicolumn{2}{c}{\textbf{Super-resolution}}\\
 & PN & RR & PN & RR & PN & RR\\
\midrule
\textbf{Mean}              & 0.28 & 0.57 & 0.27 & 0.57 & 0.20 & 0.53 \\
\midrule
\textbf{Min}               & -0.14 & 0.05 & -0.22 & 0.13 & -0.34 & 0.01 \\
\textbf{25th percentile}   & 0.19 & 0.52 & 0.17 & 0.51 & 0.10 & 0.47 \\
\textbf{75th percentile}   & 0.36 & 0.63 & 0.36 & 0.63 & 0.30 & 0.60 \\
\textbf{Max}               & 0.66 & 0.83 & 0.62 & 0.83 & 0.64 & 0.81 \\
\bottomrule
\end{tabular}
\end{table}

\begin{table}[htbp!]
\centering
\caption{DPS centralized correlation coefficients for PN and RR pairs}
\label{tab:DPS_corr_stats_cent}
\begin{tabular}{lcccccc}
\toprule
\textbf{Statistic} & \multicolumn{2}{c}{\textbf{Inpainting}} & \multicolumn{2}{c}{\textbf{Gaussian Blur}} & \multicolumn{2}{c}{\textbf{Super-resolution}}\\
 & PN & RR & PN & RR & PN & RR\\
\midrule
\textbf{Mean}              & -0.72 & -0.02 & -0.71 & -0.01 & -0.72 & -0.01 \\
\midrule
\textbf{Min}               & -0.86 & -0.43 & -0.84 & -0.36 & -0.87 & -0.35 \\
\textbf{25th percentile}   & -0.76 & -0.07 & -0.75 & -0.07 & -0.76 & -0.08 \\
\textbf{75th percentile}   & -0.69 &  0.04 & -0.67 &  0.05 & -0.69 &  0.05 \\
\textbf{Max}               & -0.14 &  0.47 &  0.01 &  0.48 & -0.14 &  0.43 \\
\bottomrule
\end{tabular}
\end{table}

\begin{figure}[htbp!]
    \centering
    \subfigure[Inpainting]{
        \includegraphics[width=0.31\textwidth]{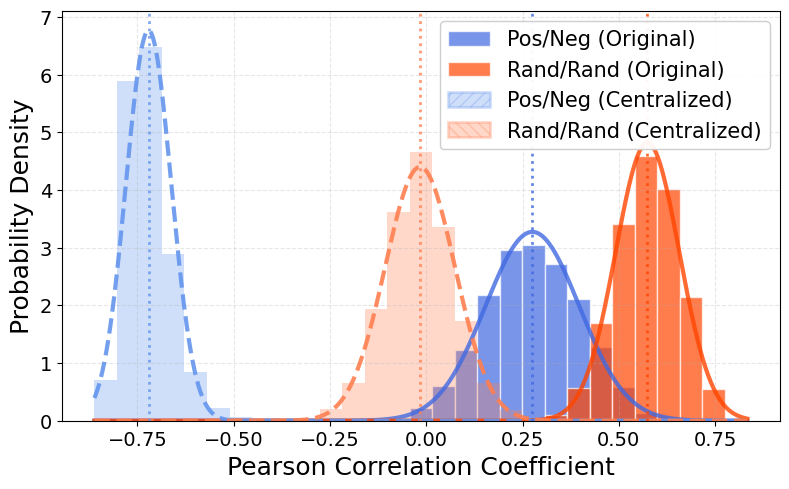}
    }
    \subfigure[Super-resolution]{
        \includegraphics[width=0.31\textwidth]{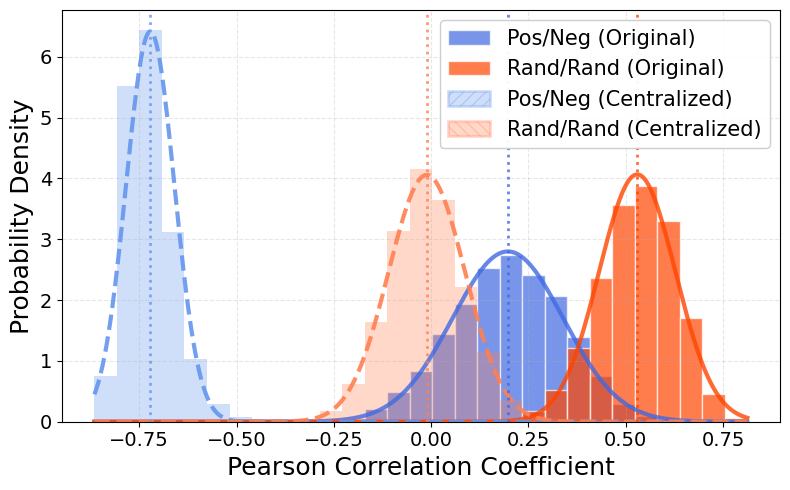}
    }
    \subfigure[Gaussian Deblur]{
        \includegraphics[width=0.31\textwidth]{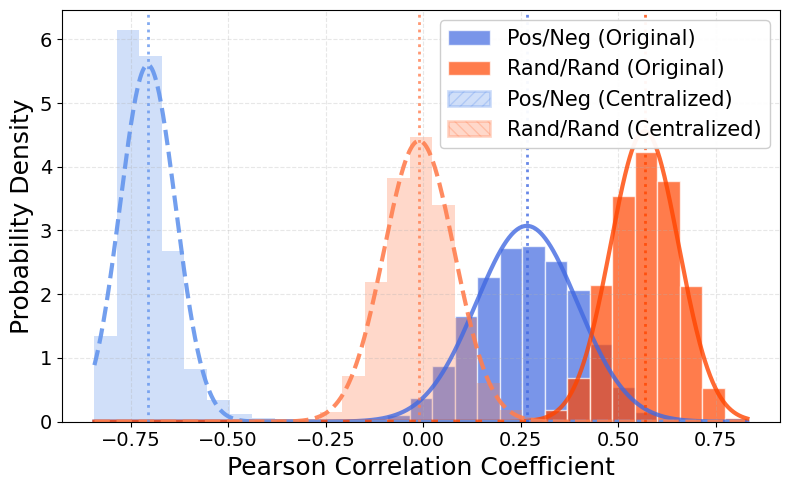}
    }
    \caption{Pearson Correlation histograms for PN and RR pairs across three tasks in DPS. Dashed lines indicate the mean Pearson correlation for each group.}
    \label{fig:pearson_histograms_DPS}
\end{figure}

\subsubsection{Wasserstein distance}

To complement the correlation-based analyses in the main text, we also evaluate similarity using the Wasserstein distance, a measure of distributional discrepancy. It quantifies the minimal “effort” required to transform one probability distribution into another, which means lower Wasserstein values indicate closer alignment between the two distributions, while higher values indicate larger differences.

To calculate Wasserstein distances, we treat each generated image pair as a sample from a distribution under the sampling scheme, PN or RR. As shown in Table \ref{tab:full_wasserstein_results}, PN consistently exhibits larger Wasserstein distances than RR across nearly all models, and the differences are statistically significant at all $p < 10^{-10}$. This implies that antithetic initial noises lead to more divergent distributions than random sampling and confirms our results from the correlation analysis.

\begin{table}[htbp]
\centering
\caption{Wasserstein Distance, shown are means (SD) with corresponding t-statistics and p-values.}
\label{tab:full_wasserstein_results}
\begin{tabular}{l|cc|c}
\toprule
\textbf{Model} & \multicolumn{2}{c|}{\textbf{Wasserstein Distance}} & \textbf{t-stats (p)} \\
 & PN & RR &  \\
\hline
LSUN-Church & 0.16 (0.10) & 0.12 (0.07) & $t=12.09,\;p=0$ \\
CIFAR-10 & 0.19 (0.14) & 0.15 (0.09) & $t=8.30,\;p=0$ \\
CelebA-HQ & 0.17 (0.11) & 0.12 (0.07) & $t=14.50,\;p=0$ \\
\midrule
SD 1.5 & 0.10 (0.06) & 0.09 (0.04) & $t=33.11,\;p=0$ \\
DiT & 0.19 (0.15) & 0.14 (0.12) & $t=14.17,\;p=0$ \\
\midrule
VAE & 0.07 (0.05) & 0.05 (0.03) & $t=13.24,\;p=0$ \\
Glow & 0.15 (0.09) & 0.12 (0.08) & $t=7.92,\;p=0$ \\
\bottomrule
\end{tabular}
\end{table}

\subsubsection{Influence of the CFG Scale on Correlation}
We extend our pixel-correlation analysis on conditional diffusion models across various Classifier-Free Guidance (CFG) scales. CFG~\citep{ho2022classifier} is a technique that strengthens conditioning in diffusion models by interpolating between conditional and unconditional score estimates. The guidance scale controls this interpolation strength, with higher values producing samples more aligned with the conditioning signal such as prompt or class.

We conducted experiments on both Stable Diffusion 1.5 (SD1.5) and DiT using CFG scales \{1, 3, 5, 7, 9\}. For each setting, we generated 100 PN and RR noise pairs across 25 prompts/classes, measuring both standard and centralized correlations between the image samples. 

\begin{figure}[htbp!]
    \centering
    \subfigure[Stable Diffusion]{
        \includegraphics[width=0.4\textwidth]{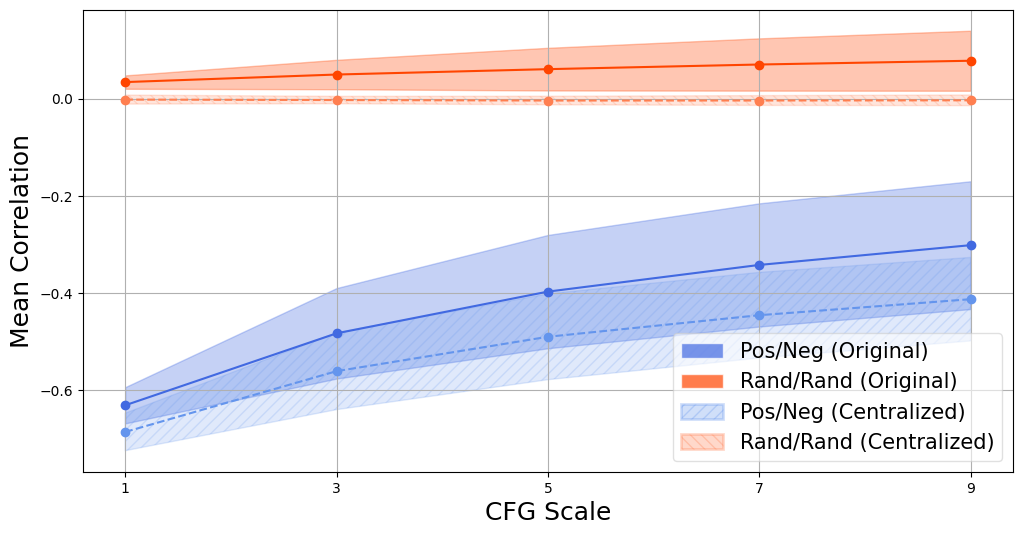}
    }
    \subfigure[DiT]{
        \includegraphics[width=0.4\textwidth]{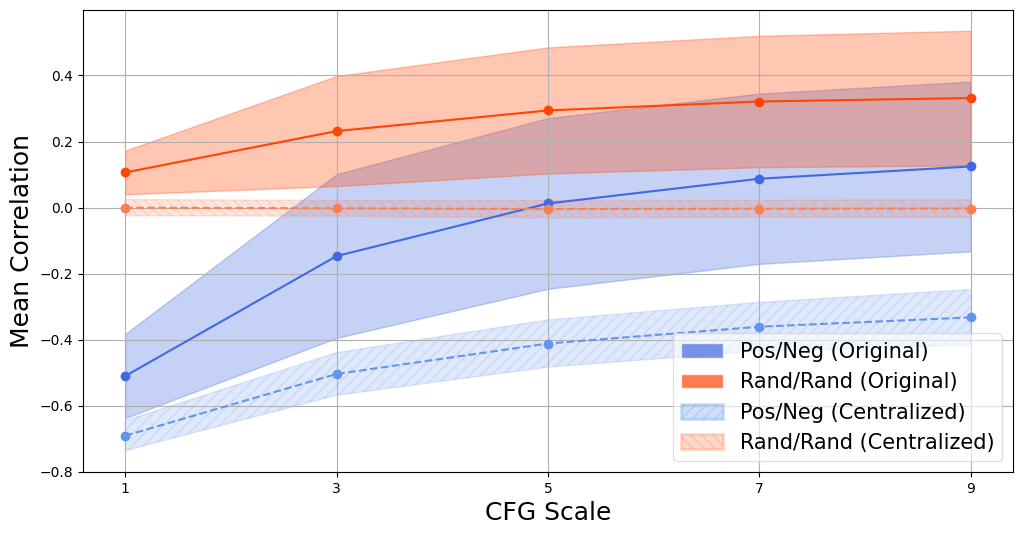}
    }
    \caption{Average pixel correlation versus CFG scale for SD1.5 and DiT. Both standard and centralized correlations are shown. Shaded regions indicate standard deviation of mean correlation across 25 prompts/classes}
    \label{fig:corr_vs_cfg}
\end{figure}

As show in Figure~\ref{fig:corr_vs_cfg}, for all CFG scales, negated noise continues to produce strongly negatively correlated samples. At the same time, the standard correlation for both PN and RR grows as the CFG scale increases. This can be explained as follows: larger CFG values pull samples more strongly toward the conditioning signal (prompt/class), effectively shrinking the space of plausible outputs. As samples concentrate more tightly around the target distribution, they become more similar to one another. Thus, the correlations of both PN and RR pairs increase, while PN remains much more negative than RR.

\subsubsection{Partial Negation of Noise Vectors}

We conducted experiments to evaluate how localized antithetic noise influences outputs of unconditional diffusion models trained on LSUN-Church and LSUN-Bedroom. For each dataset, we generated 200 image pairs by sampling $Z_1 \sim \mathbb{N}(0,I)$ and constructing $Z_2$ by negating the upper half of $Z_1$ while leaving the lower half unchanged.

We calculate the Pearson standard correlation and the centered correlation between corresponding halves to quantify spatial correspondence induced by the antithetic manipulation. As shown in Figure~\ref{fig:top_bot_corr}, the top halves exhibit sharply negative correlations, and the bottom halves are highly positively correlated and visually almost identical (Figure ~\ref{fig:church_top}, ~\ref{fig:bedroom_top}). This shows that the negative correlation effect acts locally in noise space and directly affects the corresponding spatial regions in the generated images. 

\begin{figure}[htbp!]
    \centering
    \subfigure[LSUN-Church]{
        \includegraphics[width=0.4\textwidth]{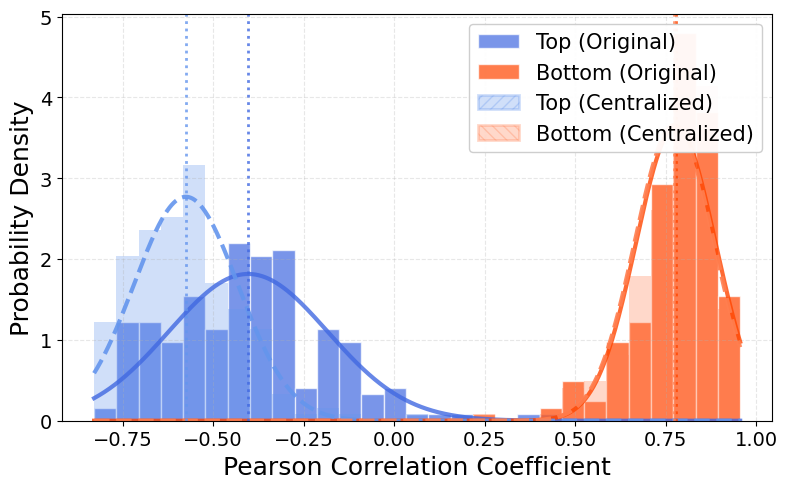}
    }
    \subfigure[LSUN-Bedroom]{
        \includegraphics[width=0.4\textwidth]{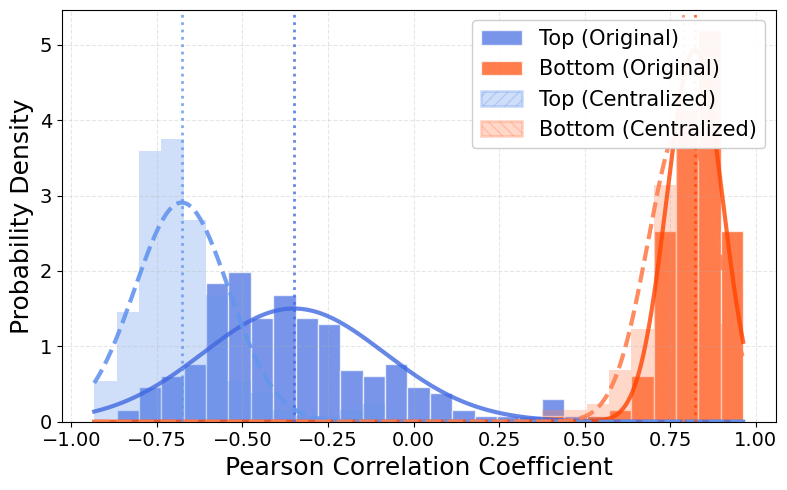}
    }
    \caption{Average pixel correlation of top-half and bottom-half across generated image pairs.}
    \label{fig:top_bot_corr}
\end{figure}

\subsection{Additional experiments on the symmetry conjecture}\label{subsec: additional experiments symmetry}
We run additional experiments to validate the conjecture in Section \ref{subsec:temporal corr}. Using a pretrained score network and a 50-step DDIM sampler, we evaluated both CIFAR-10 and Church. For each dataset, we selected five random coordinates in the $C \times H \times W$ tensor (channel, height, width).  At every chosen coordinate we examined the network output at time steps $t=1,\dots,20$ (small $t$ is close to the final image, large $t$ is close to pure noise). For each $t$ we drew a standard Gaussian sample $\mathbf x\sim\bN(0,I_d)$ and computed the value of $\epsilon_\theta^{(t)}(c\,\mathbf x)$ at the selected coordinate. The resulting plots appear in Figures \ref{fig:cirar_0_21_0}–\ref{fig:church_1_208_237}.

To measure how much a one–dimensional function resembles an antisymmetric shape, we introduce the \emph{affine antisymmetry score}
\begin{align*}
    \text{AS}(f) \coloneq 1 - \frac{\int_{-1}^1 (0.5f(-x) + 0.5f(x) - \bar f)^2}{\int_{-1}^1 (f(x) - \bar f)^2}
\end{align*}
where $\bar f := \int_{-1}^1 f \diff x/2$ is the average value of $f$ on $[-1,1]$.

The integral’s numerator is the squared average distance between the antithetic mean $0.5\,f(-x)+0.5\,f(x)$ and the overall mean $\bar f$, while the denominator is the full variance of $f$ over the interval.

The antisymmetry score is well-defined for every non-constant function $f$; it takes values in the range $[0,1]$ and represents the fraction of the original variance that is eliminated by antithetic averaging.  The score attains  $\text{AS}(f)=1$ exactly when the antithetic sum $f(x)+f(-x)$ is constant, that is, when $f$ is perfectly \emph{affine-antisymmetric}.  Conversely, $\text{AS}(f)=0$ if and only if $f(-x)=f(x)+c$ for some constant $c$, meaning $f$ is \emph{affine-symmetric}.

For each dataset we have 100 $(5\;\text{coordinates} \times 20\;\text{time steps})$ scalar functions; the summary statistics of their AS scores are listed in Table \ref{tab:as_score}. Both datasets have very high AS scores: the means exceed $0.99$ and the $10\%$ quantiles are above $0.97$, indicating that antithetic pairing eliminates nearly all variance in most cases. Even the lowest scores (about $0.77$) still remove more than three-quarters of the variance.

\begin{table}[ht]
\centering
\caption{Affine antisymmetry score}
\label{tab:as_score}
\begin{tabular}{lcc}
\toprule
\textbf{Dataset} &  CIFAR10 & Church \\
\midrule
\textbf{Mean} &   0.9932   &	0.9909 \\
\hline
Min &   0.7733  &  0.7710 \\
$1\%$ quantile  & 0.9104  & 	0.8673\\
$2\%$ quantile &  0.9444  & 0.8768\\
$5\%$ quantile &  0.9690 & 0.9624\\
 $10\%$ quantile &  0.9865 &   0.9750 \\
 Median   &  0.9992  & 0.9995\\
\bottomrule
\end{tabular}
\end{table}

\begin{figure}[p]
    \centering
    \includegraphics[height=0.4\textheight]{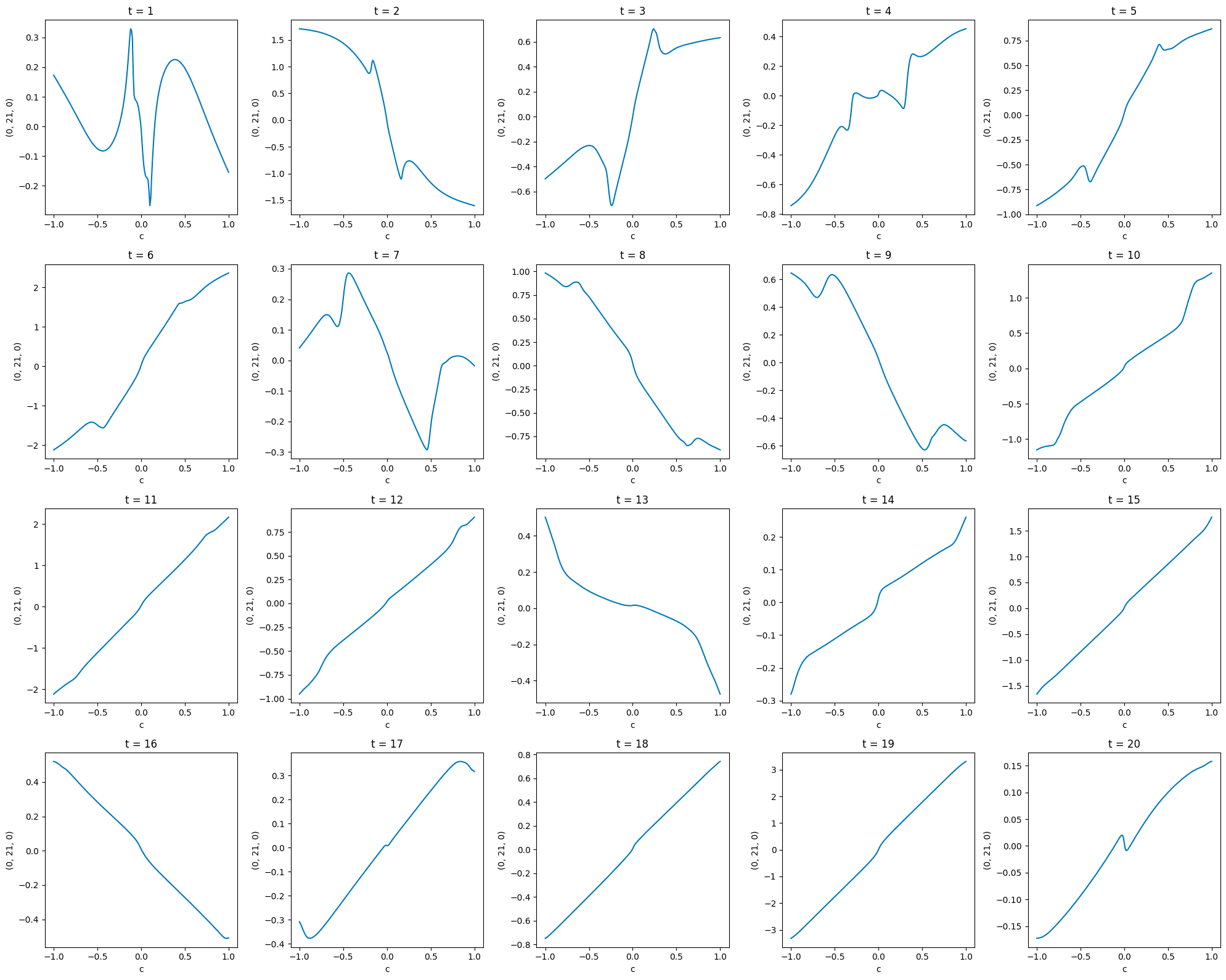}
    \caption{CIFAR10: Coordinate $(0,21,0)$}
    \label{fig:cirar_0_21_0}
\end{figure}

\begin{figure}[p]
    \centering
    \includegraphics[height=0.4\textheight]{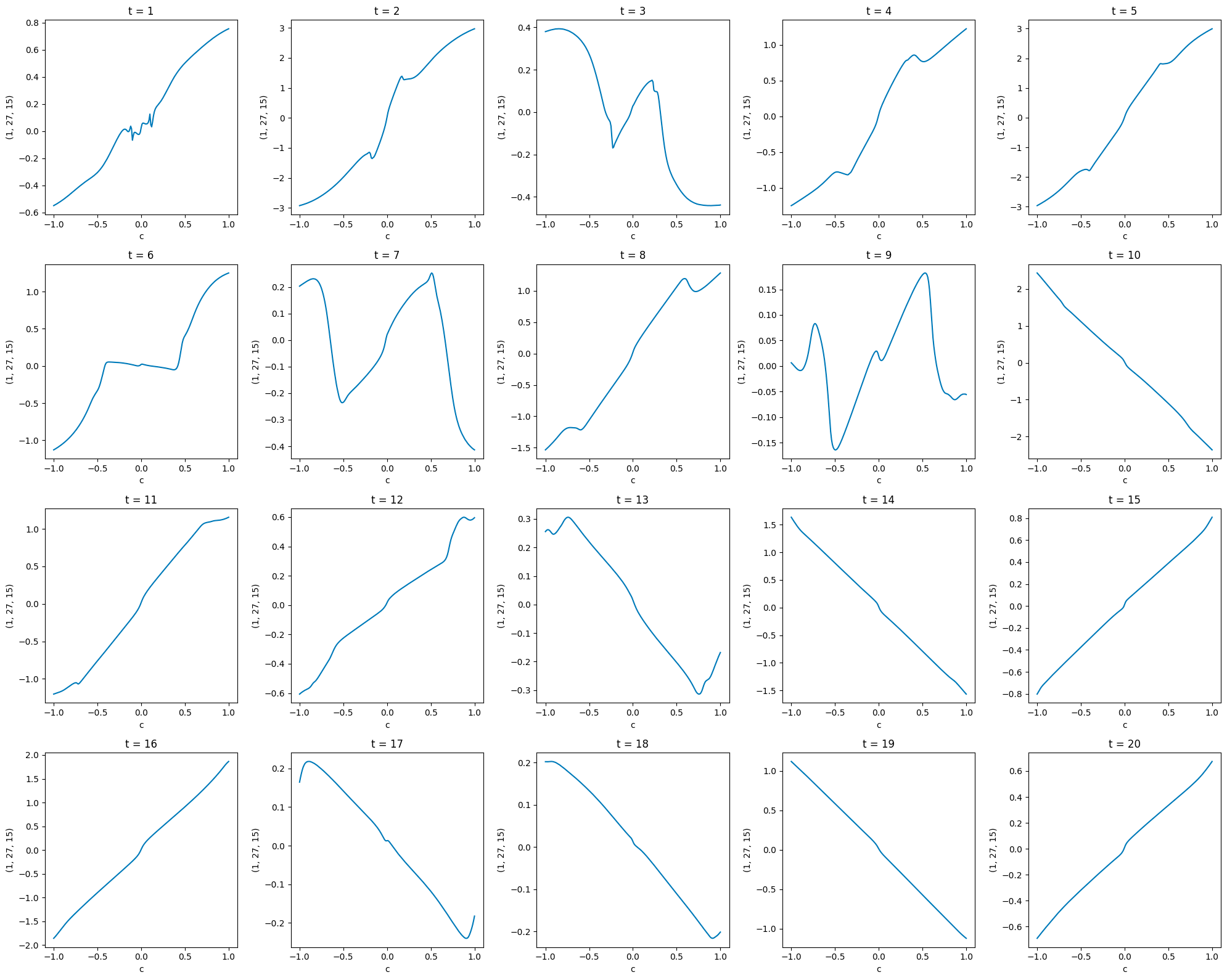}
    \caption{CIFAR10: Coordinate $(1,27,15)$}
    \label{fig:cifar_1_27_15}
\end{figure}

\begin{figure}[p]
    \centering
    \includegraphics[height=0.4\textheight]{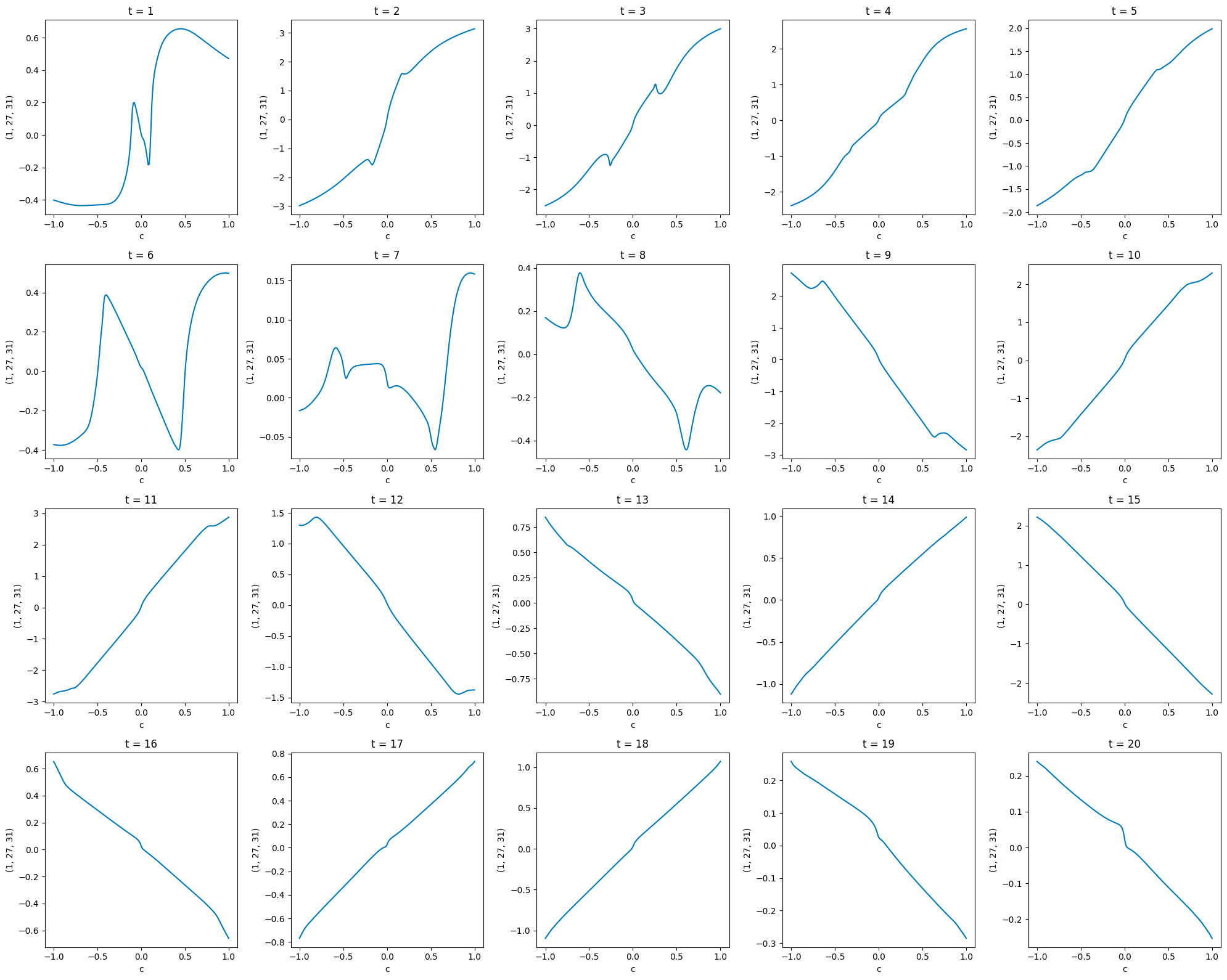}
    \caption{CIFAR10: Coordinate $(1,27,31)$}
    \label{fig:cifar_1_27_31}
\end{figure}

\begin{figure}[p]
    \centering
    \includegraphics[height=0.4\textheight]{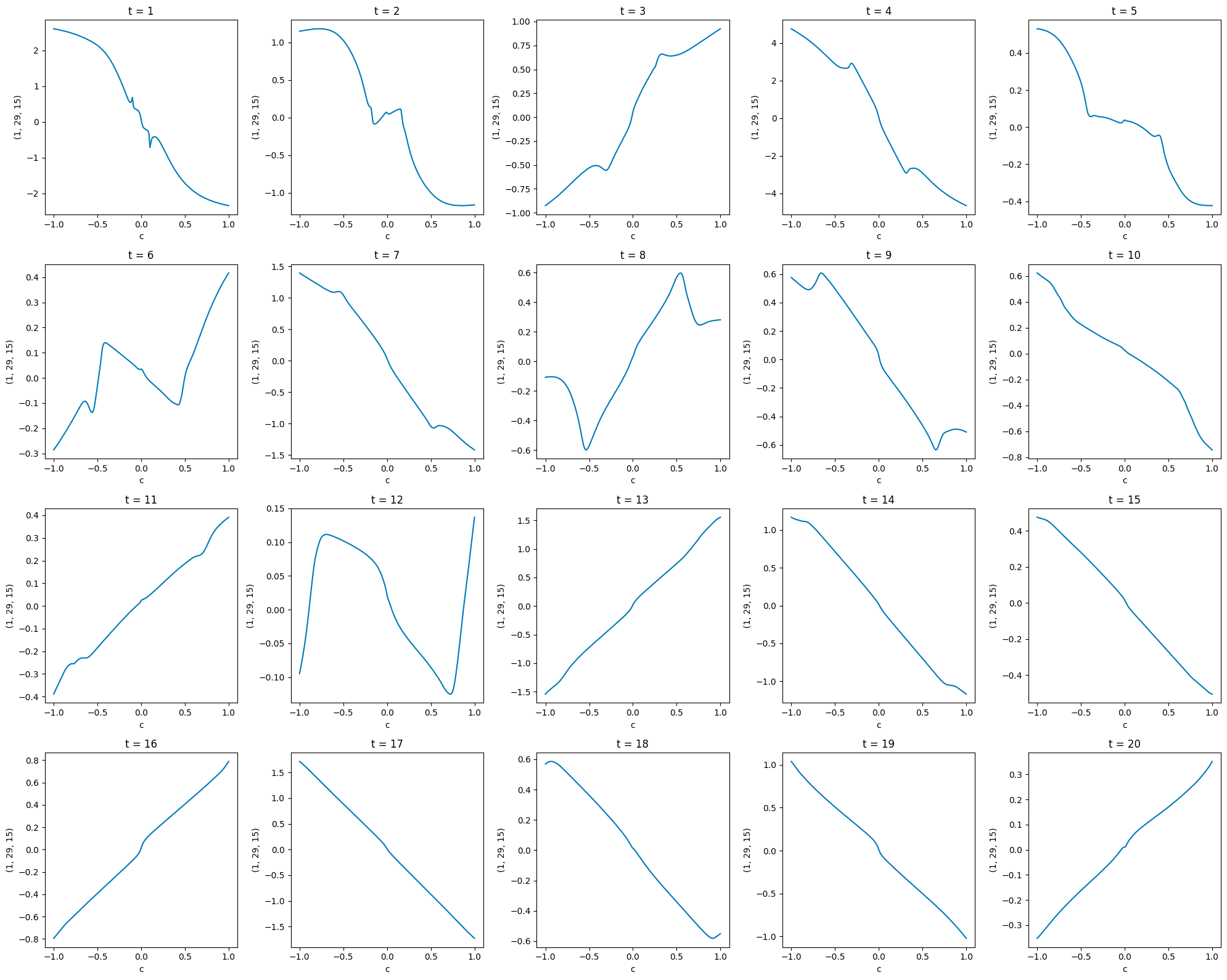}
    \caption{CIFAR10: Coordinate $(1,29,15)$}
    \label{fig:cifar_1_29_15}
\end{figure}

\begin{figure}[p]
    \centering
    \includegraphics[height=0.4\textheight]{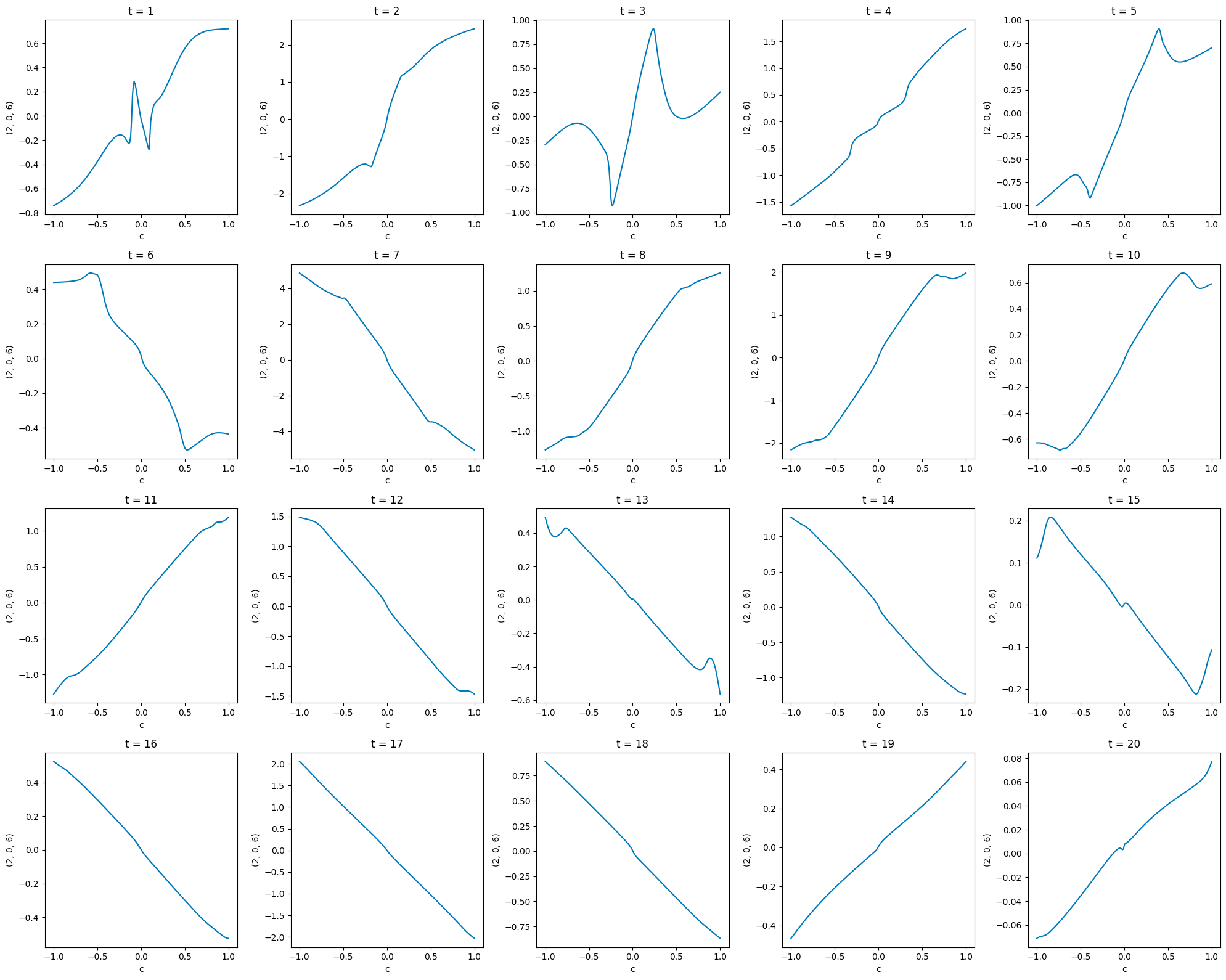}
    \caption{CIFAR10: Coordinate $(2,0,6)$}
    \label{fig:cifar_2_0_6}
\end{figure}

\begin{figure}[p]
    \centering
    \includegraphics[height=0.4\textheight]{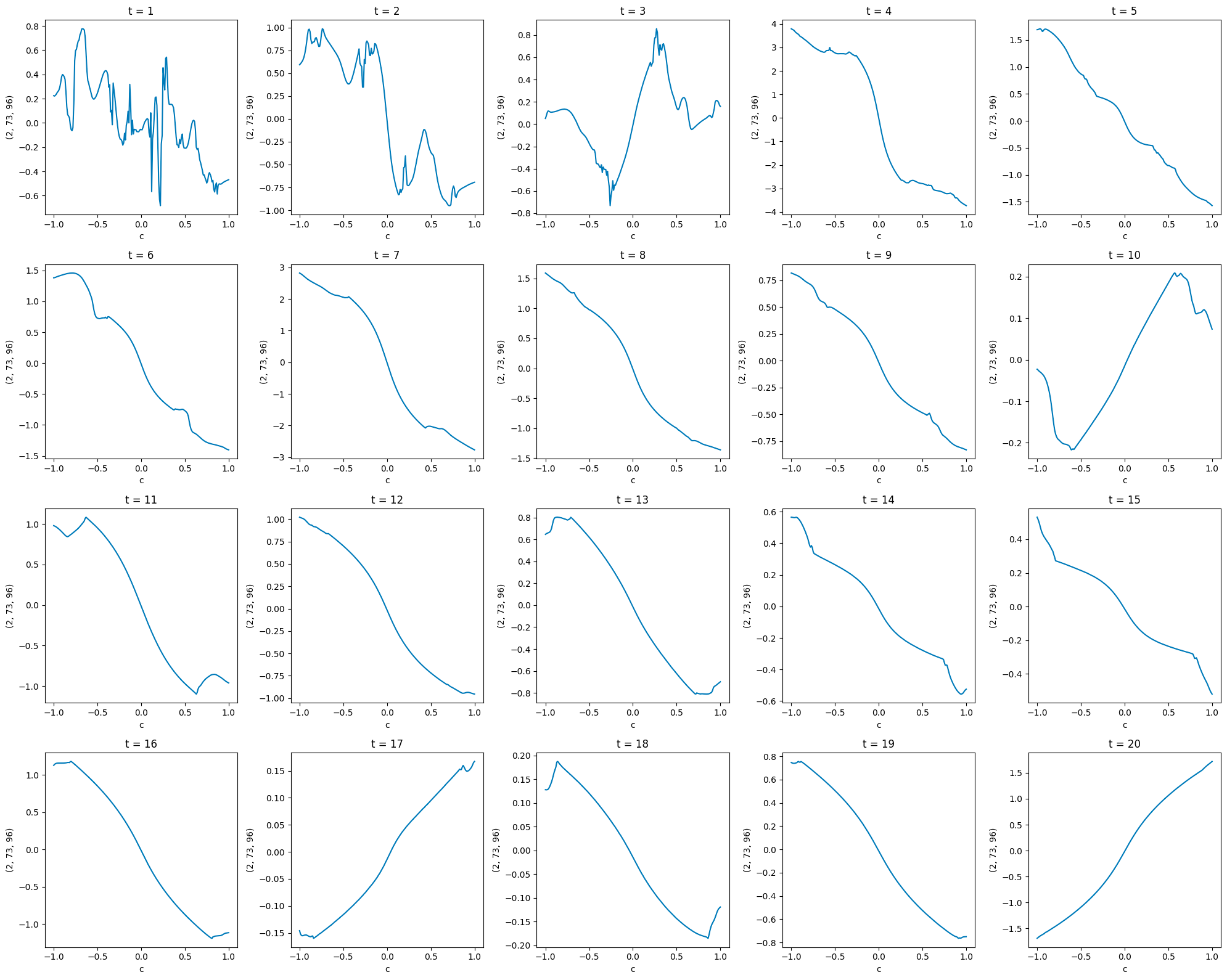}
    \caption{Church: Coordinate $(2,73,76)$}
    \label{fig:church_2_73_76}
\end{figure}

\begin{figure}[p]
    \centering
    \includegraphics[height=0.4\textheight]{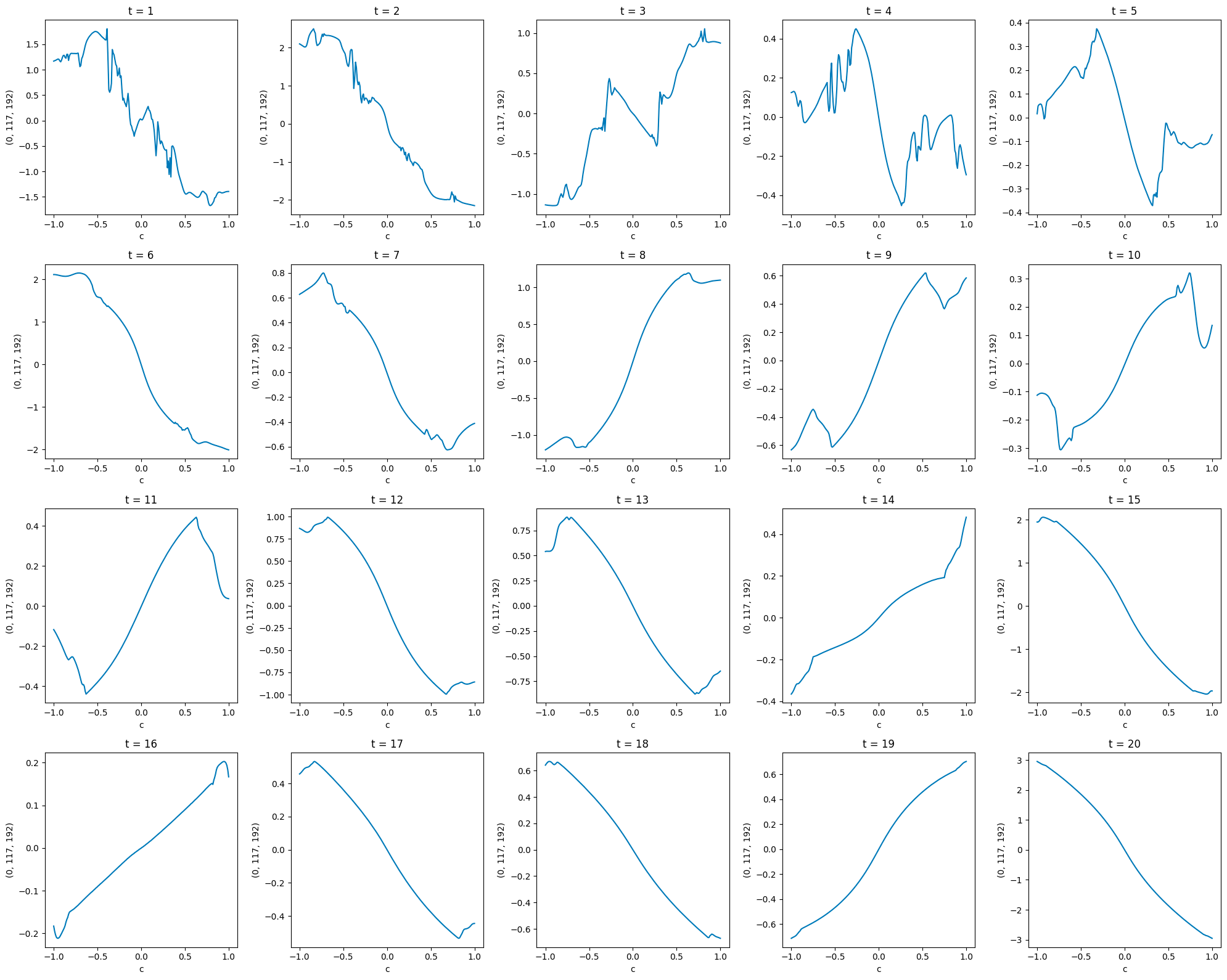}
    \caption{CIFAR10: Coordinate $(0,117,192)$}
    \label{fig:church_0_117_192}
\end{figure}

\begin{figure}[p]
    \centering
    \includegraphics[height=0.4\textheight]{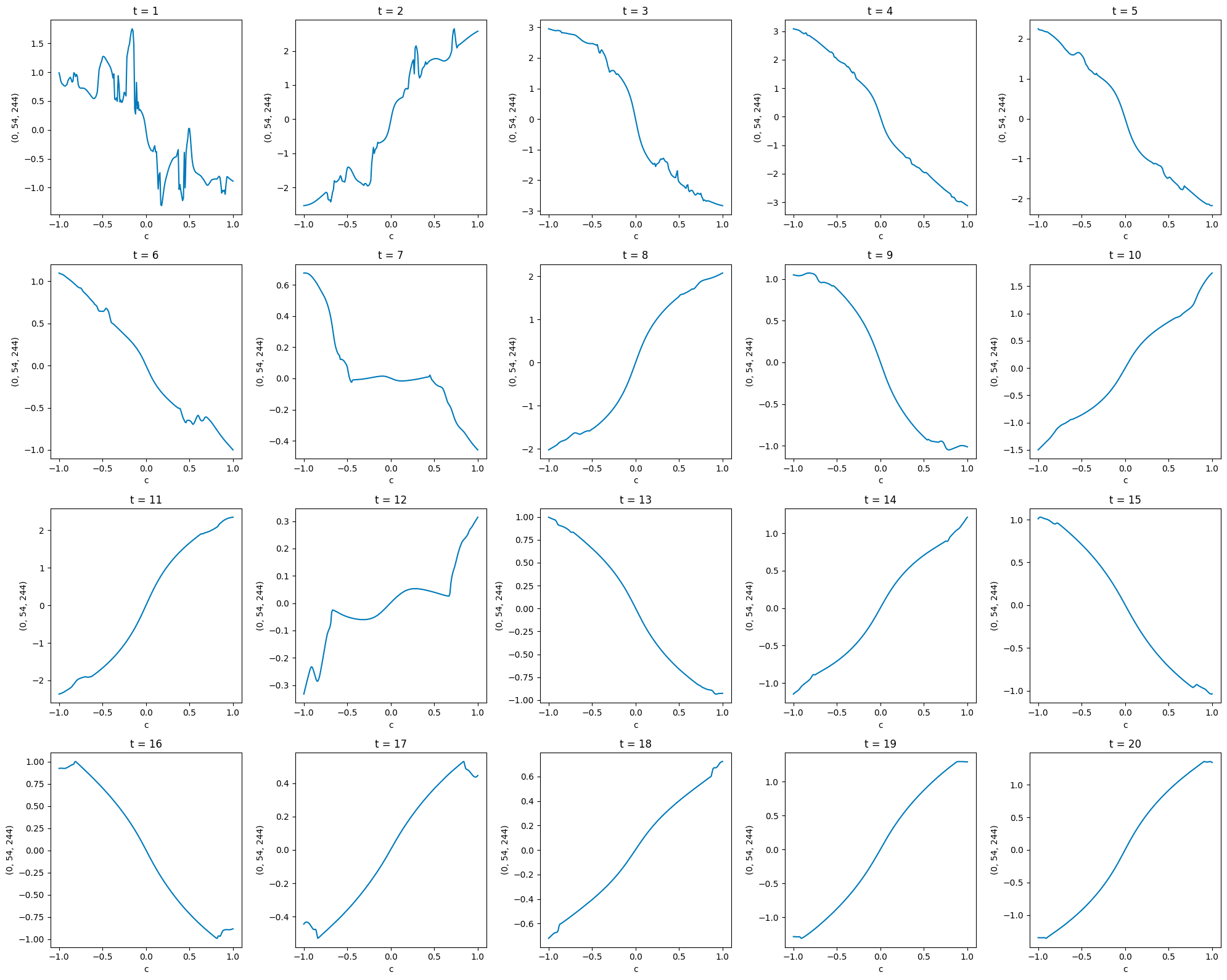}
    \caption{Church: Coordinate $(0,54,244)$}
    \label{fig:church_0_54_244}
\end{figure}

\begin{figure}[p]
    \centering
    \includegraphics[height=0.4\textheight]{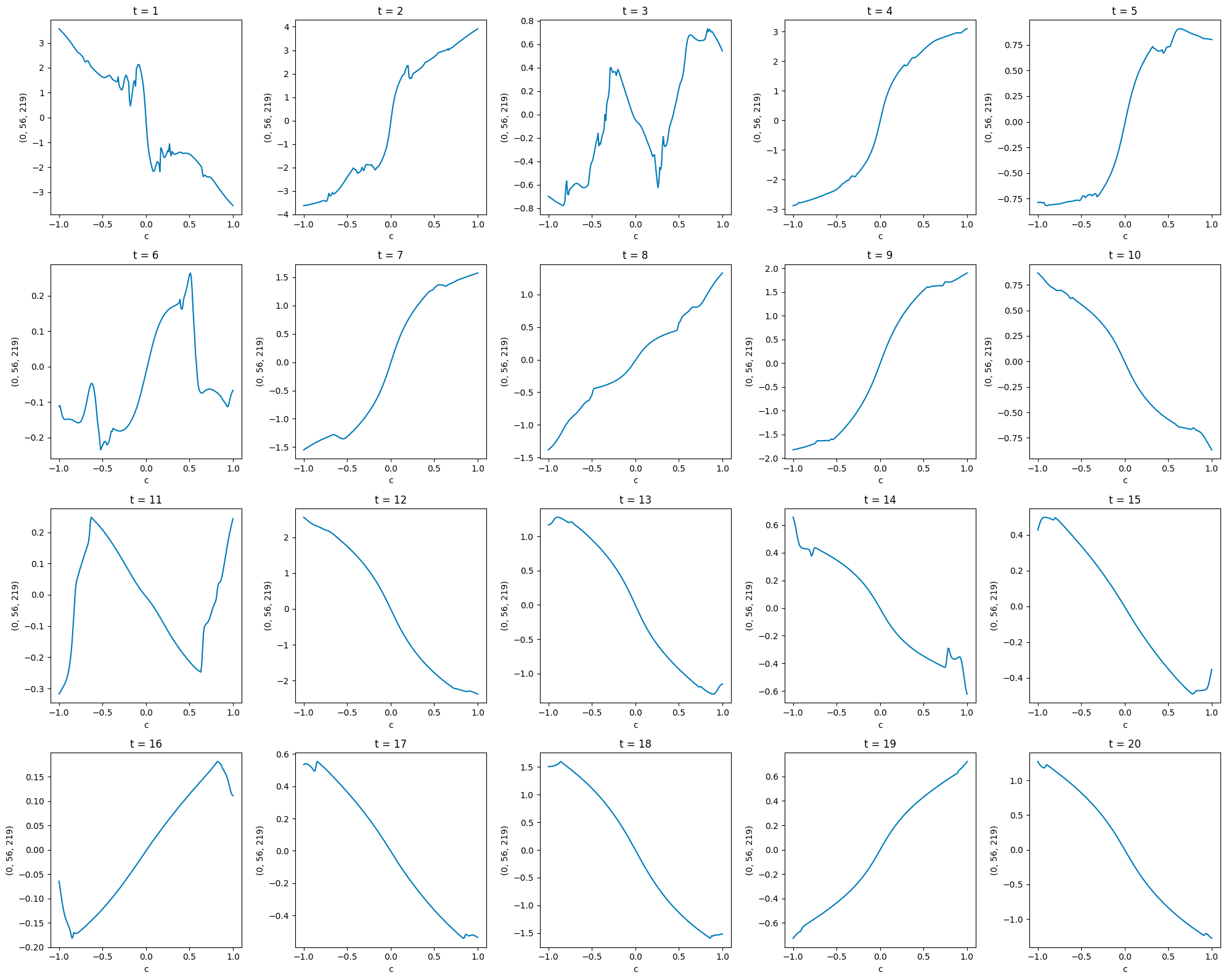}
    \caption{CIFAR10: Coordinate $(0,56,219)$}
    \label{fig:church_0_56_219}
\end{figure}

\begin{figure}[p]
    \centering
    \includegraphics[height=0.4\textheight]{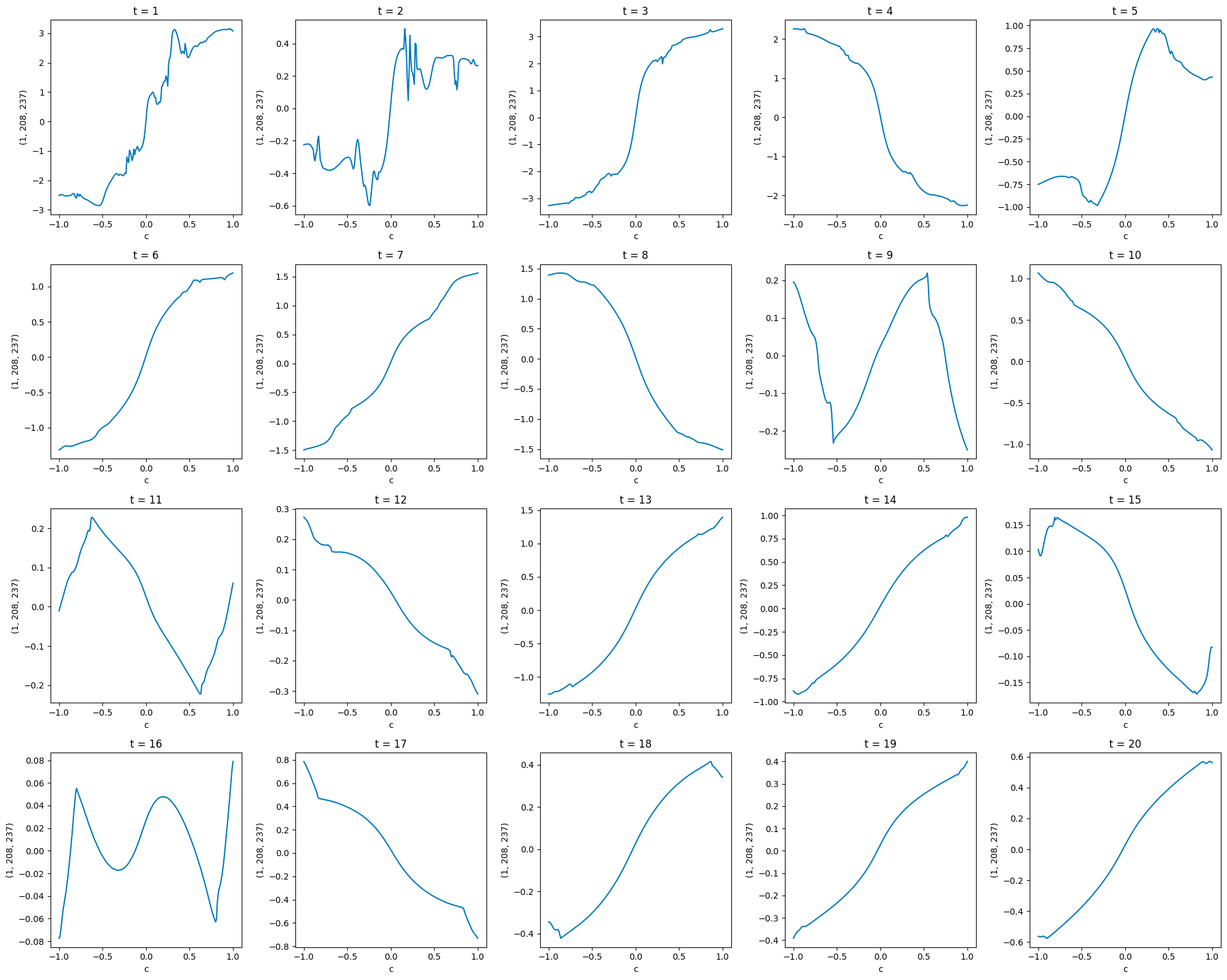}
    \caption{Church: Coordinate $(1,208,237)$}
    \label{fig:church_1_208_237}
\end{figure}

\newpage
\subsection{Additional experiments for uncertainty quantification}\label{subsec: additional experiments uq}

\subsubsection{Uncertainty Quantification}
The image metrics used in uncertainty quantification are used to capture different aspects of pixel intensity, color distribution, and perceived brightness.

\textbf{Mean pixel value} is defined as the average of all pixel intensities across the image (including all channels). Formally, for an image $I \in \mathbb{R}^{C\times H \times W}$, the mean is computed as 

\[\mu = \frac{1}{HWC} \sum_{i=1}^C \sum_{j=1}^H \sum_{k=1}^W I_{i,j,k}.\]

\textbf{Brightness} is calculated using the standard CIE formula: $0.299 \cdot R + 0.587 \cdot G + 0.114 \cdot B$ to produce a grayscale value at \textit{each pixel}, where $R,G,B$ is the red, green, and blue color value of the given pixel. It is widely used in video and image compression, and approximates human visual sensitivity to color. The brightness of an image is then the average of the grayscale value across all pixels. 

\textbf{Contrast} is computed as the difference in average pixel intensity between the top and bottom halves of an image. Let $x \in [0,1]^{C \times H \times W}$ be the normalized image, we define contrast as $100 \cdot (\mu_\text{top} - \mu_\text{bottom})$, where $\mu_\text{top}$ and $\mu_\text{bottom}$ are the average intensities over the top and bottom halves, respectively.

\textbf{Centroid} measures the coordinate of the brightness-weighted center of mass of the image. For scalar comparison, we focus on the vertical component of the centroid to assess spatial uncertainty. After converting to grayscale $M \in \mathbb{R}^{H \times W}$ by averaging across channels, we treat the image as a 2D intensity distribution and compute the vertical centroid as 

\[\frac{1}{\sum_{i=1}^{H}\sum_{j=1}^W M_{i,j}} \sum_{i=1}^{H}\sum_{j=1}^Wi \cdot M_{i,j},\]

where $i$ denotes the row index. 

\subsubsection{QMC Experiments}
\paragraph{RQMC confidence interval construction:} For the RQMC experiments, we consider $R$ independent randomization of a Sobol' point set of size $n$, with a fixed budget of $N=Rn$ function evaluations. Denote the RQMC point set in the $r$-th replicate as $\{\mathbf{u}_{r,k} \}_{1\leq k\leq n}\subset [0,1]^d $, where $\mathbf{u}_{r,k}\sim \mathrm{Unif}([0,1]^d)$. Applying the Gaussian inverse cumulative distribution function $\Phi^{-1}$ to each coordinate of $\mathbf u_{r,k}$ transforms the uniform samples to standard normal samples. Consequently, the estimate in each replicate is given by
$$
\hat\mu_r^{\QMC}\;=\;\frac1{n}\sum_{k=1}^{n}
S\bigl(\DM\bigl(\Phi^{-1}(\mathbf u_{r,k})\bigr)\bigr),
\quad r=1,\dots,R.
$$
The overall point estimate is their average
$$
\hat\mu_{N}^{\QMC}\;=\;\frac1{R}\sum_{r=1}^{R}\hat\mu_r^{\QMC}.
$$
Let $(\hat\sigma_{R}^{\QMC})^{2}$ be the sample variance of $\hat\mu_1^{\QMC}, \ldots, \hat\mu_R^{\QMC}$. The Student $t$ confidence interval is given by
$$
\CI_{N}^{\QMC}(1-\alpha)\;=\;
\hat\mu_{N}^{\QMC}\;
\pm\;
t_{R-1,\,1-\alpha/2}\,
\frac{\hat\sigma_{R}^{\QMC}}{\sqrt{R}},
$$
where $t_{R-1,\,1-\alpha/2}$ is the $(1-\alpha/2)$-quantile of the $t$-distribution with $R-1$ degrees of freedom. If the estimates $\hat\mu_r^{\QMC}$ are normally distributed, this confidence interval has exact coverage probability $1-\alpha$. In general, the validity of Student $t$ confidence interval is justified by CLT. An extensive numerical study by \citet{lecuyer2023confidence} demonstrates that Student $t$ intervals achieve the desired coverage empirically.

\paragraph{Exploring different configurations of $R$ and $n$}

This experiment aims to understand how different configurations of $R$, the number of replicates, and $n$, the size of the QMC point set, affect the CI length of the RQMC method. The total budget of function evaluations is fixed at $Rn=3200$, to be consistent with the AMC and MC experiments. We consider the four image metrics used in Section \ref{sec:uncertainty quantification} and one additional image metric, MUSIQ. 

For RQMC, each configuration was repeated five times, and the results were averaged to ensure stability. AMC and MC each consist of a single run over 3200 images. All experiments are conducted using the CIFAR10 dataset. 

The results are shown in Table \ref{tab:qmc_ci_tradeoff} and underlined values indicate the best CI length among the three QMC configurations, while bold values indicate the best CI length across all methods, including MC and AMC. 
For the brightness and pixel mean metrics, the configuration with point set size $n = 64$ and number of replicates $R = 50$ reduces CI length the most. In contrast and centroid, the configuration with larger $n$ has a better CI length. For MUSIQ, a more complex metric, changes in CI lengths across configurations are subtle, and the configuration with the largest $R$ has the shortest CI length. While no single configuration consistently advantages, all RQMC and antithetic sampling methods outperform plain MC.

\begin{table}[htbp!]
\centering
\caption{Average 95\% CI length and relative efficiencies (vs MC baseline). The first three rows are for RQMC methods, with the configuration of $R\times n$ indicated in the first column. }
\label{tab:qmc_ci_tradeoff}
\begin{tabular}{lcccccccccc}
\toprule
\textbf{$R\times n$} & \multicolumn{2}{c}{\textbf{Brightness}} & \multicolumn{2}{c}{\textbf{Mean}} & \multicolumn{2}{c}{\textbf{Contrast}} & \multicolumn{2}{c}{\textbf{Centroid}} & \multicolumn{2}{c}{\textbf{MUSIQ}} \\
 & CI & Eff. & CI & Eff. & CI & Eff. & CI & Eff. & CI & Eff. \\
\midrule
25 $\times$ 128  & 0.37 & 29.81 & 0.40 & 25.71 & \underline{\textbf{0.22}} & \underline{\textbf{24.57}} & \underline{\textbf{0.04}} & \underline{\textbf{8.17}} & 0.13 & 1.11 \\
50 $\times$ 64   & \underline{0.35} & \underline{32.05} & \underline{0.39} & \underline{26.94} & 0.22 & 23.43 & 0.04 & 7.35& 0.13 & 1.13 \\
200 $\times$ 16  & 0.47 & 18.38 & 0.49 & 17.30 & 0.29 & 14.20 & 0.05 & 5.84 & \underline{\textbf{0.12}} & \underline{\textbf{1.21}} \\
AMC              & \textbf{0.35} & \textbf{32.66} & \textbf{0.39} & \textbf{27.12} & 0.23 & 22.05 & 0.04 & 6.96 & 0.13 & 1.06 \\
MC               & 2.00 & -  & 2.04 & -  & 1.08 & -  & 0.11 & - & 0.13 & - \\
\bottomrule
\end{tabular}
\end{table}

\subsubsection{DPS Experiment Implementation}

We evaluate the confidence interval length reduction benefits of antithetic initial noise across three common image inverse problems: super-resolution, Gaussian deblurring, and inpainting. For each task, the forward measurement operator is applied to the true image, and noisy observations are generated using Gaussian noise with $\sigma=0.05$. We use the official implementation of DPS \citep{chung2023diffusion} with the following parameters:

\begin{itemize}[leftmargin = *]
    \item \textbf{Super-resolution} uses the \textit{super\_resolution} operator with an input shape of (1, 3, 256, 256) and an upsampling scale factor of 2. This models the standard bicubic downsampling scenario followed by Gaussian noise corruption.

    \item \textbf{Gaussian deblurring} employs the \textit{gaussian\_blur} operator, again with a kernel size of 61 but with intensity set to 1, which is intended to test the variance reduction in a simpler inverse scenario.

    \item \textbf{Inpainting} is set up using the \textit{inpainting} operator with a random binary mask applied to the input image. The missing pixel ratio is drawn from a uniform range between 30\% and 70\%, and the image size is fixed at $256 \times 256$. A higher guidance scale (0.5) is used to compensate for the sparsity of observed pixels.
\end{itemize}

As shown in Table \ref{tab:DPS_combined_results}, AMC consistently achieves shorter CI lengths than MC across all tasks and metrics without sacrificing reconstruction quality, implied by the L1 and PSNR metrics to measure the difference between reconstructed images and ground truth images. 

\subsubsection{DDS Dataset}\label{subsect:fastmri}

Data used in the DDS experiment on uncertainty quantification (Section~\ref{sec:uncertainty quantification}) were obtained from the NYU fastMRI Initiative database \citep{knoll2020fastmri, zbontar2018fastmri}. A listing of NYU fastMRI investigators, subject to updates, can be found at: \url{fastmri.med.nyu.edu}. The primary goal of fastMRI is to test whether machine learning can aid in the reconstruction of medical images.

\section{More visualizations}\label{sec:visualization}
\begin{figure}[hb]
    \centering
    \includegraphics[height = 0.85\textheight]{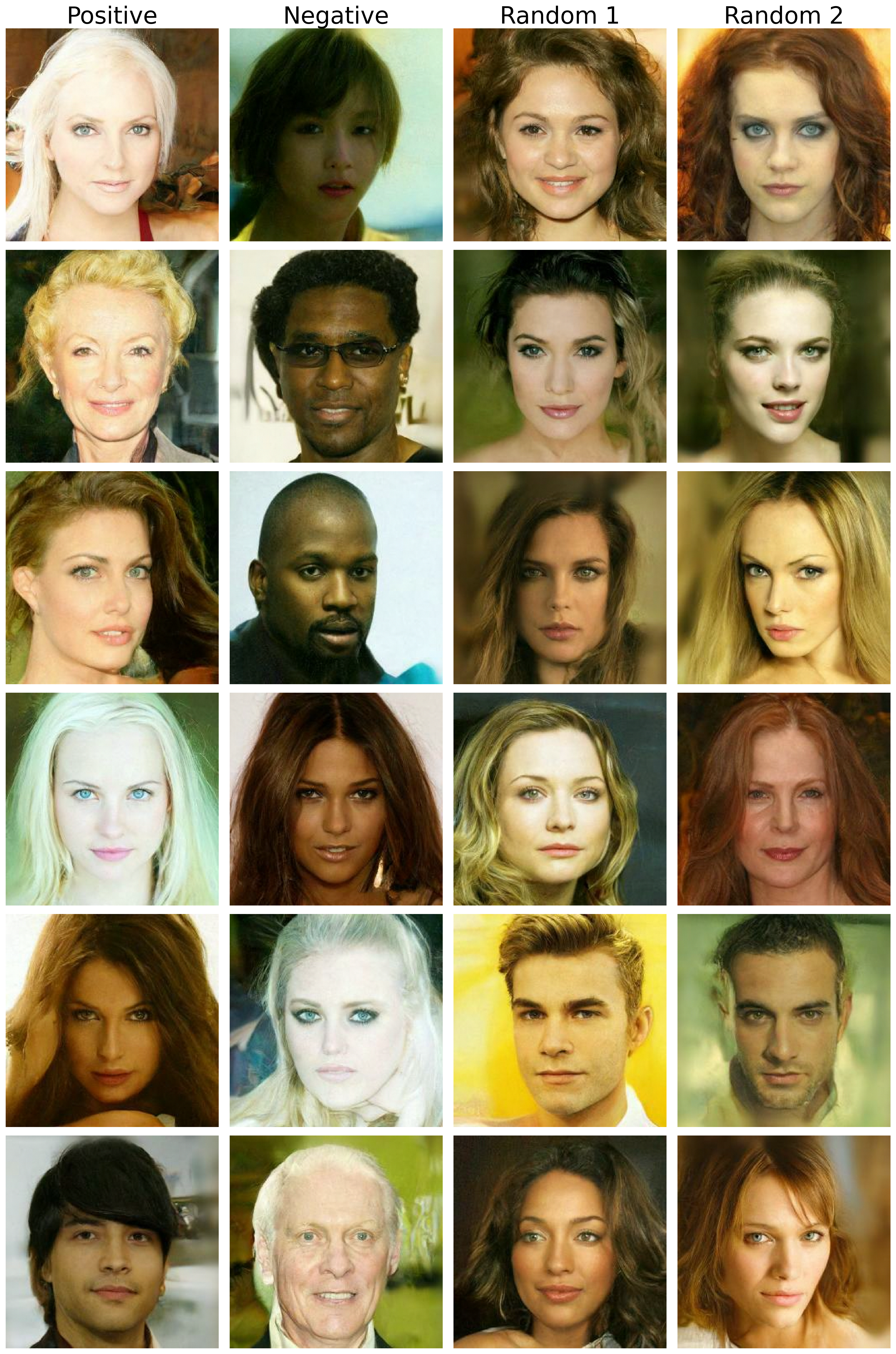}
    \caption{CelebA-HQ Image Generated}
    \label{fig:celeba_images}
\end{figure}

\begin{figure}[hb]
    \centering
    \includegraphics[height = 0.85\textheight]{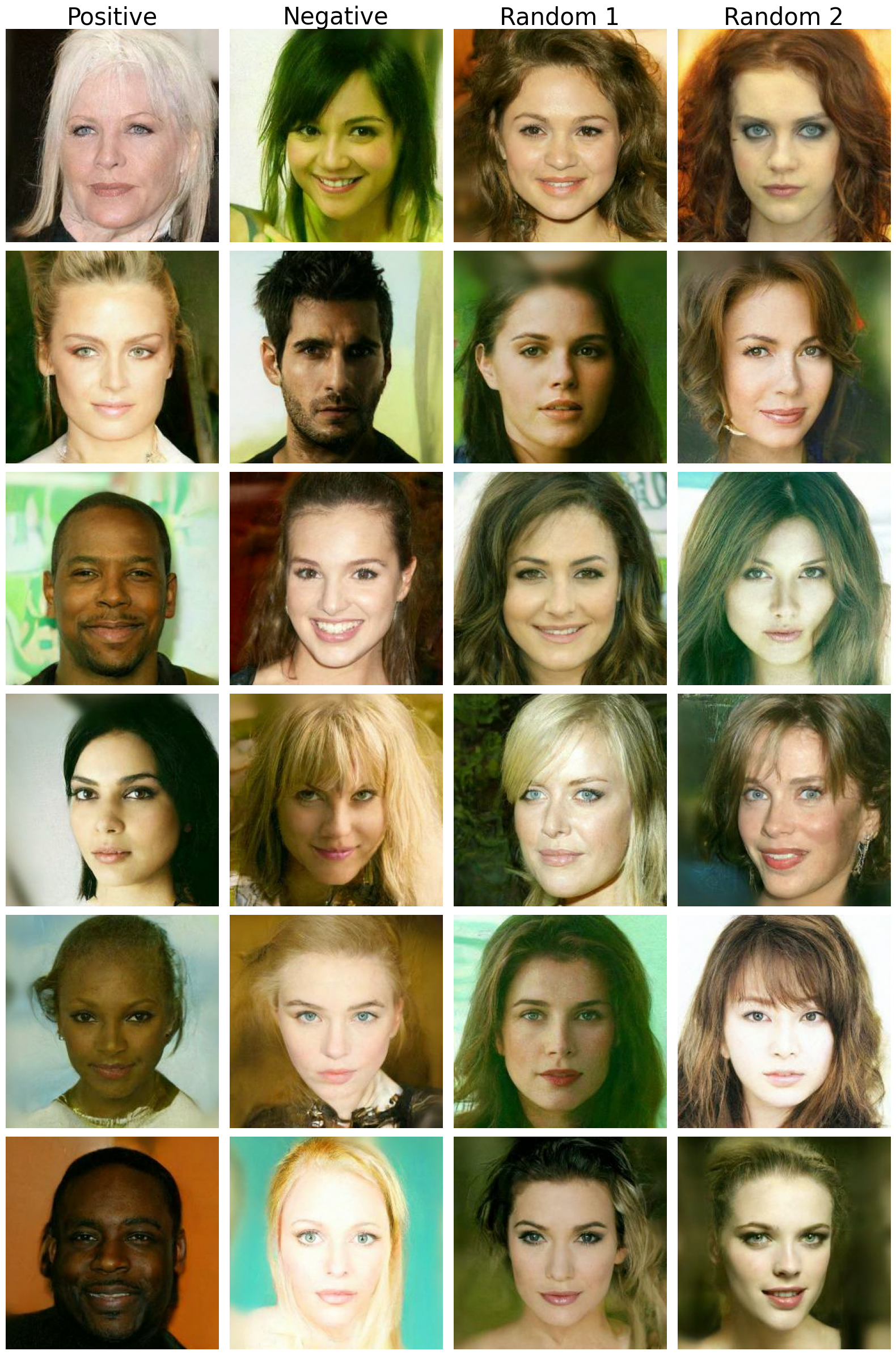}
    \caption{CelebA-HQ Image Generated}
    \label{fig:celeba_images2}
\end{figure}

\begin{figure}[hb]
    \centering
    \includegraphics[height = 0.85\textheight]{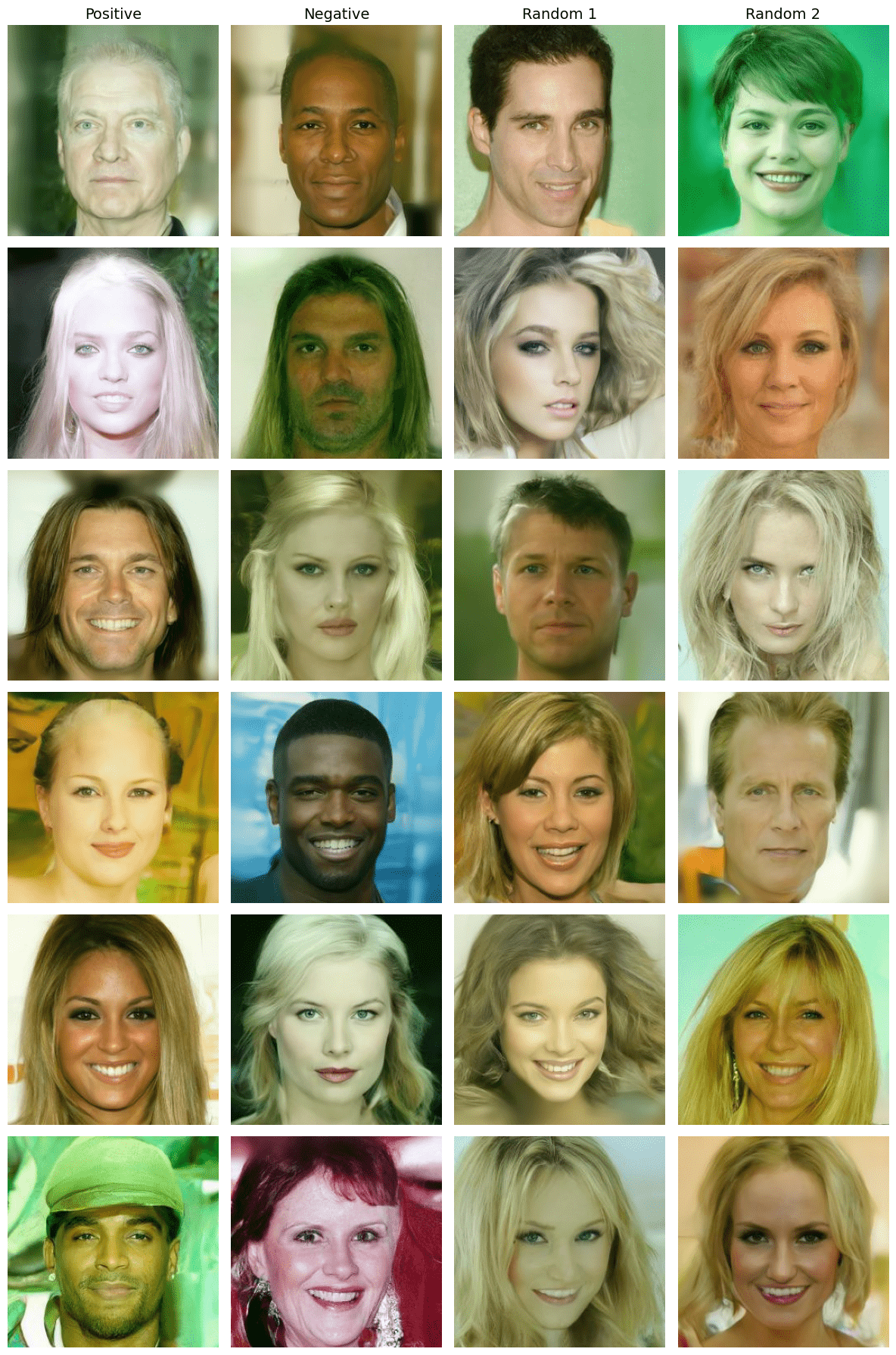}
    \caption{CelebA-HQ Image Generated}
    \label{fig:celeba_images3}
\end{figure}

\begin{figure}[hb]
    \centering
    \includegraphics[height = 0.85\textheight]{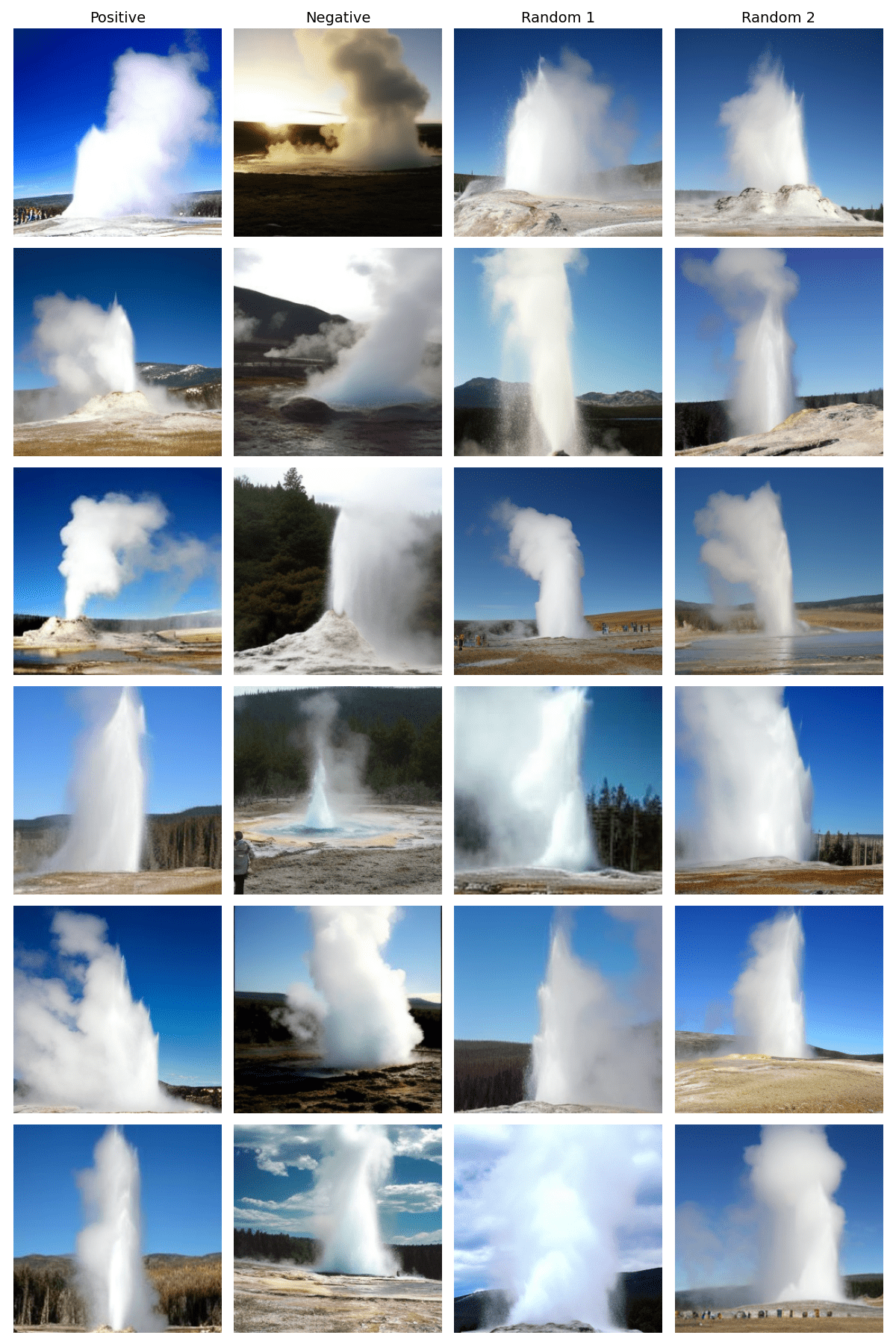}
    \caption{DiT Class 974: geyser}
    \label{fig:celeba_images4}
\end{figure}

\begin{figure}[hb]
    \centering
    \includegraphics[height = 0.85\textheight]{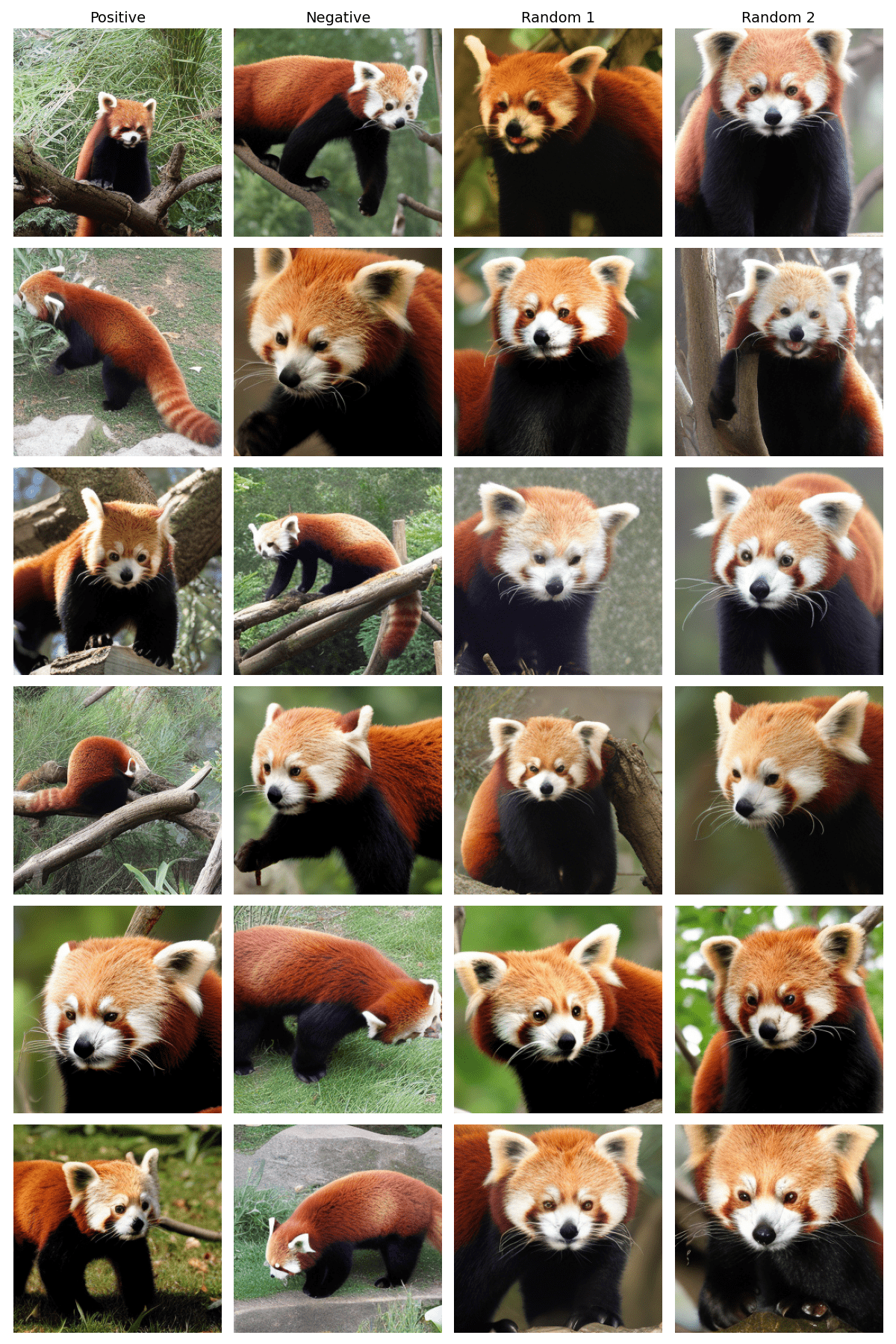}
    \caption{DiT Class 387: lesser panda, red panda}
    \label{fig:celeba_images5}
\end{figure}

\begin{figure}[hb]
    \centering 
    \includegraphics[height = 0.85\textheight]{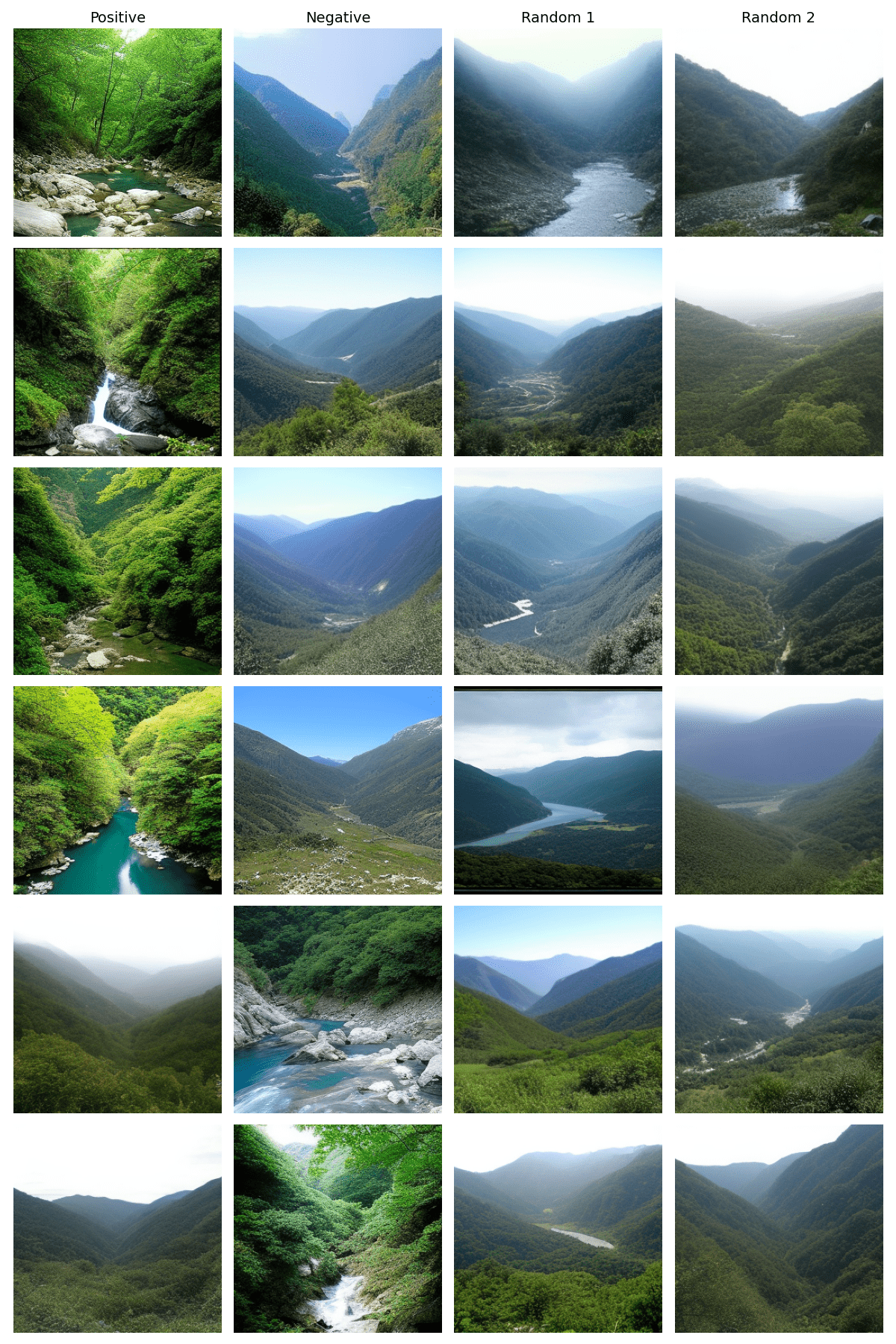}
    \caption{DiT Clas 979: valley, vale}
    \label{fig:celeba_images6}
\end{figure}

\begin{figure}[hb]
    \centering
    \includegraphics[height = 0.85\textheight]{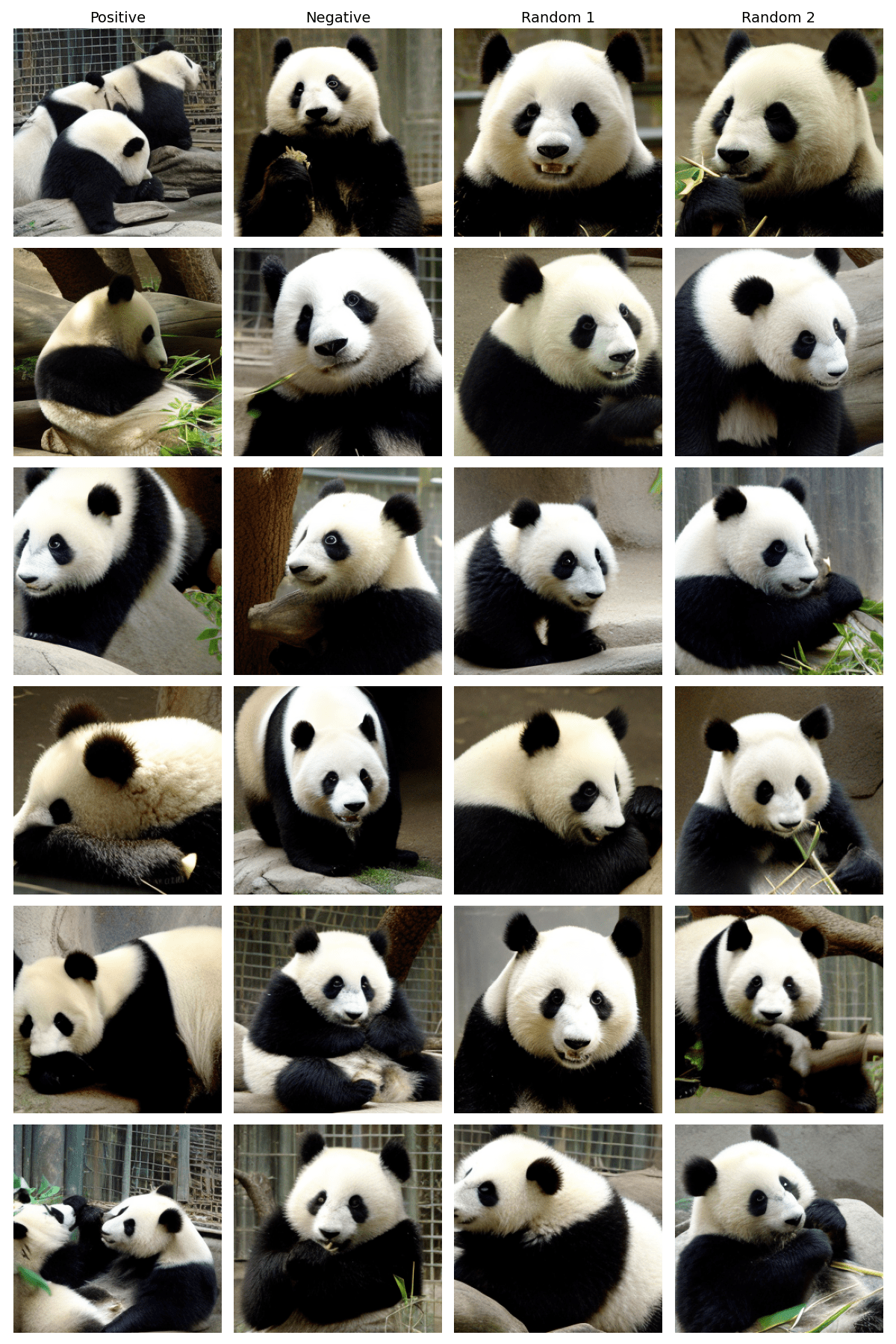}
    \caption{DiT Class 388: giant panda, panda}
    \label{fig:celeba_images7}
\end{figure}

\begin{figure}
    \centering
    \includegraphics[width=1\linewidth]{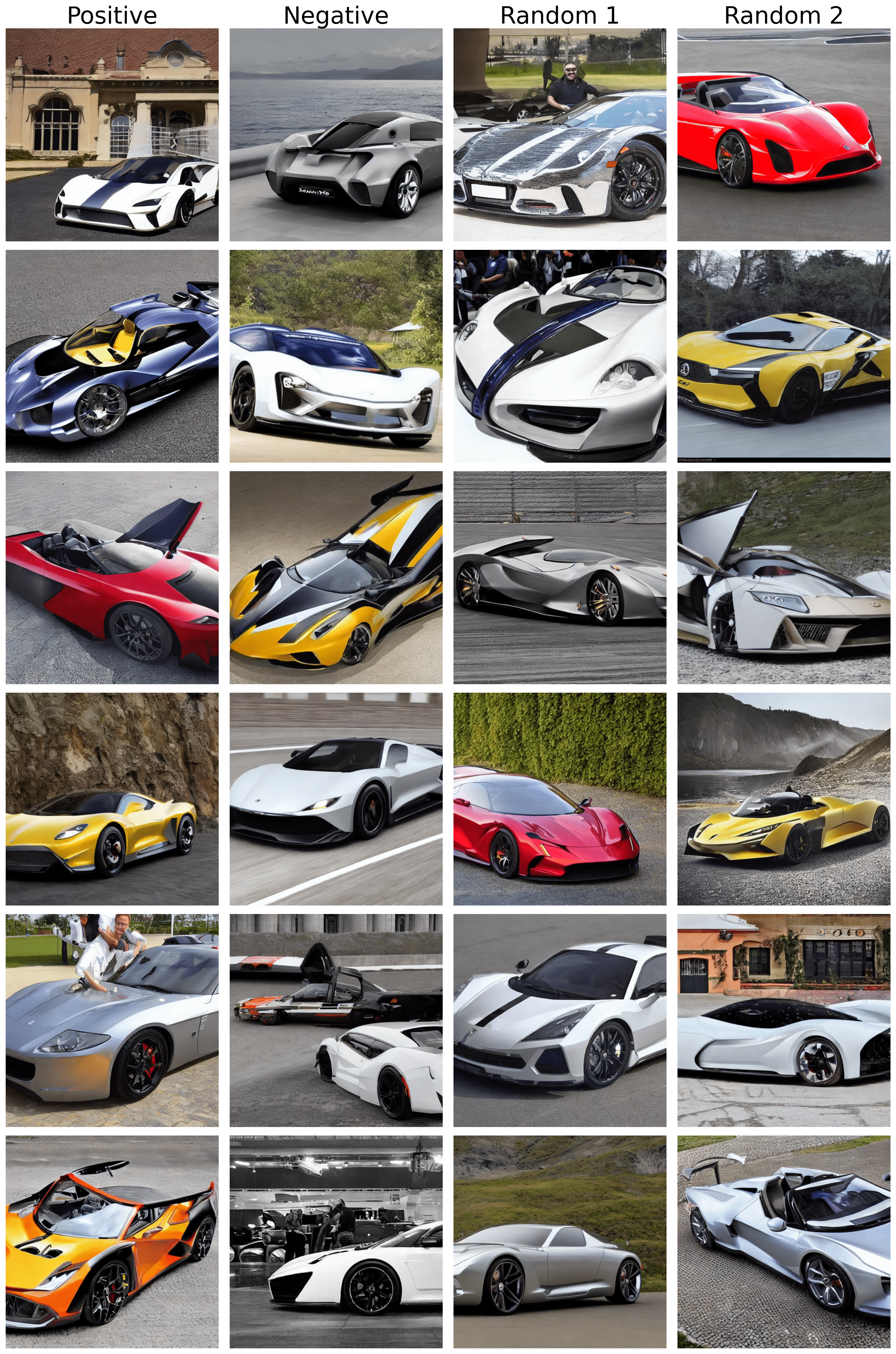}
    \caption{Prompt: “most expensive sports car”}
    \label{fig:diffusion1}
\end{figure}

\begin{figure}
    \centering
    \includegraphics[width=1\linewidth]{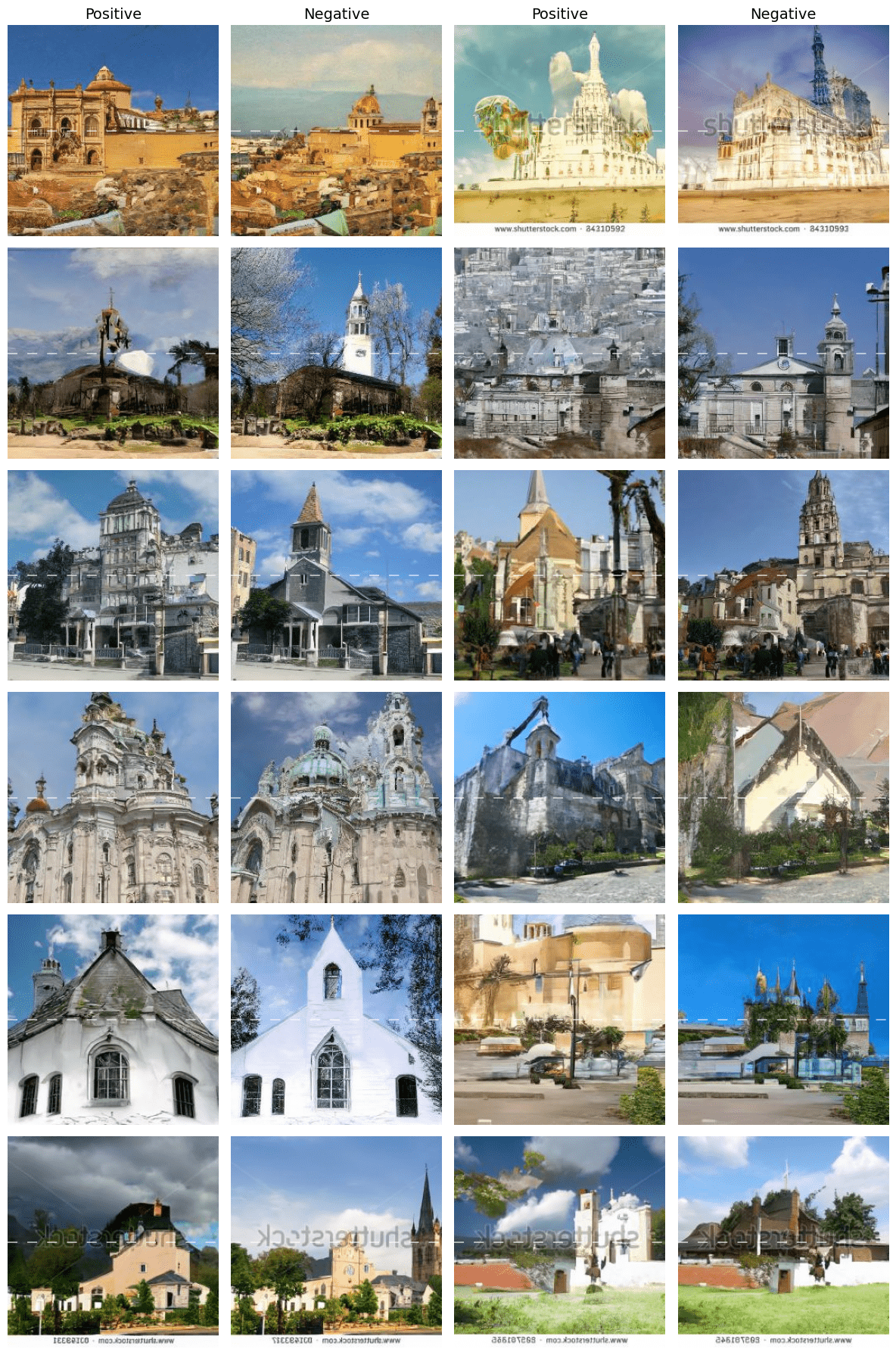}
    \caption{Partial Negation of LSUN-Church}
    \label{fig:church_top}
\end{figure}

\begin{figure}
    \centering
    \includegraphics[width=1\linewidth]{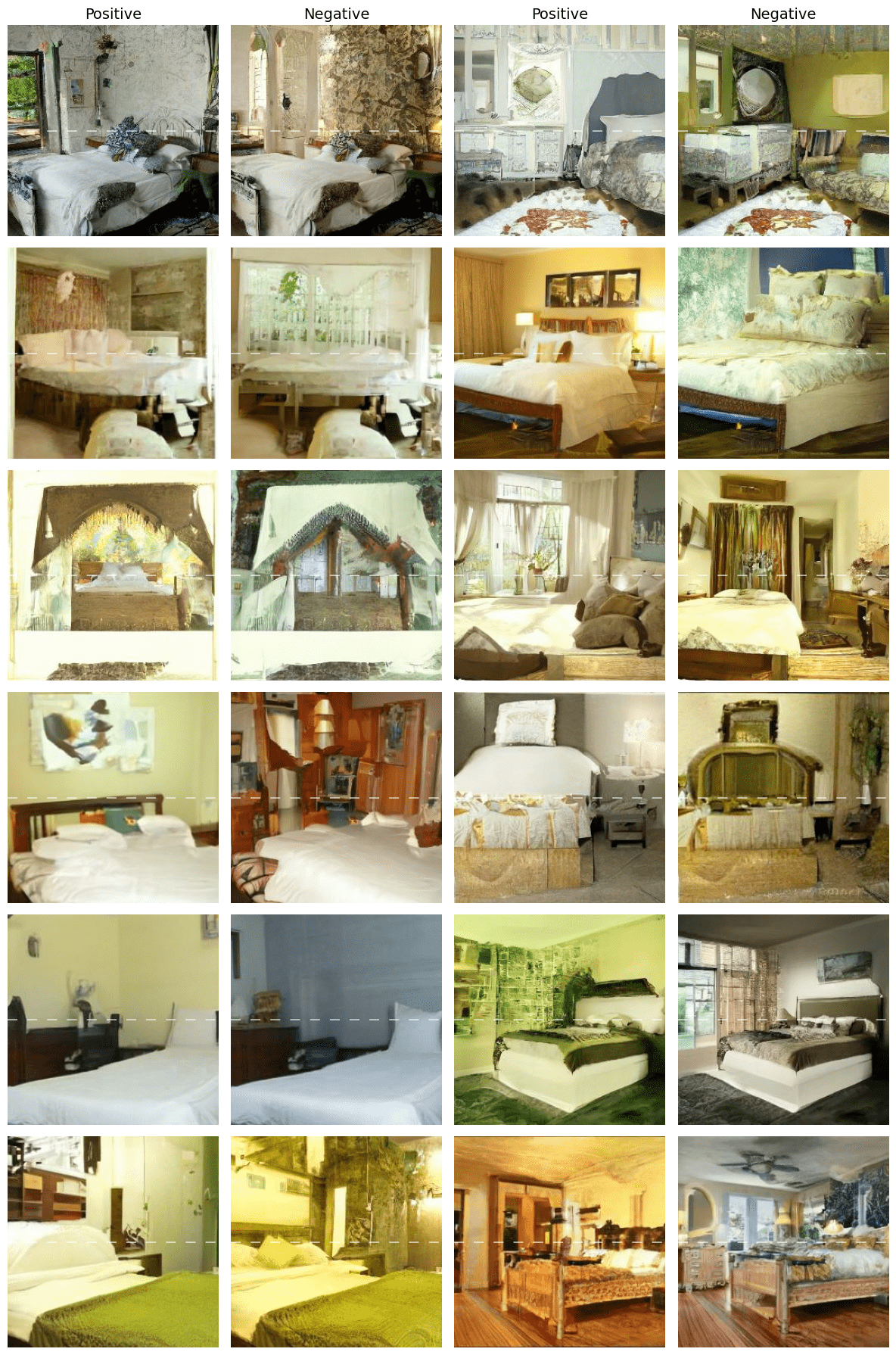}
    \caption{Partial Negation of LSUN-Bedroom}
    \label{fig:bedroom_top}
\end{figure}

\end{document}